\documentclass{article}

\usepackage{conference}

\title{From Sweep to Seam: Interleaved Cross-Block Post-Training Quantization}

\author{
  Achille Jacquemond \\
  Fujitsu Limited
  \And
  Yuma Ichikawa \\
  Fujitsu Limited, RIKEN Center for AIP
  \And
  Akira Sakai \\
  Fujitsu Limited, Tokai University
}

\begin{document}
\maketitle

\begin{abstract}
  Compressing large language models to two bits or fewer is increasingly feasible through block-wise
  post-training quantization; cross-block variants reconstruct neighboring Transformer blocks within a moving
  window. In the fixed two-block setting studied here, the matched sequential baseline moves this window
  through the network once, so errors introduced early in the sweep are not revisited. We propose Interleaved
  Cross-Block Quantization (ICBQ), a scheduling modification that revisits the boundary pair between
  consecutive chunks. Each seam pair is refined twice: first at the end of one chunk and again at the start of
  the next. The method retains the local two-block objective and reuses the calibration inputs of existing
  block-wise PTQ pipelines. Under stated local contraction and smoothness assumptions, we derive a depth-wise
  upper-bound comparison in which seam revisits multiply the propagated term while the residual remains
  bounded independently of depth. In the reported experiments, ICBQ reduces ternary-quantization perplexity
  relative to the matched Sequential CBQ baseline, yields finite perplexity in configurations where the
  baseline has severe degradation, and can also be used with 3-bit and 2-bit GPTQ.
\end{abstract}

\section{Introduction}
\label{sec:intro}

Deploying large language models (LLMs) at sub-two-bit precision is an important challenge in efficient ML
systems. In principle, one could directly optimize the quantized network end to end through quantization-aware
training (QAT), as in LLM-QAT and LittleBit~\citep{liu2024llm,lee2025littlebit}. Alternatively, one could
formulate a post-training quantization (PTQ) objective jointly over all blocks. At the scale of modern LLMs,
however, such global optimization is rarely feasible. Once models exceed even a few billion parameters,
storing the full computation graph, intermediate activations, and optimizer state on a single accelerator
typically exceeds the available memory budgets~\citep{ichikawa2026onecomp}. Consequently, practical PTQ
methods instead rely on layer- or block-local surrogate
objectives~\citep{frantar2022gptq,li2021brecq,shao2023omniquant,bulat2024qbb}. These methods aim to reduce
global quantization error by decomposing the problem into a sequence of local subproblems, whose peak memory
cost is primarily governed by the reconstruction unit. For large-scale LLMs, block-wise PTQ is a practical
route to reducing end-to-end output error within realistic hardware constraints.

Within practical PTQ pipelines, layer-wise weight-only methods such as GPTQ~\citep{frantar2022gptq} and
AWQ~\citep{lin2023awq} recover much of the floating-point accuracy at three- to four-bit weight precision,
while SmoothQuant~\citep{xiao2023smoothquant} targets W8A8 weight--activation PTQ by smoothing activation
outliers. Subsequent methods strengthen local objectives in complementary ways:
QEP~\citep{arai2025quantization} explicitly propagates and compensates quantization errors;
LoaQ~\citep{lin2025loaq} introduces output-matching factors within layer-wise PTQ; and
LPCD~\citep{ichikawa2025lpcd} optimizes relaxed objectives over arbitrary submodules before projecting them
with layer-wise quantizers. More recently, extreme low-bit methods such as OneBit~\citep{xu2024onebit},
BitNet~\citep{wang2023bitnet}, DBF~\citep{boza2026dbf}, and MDBF~\citep{ichikawa2025mdbf} have pushed
quantization into increasingly aggressive precision regimes.

CBQ~\citep{ding2025cbq} introduced cross-block reconstruction to capture dependencies across multiple
Transformer blocks. For a controlled schedule comparison, we adapt this principle to a fixed two-block,
weight-only reconstruction objective; we refer to the resulting left-to-right baseline as Sequential CBQ for
brevity.

Our Sequential CBQ baseline applies each adjacent two-block window once in a left-to-right sweep after all
blocks have been quantized. Residual errors introduced at a block pair are then propagated to subsequent
blocks without another refinement of that pair. This can increase final activation mismatch with depth. The
effect can be pronounced under our ternary DBF setting. For example, Sequential CBQ produces very large
perplexities for Llama-$3$-$8$B and Qwen$3$-$8$B in our setting, motivating a schedule that revisits selected
pairs after additional blocks have been processed.

A schedule-level way to mitigate this accumulation is to revisit selected block pairs. We focus on chunk
seams: the boundary pairs between consecutive chunks of newly quantized blocks. By interleaving a short CBQ
sweep with progressive quantization, the last pair of one chunk also becomes the first pair of the next, so
every seam is refined twice, as illustrated in Figure~\ref{fig:concept}. The revisits reuse the same
calibration inputs but add refinement computation. We formalize their effect as an additional contraction
factor on the propagated term of a depth-wise error bound. Thus, for a network of depth $L$ processed in
chunks of size $K$, the bound reflects the number of chunk seams. We evaluate this schedule with ternary DBF
and GPTQ inner quantizers.

\begin{figure}[tb]
  \centering
  \includegraphics[width=0.9\linewidth]{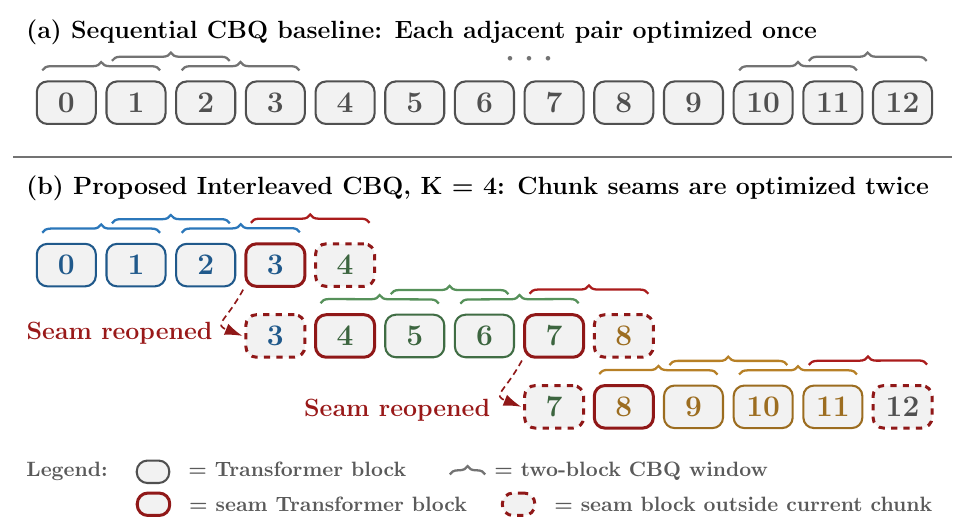}
  \caption{\textbf{ICBQ revisits chunk seams using the same calibration samples} ($L=13$, $K=4$).
    \textbf{Panel a. Sequential CBQ} performs a single post-hoc sweep, optimizing each adjacent block pair once. \textbf{Panel b. Interleaved CBQ} inserts a short CBQ pass after each chunk of newly quantized blocks. Because the final window of chunk~$c$ is also the first window of chunk~$c{+}1$, the seam pairs $(3,4)$, $(7,8)$, and $(11,12)$ are optimized twice, whereas non-seam pairs are optimized once. Between chunks, student and teacher activations are recomputed from the stored layer-$0$ inputs.}
  \label{fig:concept}
  \vspace{-5pt}
\end{figure}

\section{Preliminaries}
\label{sec:prelim}

\paragraph{Notation and inner quantizers.} We consider a Transformer consisting of $L$ blocks. The $b$-th fp16
teacher block is denoted by $f_b^\star$, where $b \in \{0,\dots,L{-}1\}$. Given a calibration set
$\mathcal{X} = \{x^{(n)}\}_{n=1}^{N}$, the teacher activations are defined by
\begin{equation}
  X_0^{\star,(n)} = x^{(n)}, ~~ X_{b+1}^{\star,(n)} = f_b^\star\bigl(X_b^{\star,(n)}\bigr).
\end{equation}
The corresponding quantized student block is written as $f_b^{\theta_b}$, and the student activations are
denoted by $\hat{X}_b^{(n)}$. For each sample, an activation tensor has shape $(T,d_{\mathrm{hidden}})$, where
$T$ is the sequence length and $d_{\mathrm{hidden}}$ is the hidden width. Thus,
$X_b^{\star,(n)},\hat{X}_b^{(n)} \in \mathbb{R}^{T\times d_{\mathrm{hidden}}}$. Unless otherwise stated, all
activation norms are Frobenius norms and are denoted by $\|\cdot\|_F$. We use $N$ exclusively for the number
of calibration samples, and reserve $S=S(L,K)$ for the number of chunk seams introduced in
Section~\ref{sec:method}.

An \emph{inner quantizer} is the low-bit routine used to initialize a single Transformer block before any
cross-block refinement. We denote it by $f_b^{\theta_b} = Q(f_b^\star; \mathcal{Z}_b)$, where
$\mathcal{Z}_b = \{Z_b^{(n)}\}_{n=1}^{N}$ is the set of calibration activations supplied to block $b$ during
quantization. The role of $Q$ is local: it receives one fp16 block and produces one low-bit student block
whose weights belong to a prescribed codebook. The block-wise and cross-block procedures below do not depend
on the specific choice of $Q$.

Representative choices for $Q$ include Hessian-aware quantizers, such as GPTQ~\citep{frantar2022gptq}, as well
as factorized quantizers designed for extreme low-bit settings, such as DBF~\citep{boza2026dbf} and
MDBF~\citep{ichikawa2025mdbf}. The original DBF represents a weight matrix using two binary sign matrices and
diagonal scaling matrices. In our experiments, we use a ternary extension in which the factor entries take
values in $\{-1,0,+1\}$; unless otherwise stated, DBF below refers to this ternary implementation.
Specifically, for a weight matrix $W\in\mathbb{R}^{d_{\mathrm{out}}\times d_{\mathrm{in}}}$, our
parameterization is
\begin{equation}
  \widehat{W} = D_a A D_m B D_b, ~~ D_a = \operatorname{diag}(a), ~ D_m = \operatorname{diag}(m), ~ D_b = \operatorname{diag}(b).
  \label{eq:ternary-dbf}
\end{equation}
where $A\in \{-1,0,+1\}^{d_{\mathrm{out}}\times k}$ and $B\in \{-1,0,+1\}^{k\times d_{\mathrm{in}}}$ are
ternary factors, $a\in\mathbb{R}_{\ge 0}^{d_{\mathrm{out}}}$, $m\in\mathbb{R}_{\ge 0}^{k}$, and
$b\in\mathbb{R}_{\ge 0}^{d_{\mathrm{in}}}$ are nonnegative scaling vectors, and $k$ denotes the intermediate
dimension that determines the compression ratio. We use this ternary DBF implementation as the primary inner
quantizer, while GPTQ serves as a complementary baseline for evaluating whether the proposed schedule
generalizes beyond a single quantization method.

\paragraph{Block-wise post-training quantization and prefit.} Post-training quantization compresses a
pretrained model using only a small calibration set, without retraining on the original training corpus. A
natural but impractical objective would be to quantize all Transformer blocks jointly so that the final output
of the student model matches that of the fp16 teacher model. For modern LLMs, this global objective is usually
too expensive because it requires storing activations and optimization state across many blocks. Block-wise
PTQ avoids this cost by decomposing the problem into local subproblems, quantizing one Transformer block at a
time \citep{li2021brecq,shao2023omniquant,frantar2022gptq}.

In a standard block-wise PTQ pipeline, the blocks are processed in depth order. At block $b$, the inner
quantizer $Q$ is applied to the fp16 block $f_b^\star$ using the calibration activations available at that
depth:
\begin{equation}
  f_b^{\theta_b} \leftarrow Q\bigl(f_b^\star;\mathcal{Z}_b\bigr).
\end{equation}
After block $b$ has been quantized, the student activation stream is advanced by
\begin{equation}
  \hat{X}_{b+1}^{(n)} = f_b^{\theta_b}\bigl(\hat{X}_b^{(n)}\bigr), ~~ n=1,\dots,N.
\end{equation}
Thus, later blocks are calibrated using activations that already include the effects of earlier quantized
blocks. This is the key distinction between block-wise PTQ and purely layer-isolated quantization: the method
still optimizes local blocks, but it propagates the student activations through the quantized prefix.

Before applying the inner quantizer, one may optionally perform a \emph{prefit} step. Prefit is a
high-precision reconstruction step that adjusts a floating-point copy of the block so that it better matches
the fp16 teacher block on the calibration inputs. Given the calibration inputs $\mathcal{Z}_b$ for block $b$,
the prefit solves
\begin{equation}
  \phi_b^{\mathrm{pre}} = \arg\min_{\phi_b} \frac{1}{N}\sum_{n=1}^{N} \left\|f_b^{\phi_b}\bigl(Z_b^{(n)}\bigr)-f_b^\star\bigl(Z_b^{(n)}\bigr)\right\|_F^2.
  \label{eq:prefit}
\end{equation}
The resulting high-precision block $f_b^{\phi_b^{\mathrm{pre}}}$ is then passed to the inner quantizer $Q$.
Importantly, prefit itself does not specify a low-bit codebook and is therefore independent of the choice of
$Q$; it only prepares a better floating-point starting point for the subsequent quantization step. In this
work, we enable prefit only when the inner quantizer is ternary DBF.

\paragraph{Cross-block quantization.} The limitation of the block-wise procedure above is that each block is
judged mainly by how well it reconstructs its own fp16 output. However, in a Transformer, the output error of
block $i$ becomes the input error of block $i{+}1$. A locally small error at the output of one block may
therefore be amplified or rotated by the next block, especially at very low precision. CBQ~\citep{ding2025cbq}
addresses this issue through cross-block reconstruction over multiple blocks. To isolate the scheduling effect
studied here, we use a simplified weight-only adaptation with a fixed window of two adjacent Transformer
blocks, which keeps the memory footprint local.

For the adjacent window $(i,i{+}1)$, our two-block CBQ-style objective minimizes the reconstruction error at
the output of the second block:
\begin{equation}
  \min_{\theta_i,\theta_{i+1}} \mathcal{L}_{\mathrm{cbq}}(\theta_i,\theta_{i+1}) = \frac{1}{N} \sum_{n=1}^{N} \left\|f_{i+1}^{\theta_{i+1}}\bigl(f_i^{\theta_i}(\hat{X}_i^{(n)})\bigr)-Y_{i+2}^{(n),\star}\right\|_F^2,
  \label{eq:cbq}
\end{equation}
where the fp16 two-block target is
\begin{equation}
  Y_{i+2}^{(n),\star} = f_{i+1}^{\star}\bigl(f_i^{\star}(X_i^{\star,(n)})\bigr).
\end{equation}
Here, $\hat{X}_i^{(n)}$ is the current student-side activation entering the window, while $X_i^{\star,(n)}$ is
the corresponding teacher activation. The objective therefore asks the current student window to reproduce the
fp16 teacher output after two consecutive blocks. In practice, we optimize Eq.~\eqref{eq:cbq} with a rollback
safeguard: if the updated window does not reduce the reconstruction MSE, the parameters are restored to their
previous values. The Sequential CBQ baseline used in this paper first initializes all blocks with the inner
quantizer $Q$, and then performs a single left-to-right sweep over the adjacent windows
$(0,1),(1,2),\dots,(L{-}2,L{-}1)$, solving Eq.~\eqref{eq:cbq} once for each window and advancing the student
activation stream after every step. Thus, throughout this paper, Sequential CBQ denotes this fixed two-block
adaptation: $Q$ provides the initial low-bit blocks, and the CBQ-style objective refines neighboring blocks
independently of the choice of inner quantizer.

\section{Interleaved Cross-Block Quantization}
\label{sec:method}

ICBQ keeps the progressive block-wise PTQ pipeline, but partitions the depth axis into consecutive chunks of
$K$ blocks and inserts one short interleaved CBQ sweep after each chunk. At each chunk boundary, the
refinement includes adjacent pairs around the boundary. For non-final chunks, the right block of the boundary
pair can still be unquantized and is handled as a provisional copy until its own quantization step. When the
next chunk starts, that same boundary pair is refined again, creating seam overlap that drives seam-amortized
contraction and suppresses depth-wise error buildup. After each chunk-level refinement, activation re-roll
recomputes student and teacher streams from the saved layer-$0$ inputs so the next quantization/refinement
step sees a coherent state.

\subsection{Outer driver}
\label{subsec:driver}

The outer driver is the orchestrator of ICBQ: it decides when each standard block-wise PTQ operation runs and
in what order. It sweeps blocks left-to-right ($b=0,\dots,L{-}1$), keeps a chunk-start pointer
$b_{\mathrm{cs}}$, and for each block applies the usual local quantization step (with optional prefit), then
advances the student and teacher streams by one layer through $f_b^Q$ and $f_b^\star$. This follows the
standard block-wise PTQ progression from Section~\ref{sec:prelim}, but adds explicit chunk-level control over
when interleaved cross-block refinement is triggered.

The chunk closes whenever $b + 1 - b_{\mathrm{cs}} \ge K$ or $b = L{-}1$, i.e., once the current chunk reaches
size $K$ (or the model ends). At closure, the orchestrator launches a short interleaved CBQ call
(Algorithm~\ref{alg:icbq} in Appendix~\ref{app:pseudo}) on the window range
\begin{equation}
  [w_{\min}, w_{\max}) = \bigl[ \max(0, b_{\mathrm{cs}}-1),\ \min(b{+}1, L{-}1) \bigr),
  \label{eq:window-range}
\end{equation}
solving~\eqref{eq:cbq} on each pair $(i,i{+}1)$ with $i=w_{\min},\dots,w_{\max}{-}1$ (the upper limit avoids
the non-existent pair $(L{-}1,L)$). After this local refinement, the driver re-rolls activations over the
processed prefix and advances to the next chunk; the exact inner-loop and re-roll mechanics are detailed in
Section~\ref{subsec:reroll}. It then sets $b_{\mathrm{cs}} \leftarrow b+1$ and continues to the next chunk. In
short, the outer driver preserves block-by-block PTQ while interleaving per-chunk CBQ: quantize progressively,
refine at chunk boundaries, re-roll, repeat.

\subsection{The seam: the structural heart of ICBQ}
\label{subsec:seam}

In block-wise PTQ, quality is decided at adjacent-block interfaces. This subsection isolates the key ICBQ
overlap effect at those interfaces: chunk seams are revisited once at the end of one chunk and again at the
start of the next. Writing $b_{\mathrm{cs}}=(c{-}1)K$ for chunks indexed by $c=1,2,\dots$ (with the final
chunk truncated at the last valid pair index), the window range~\eqref{eq:window-range} becomes
\begin{equation}
  \Bigl[ \max(0, (c{-}1)K{-}1),\ \min(cK, L{-}1) \Bigr),
  \label{eq:chunk-pair-range}
\end{equation}
that is, the pairs optimized in chunk~$c$ are
\begin{equation}
  (i,i{+}1), ~~ i=\max(0, (c{-}1)K{-}1),\ \dots,\ \min(cK, L{-}1)-1.
  \label{eq:pairs}
\end{equation}
For every non-final chunk boundary $cK<L$, the \emph{last} pair processed in chunk~$c$ is $(cK{-}1,cK)$. When
chunk~$c{+}1$ starts, its first valid pair index is again $cK{-}1$, so the \emph{first} pair processed is the
same boundary pair $(cK{-}1,cK)$. Therefore each seam pair is refined twice (end of chunk~$c$, then start of
chunk~$c{+}1$), while non-seam interior pairs are refined once (Figure~\ref{fig:concept}). We call
$(cK{-}1,cK)$ the \emph{$c$-th seam}, and the number of seams in an $L$-block model with chunk size $K$ is
\begin{equation}
  S(L,K) = \bigl\lceil L/K\bigr\rceil - 1.
  \label{eq:seams}
\end{equation}
This ``visited twice'' seam structure is stated formally as Lemma~\ref{lem:seam-structural} in
Appendix~\ref{app:proof}. Intuitively, seams are exactly where block-wise errors are handed from one chunk to
the next, so revisiting them is the core mechanism that suppresses depth-wise buildup. Theorem~\ref{thm:seam}
quantifies this effect through an extra seam-level contraction on the propagated term.

\subsection{Activation re-roll for coherent interleaved refinement}
\label{subsec:reroll}

The pair updates themselves follow the same regular CBQ subproblem from Section~\ref{sec:prelim}; in ICBQ,
they are interleaved after each chunk during ongoing block-wise quantization. Given the window range
$[w_{\min},w_{\max})$ from the outer driver, the inner call performs a local block-wise pair sweep. It first
recomputes the student and teacher streams to the window start (through blocks $0,\dots,w_{\min}{-}1$), then
iterates over pair indices $i=w_{\min},\dots,w_{\max}{-}1$: it recomputes the teacher target $Y_{i+2}^{\star}$
from $f_i^\star,f_{i+1}^\star$, updates $(\theta_i,\theta_{i+1})$ by minimizing the two-block
loss~\eqref{eq:cbq}, and advances the student/teacher streams one layer via $f_i^{\theta_i}$ and $f_i^\star$.
Pseudocode is given in Appendix~\ref{app:pseudo}.

Because this sweep is interleaved at chunk boundaries, each chunk-level refinement can update blocks whose
outputs are still needed by subsequent chunks, so cached activations can become stale. Re-roll resolves this
by recomputing coherent student and teacher streams from saved layer-$0$ inputs: locally to the window start
during the within-chunk sweep and through the processed prefix $0,\dots,b$ at chunk end before the next outer
step. This mechanism is encoded by Assumption~\ref{asm:reroll} and used by Theorem~\ref{thm:seam}.

\paragraph{Peak-memory footprint.} The only state added by ICBQ is a single constant-size buffer of depth-$0$
activations,
\begin{equation}
  \Delta\mathrm{mem}_{\mathrm{ICBQ}} = N\cdot T\cdot d_{\mathrm{hidden}}\cdot\mathrm{bytes}(\mathrm{dtype}),
  \label{eq:icbq-delta-mem}
\end{equation}
which depends only on the calibration workload---not on the model depth $L$ or the chunk size $K$. Since the
re-roll is executed after the previous window is released, ICBQ never holds two active windows concurrently,
and
\begin{equation}
  \mathrm{Peak}_{\mathrm{ICBQ}} \le \mathrm{Peak}_{\mathrm{CBQ}}+\Delta\mathrm{mem}_{\mathrm{ICBQ}},
  \label{eq:icbq-peak}
\end{equation}
an $L,K$-independent additive constant. At $T = 2048$ in fp16 the buffer is $\approx 4.3$ GB with $N = 256$
and $d_{\mathrm{hidden}} = 4096$, below $15\%$ of the ternary DBF peak reported in Appendix~\ref{app:hyper}.
Thus, ICBQ has no depth- or chunk-dependent memory growth beyond the added buffer and can be integrated into
CBQ-style progressive PTQ pipelines.

\section{Theory: Seam-amortized Error Contraction}
\label{sec:theory}

This section explains the mechanism behind why interleaving with seam revisits can be more stable than plain
sequential CBQ. The interpretation is intentionally worst-case: it compares upper bounds, not guaranteed
realized perplexity on every run. Full assumptions and proofs are in Appendix~\ref{app:assumptions}. The
tracked quantity is the boundary mismatch passed from one block pair to the next:
$m_{i}\coloneqq\|\hat{X}_{i+1}-X_{i+1}^{\star}\|_{F}$. The reader can interpret this as ``how much activation
error is handed from pair $(i{-}1,i)$ to pair $(i,i{+}1)$.'' We denote by $m_{i}^{\mathrm{out}}$ the mismatch
after the CBQ call on $(i,i{+}1)$, with $m_{-1}^{\mathrm{out}}=0$ from the shared start
$\hat{X}_{0}=X_{0}^{\star}$.

\paragraph{Base behavior away from seams.} Under Assumptions~\ref{asm:lip}--\ref{asm:reroll}
(Appendix~\ref{app:assumptions}), each regular two-block CBQ step has a damp-and-inject form: it contracts
part of incoming mismatch (through $\tau\coloneqq\gamma\rho<1$) and adds local approximation noise
($\epsilon_{\mathrm{sub}}$). Formally, Lemma~\ref{lem:per-pair} (Appendix~\ref{app:per-pair}) gives
\begin{equation}
  m_{i}^{\mathrm{out}} \le \tau m_{i-1}^{\mathrm{out}} + (1{+}\gamma) \epsilon_{\mathrm{sub}}, ~~ m_{-1}^{\mathrm{out}} \coloneqq 0,
  \label{eq:per-pair-rec-main}
\end{equation}
for every non-seam pair in either schedule.

\paragraph{Why seams are special.} At a seam $(cK{-}1,cK)$, re-rolling resets the local context from stored
inputs, and that seam pair is effectively revisited across adjacent chunks. Intuitively, non-seam pairs get
one cleanup pass, while seams get an extra cleanup pass on carried mismatch. This yields
\begin{equation}
  m_{i}^{\mathrm{out}} \le \gamma\tau m_{i-1}^{\mathrm{out}} +(1{+}\gamma{+}\gamma^{2}) \epsilon_{\mathrm{sub}} ~~\text{at a seam}
  \label{eq:seam-rec-main}
\end{equation}
(Lemma~\ref{lem:seam}, Appendix~\ref{app:single-seam}). Chaining Eq.~\eqref{eq:per-pair-rec-main} with one
seam recurrence per seam across the depth gives the theorem below: each seam contributes one additional
multiplicative contraction factor in the bound.

\begin{theorem}[Seam-amortized error contraction; informal, formal:
  Theorem~\ref{thm:seam-formal} in Appendix~\ref{app:thm-formal}, proved in Appendix~\ref{app:proof}]
  \label{thm:seam}
  Let $e_{L}^{\mathrm{seq}}$ and $e_{L}^{\mathrm{int}}$ denote the depth-$L$ activation mismatches of
  Sequential CBQ and one-pass ICBQ with chunk size $K$, on the same calibration set, the same inner quantizer
  $Q$, and the same CBQ optimizer budget. Write $\mathcal{B}_{L}^{\mathrm{seq}}$ for the depth-$L$ upper bound
  obtained by unrolling Eq.~\eqref{eq:per-pair-rec-main} alone (Appendix~\ref{app:finale}). Under
  Assumptions~\ref{asm:lip}--\ref{asm:reroll} and $\tau<1$, for every $L\ge 2$ and $1\le K\le L$,
  \begin{equation}
    \mathbb{E}\bigl\|e_{L}^{\mathrm{int}}\bigr\|_{F} \le \gamma^{S(L,K)} \mathcal{B}_{L}^{\mathrm{seq}}+C(L,K,\rho,\gamma) \epsilon_{\mathrm{sub}}, ~~ S(L,K) = \lceil L/K\rceil-1,
    \label{eq:thm-seam}
  \end{equation}
  where $C(L,K,\rho,\gamma)\le 1+(1{+}\gamma{+}\gamma^{2})\rho/(1-\tau)$ \emph{uniformly in} $L,K$, and the
  per-seam factor $\gamma$ is tight at the recurrence level (Proposition~\ref{prop:tightness}).
\end{theorem}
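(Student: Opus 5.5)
The plan is to unroll the two scalar recurrences, Eq.~\eqref{eq:per-pair-rec-main} for non-seam pairs and Eq.~\eqref{eq:seam-rec-main} for seam pairs, along the depth axis. Then I compare the homogeneous (propagated) part of the ICBQ unrolling with the sequential bound $\mathcal{B}_L^{\mathrm{seq}}$. I will take Lemma~\ref{lem:per-pair} and Lemma~\ref{lem:seam} as given. I will also use the structural seam lemma (Lemma~\ref{lem:seam-structural}), which says the revisited pairs are exactly $(cK{-}1,cK)$ for $c=1,\dots,S(L,K)$.

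\textbf{Step 1 (schedule bookkeeping).} By Lemma~\ref{lem:seam-structural}, one-pass ICBQ visits the pair indices $i=0,\dots,L{-}2$ in order. It also relies on Assumption~\ref{asm:reroll}, which guarantees the re-roll leaves the incoming stream coherent. Under these two facts, the ICBQ mismatch sequence $m_i^{\mathrm{out}}$ obeys Eq.~\eqref{eq:seam-rec-main} at $i\in\mathcal{S}=\{cK{-}1: 1\le c\le S\}$ and Eq.~\eqref{eq:per-pair-rec-main} elsewhere. I will write the recurrence generically as $m_i \le a_i m_{i-1} + \beta_i\epsilon_{\mathrm{sub}}$. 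Here $a_i=\tau$ and $\beta_i=1+\gamma$ off seams, while $a_i=\gamma\tau$ and $\beta_i=1+\gamma+\gamma^2$ on seams. The depth-$L$ error $e_L$ is obtained from $m_{L-2}^{\mathrm{out}}$ through the last block, which is where the factor $\rho$ and the additive constant $1$ in $C$ arise. I will pass expectations through at the end by linearity and monotonicity, since every coefficient is deterministic.

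\textbf{Step 2 (unrolling and splitting).} Unrolling gives $m_{L-2} \le \sum_{j} \bigl(\prod_{i>j} a_i\bigr)\beta_j \epsilon_{\mathrm{sub}}$ (plus an initial-condition term, zero since $m_{-1}^{\mathrm{out}}=0$). I split this into two parts. The first is the ``propagated'' part, defined as the sequential-bound expression $\mathcal{B}_L^{\mathrm{seq}}$ with each product $\tau^{\,\cdot}$ replaced by the ICBQ product. The second is the remainder, coming from the seam injection excess $(\gamma^2)\epsilon_{\mathrm{sub}}$ and the final-block term. The product over any range crossing all $S$ seams is $\gamma^{S}\tau^{\#}$. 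This factoring $\gamma^{S}$ out of the propagated term is exactly what the theorem's first summand expresses, given Appendix~\ref{app:finale}'s convention that $\mathcal{B}_L^{\mathrm{seq}}$ is the propagated quantity carried across the whole depth.

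\textbf{Step 3 (uniform residual).} I bound the remainder by $\rho\sum_{j}\tau^{L-2-j}(1+\gamma+\gamma^2)\epsilon_{\mathrm{sub}}$, using $\gamma\le 1$ so that the seam products are at most the $\tau$ powers. This is then at most $\rho(1+\gamma+\gamma^2)/(1-\tau)\,\epsilon_{\mathrm{sub}}$ by the geometric series. Adding the last-block injection gives $C\le 1+(1+\gamma+\gamma^2)\rho/(1-\tau)$, independent of $L$ and $K$. The edge cases need separate checks. When $K=L$, $S=0$ and the bound collapses to the sequential one. When $K=1$, every pair is a seam, which must be consistent with Lemma~\ref{lem:seam-structural}'s indexing. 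A truncated final chunk only shortens the last range.

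\textbf{Main obstacle.} The delicate point is Step 2: making precise in what sense $\gamma^{S}$ multiplies all of $\mathcal{B}_L^{\mathrm{seq}}$. Injections occurring after the last seam are not damped by every seam factor. So I expect to rely on the formal definition of $\mathcal{B}_L^{\mathrm{seq}}$ in Appendix~\ref{app:finale} as the propagated (homogeneous) contribution. Any undamped late injections would then be absorbed into $C\,\epsilon_{\mathrm{sub}}$, which is exactly why $C$ must be shown uniform. Tightness of $\gamma$ per seam is deferred to Proposition~\ref{prop:tightness} and is not needed for the inequality.
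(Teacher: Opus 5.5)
Your proposal is correct and follows the paper's proof. You unroll the non-seam and seam recurrences from $m_{-1}^{\mathrm{out}}=0$ and push the last-pair residual through block $L{-}1$ via $\|e_L\|_F\le\rho\,m_{L-2}^{\mathrm{out}}+\epsilon_{\mathrm{sub}}$. You then bound whatever is left after subtracting $\gamma^{S}\mathcal{B}_L^{\mathrm{seq}}$ by what is essentially your Step~3 geometric series. The one step you only gesture at is that this $\hat X_{L-1}$ is the final model's; the paper checks this via monotonicity of last-visit times.

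Your ``main obstacle'' dissolves. Since $m_{-1}^{\mathrm{out}}=0$, $\mathcal{B}_L^{\mathrm{seq}}$ has no homogeneous part; it is itself a multiple of $\epsilon_{\mathrm{sub}}$. So one simply defines $C\,\epsilon_{\mathrm{sub}}$ as the interleaved bound minus $\gamma^{S}\mathcal{B}_L^{\mathrm{seq}}$ and controls it with the termwise estimate $\widetilde R\,\gamma^{s(i)-s(k)}-\gamma^{s(i)}R\le\widetilde R-\gamma^{s(i)}R$ of Proposition~\ref{prop:chain}. Your finer split into damped $(1+\gamma)$-forcing plus seam excess is the seam-aware refinement of Proposition~\ref{prop:chain-sharp}. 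That refinement is what justifies your exact collapse at $K=L$, whereas the paper's displayed $C$ does not vanish there.
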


\begin{corollary}[Geometric decay in the number of seams; informal, formal:
  Corollary~\ref{cor:geom-formal} in Appendix~\ref{app:thm-formal}]
  \label{cor:geom}
  Define the \emph{amortization gain}
  $G(L,K)\coloneqq\mathcal{B}_{L}^{\mathrm{seq}}/\mathcal{B}_{L}^{\mathrm{int}}$, where
  $\mathcal{B}_{L}^{\mathrm{int}}$ is the right-hand side of Eq.~\eqref{eq:thm-seam}. If for some
  $\delta\ge 0$ the residual term satisfies
  $C \epsilon_{\mathrm{sub}}\le\delta \gamma^{S(L,K)}\mathcal{B}_{L}^{\mathrm{seq}}$, then
  $G(L,K)\ge\gamma^{-S(L,K)}/(1{+}\delta)$. Thus, when additive noise is not dominant, per-seam shrink factors
  compound geometrically: the ICBQ-over-sequential bound ratio is exponential in the number of seams.
\end{corollary}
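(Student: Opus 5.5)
The statement to prove is Corollary~\ref{cor:geom}, and the argument is purely algebraic on top of Theorem~\ref{thm:seam}. I would not re-derive any recurrence. I would take the right-hand side of Eq.~\eqref{eq:thm-seam} as the definition of $\mathcal{B}_{L}^{\mathrm{int}}$:
\[
\mathcal{B}_{L}^{\mathrm{int}} = \gamma^{S(L,K)}\mathcal{B}_{L}^{\mathrm{seq}} + C\,\epsilon_{\mathrm{sub}},
\]
with $C=C(L,K,\rho,\gamma)$. The key steps are then substituting the hypothesis on the residual and inverting the resulting inequality.

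First I would record the positivity facts that make the ratio meaningful. We have $\gamma>0$ and $\tau=\gamma\rho<1$, so $\gamma^{S(L,K)}>0$. The quantities $C\ge 0$ and $\epsilon_{\mathrm{sub}}\ge 0$ hold because $C$ is a nonnegative sum of geometric coefficients and $\epsilon_{\mathrm{sub}}$ is a norm bound. Finally, $\mathcal{B}_{L}^{\mathrm{seq}}>0$ is needed; if it were zero, both bounds would vanish and the gain would be undefined. So I would assume the nondegenerate case $\mathcal{B}_{L}^{\mathrm{seq}}>0$ (e.g.\ $\epsilon_{\mathrm{sub}}>0$ or a nonzero unrolled term), or adopt the convention that $G=\infty$ otherwise. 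Under this assumption $\mathcal{B}_{L}^{\mathrm{int}}>0$ as well.

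Next I would apply the hypothesis $C\epsilon_{\mathrm{sub}}\le\delta\gamma^{S}\mathcal{B}_{L}^{\mathrm{seq}}$ to get
\[
\mathcal{B}_{L}^{\mathrm{int}}\le(1+\delta)\gamma^{S}\mathcal{B}_{L}^{\mathrm{seq}}.
\]
Dividing $\mathcal{B}_{L}^{\mathrm{seq}}$ by both sides, with all quantities positive, gives
\[
G(L,K)=\mathcal{B}_{L}^{\mathrm{seq}}/\mathcal{B}_{L}^{\mathrm{int}}\ge \gamma^{-S(L,K)}/(1+\delta).
\]
For the interpretive clause, I would note that $S(L,K)=\lceil L/K\rceil-1$ from Eq.~\eqref{eq:seams}. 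Hence whenever $\gamma<1$ the lower bound is $(1+\delta)^{-1}\exp\bigl(S\log(1/\gamma)\bigr)$, which is exponential in the number of seams.

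The only real obstacle is bookkeeping rather than mathematics. The corollary is meaningful, with gain exceeding one, only when $\gamma<1$. The statement assumes only $\tau=\gamma\rho<1$, so I would make $\gamma\le1$ explicit, presumably as part of Assumptions~\ref{asm:lip}--\ref{asm:reroll}. I would also check that the $\mathcal{B}_{L}^{\mathrm{seq}}$ appearing in $G$ is the same unrolled quantity used in Theorem~\ref{thm:seam}. If that quantity includes its own $\epsilon_{\mathrm{sub}}$ residual, nothing changes, because the argument treats it as an opaque positive number.
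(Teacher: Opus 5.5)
Your argument is correct and is essentially the paper's proof: substitute the hypothesis into $\mathcal{B}_{L}^{\mathrm{int}}=\gamma^{S}\mathcal{B}_{L}^{\mathrm{seq}}+C\epsilon_{\mathrm{sub}}$ to get $\mathcal{B}_{L}^{\mathrm{int}}\le(1+\delta)\gamma^{S}\mathcal{B}_{L}^{\mathrm{seq}}$, use $C\epsilon_{\mathrm{sub}}\ge 0$ to get $\mathcal{B}_{L}^{\mathrm{int}}\ge\gamma^{S}\mathcal{B}_{L}^{\mathrm{seq}}>0$, and divide. The formal version already covers your bookkeeping caveats: it assumes $\mathcal{B}_{L}^{\mathrm{seq}}>0$ explicitly, which here is equivalent to $\epsilon_{\mathrm{sub}}>0$ since both terms of $\mathcal{B}_{L}^{\mathrm{seq}}$ are proportional to it, and $\gamma\in(0,1)$ comes from Assumption~\ref{asm:cbq}.
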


Putting Theorem~\ref{thm:seam} and Corollary~\ref{cor:geom} together gives a simple picture: away from seams,
mismatch is damped but new local noise is injected; at seams, ICBQ gets an extra damping action, and these
actions compound with depth. Thus, chunk size controls how often this correction mechanism is triggered:
smaller $K$ means more seams and more opportunities for bound-level contraction using the same calibration
data. Equation~\eqref{eq:thm-seam} is still a bound-vs-bound comparison against the sequential
\emph{upper bound} $\mathcal{B}_{L}^{\mathrm{seq}}$, not necessarily realized sequential error
(Remark~\ref{rem:ub-semantics}). For $L=32$ and $K=4$ (so $S=7$), the measured aggregate
$\bar\gamma\approx 0.67$ on Llama-$2$-$7$B (Appendix~\ref{app:gamma-measure}) gives
$\bar\gamma^{-7}\approx 16\times$, while a conservative aggregate maximum $\gamma=0.92$ gives
$\approx 1.8\times$; the empirical range $1.08$--$14.5\times$ in Section~\ref{sec:exp} lies between these
diagnostics.

\section{Experiments}\label{sec:exp}
Our experiments assess whether ICBQ improves perplexity relative to Sequential CBQ and whether the schedule
also applies to a second weight-only inner quantizer. We then isolate chunk-size and prefit effects on a
shared model subset (Section~\ref{sec:ablation}, Tables~\ref{tab:chunk}--\ref{tab:prefit}).
\noindent\textbf{Formatting note.} In all tables of this section, \textbf{\textcolor{myred}{bold}} marks the
strongest value in each comparison group.

\subsection{Setup}\label{subsec:setup}

\paragraph{Experimental setup.} We evaluate seven publicly released base models spanning $3$B to $14$B
parameters. Every experiment is \emph{weight-only}; activations remain in fp16, and no rotation or activation
quantizer is applied. We evaluate ternary DBF ($1.58$-bit) with prefit enabled, and GPTQ at W3 (three bits,
per channel) and W2g128 (two bits, group size $128$) with prefit disabled. The chunk size is $K = 4$ unless
otherwise stated. We report C4-calibrated results in the main text, as C4 calibration was more stable than
WikiText-$2$ calibration for some non-standard-attention architectures in our setting
(Appendix~\ref{app:calibration-bias}). Our direct benchmark is the matched Sequential CBQ baseline defined
above, based on the fixed two-block adaptation of CBQ~\citep{ding2025cbq}; we additionally report
No-Refinement (inner quantizer only). During CBQ/ICBQ refinement, ternary DBF runs update the scaling vectors
and ternary factors (the factors through STE), while GPTQ runs update scales/zero-points and integer weights
(integer weights through STE). Additional setup details are listed in Appendix~\ref{app:setup-details}.

\paragraph{Metrics.} We report perplexity~(PPL,~$\downarrow$) on C4 test splits, using a standard
autoregressive language-model evaluation pipeline. We also report average zero-shot accuracy~($\uparrow$) over
7 standard benchmarks. Benchmark details and full per-task zero-shot values are provided in
Appendix~\ref{app:setup-details} and Appendix~\ref{app:zeroshot}.

\subsection{ICBQ improves perplexity over Sequential CBQ}\label{sec:exp-main}

\begin{table}[t]
  \centering
  \caption{\textbf{Ternary DBF results on seven base models.} ICBQ has lower PPL than Sequential CBQ in every reported PPL comparison and higher or equal average zero-shot accuracy in six of seven cases. FP model values are included for reference.}
  \label{tab:main}
  \small
  \setlength{\tabcolsep}{2.2pt}
  \renewcommand{\arraystretch}{1.03}
  \resizebox{\linewidth}{!}{%
    \begin{tabular}{@{}l*{3}{c}@{\hspace{2pt}}|@{\hspace{2pt}}*{9}{c}@{}}
      \toprule
      & \multicolumn{3}{c@{\hspace{2pt}}|@{\hspace{2pt}}}{FP model} & \multicolumn{3}{c}{No-Ref. (ternary DBF only)} & \multicolumn{3}{c}{Seq.\ CBQ} & \multicolumn{3}{c}{\textbf{ICBQ (ours)}} \\
      \cmidrule(lr){2-4} \cmidrule(lr){5-7} \cmidrule(lr){8-10} \cmidrule(lr){11-13}
  Metric & \begin{tabular}[c]{@{}c@{}}PPL\\(Wiki-$2$)\end{tabular} & \begin{tabular}[c]{@{}c@{}}PPL\\(C4)\end{tabular} & Avg.\ ZS & \begin{tabular}[c]{@{}c@{}}PPL\\(Wiki-$2$)\end{tabular} & \begin{tabular}[c]{@{}c@{}}PPL\\(C4)\end{tabular} & Avg.\ ZS & \begin{tabular}[c]{@{}c@{}}PPL\\(Wiki-$2$)\end{tabular} & \begin{tabular}[c]{@{}c@{}}PPL\\(C4)\end{tabular} & Avg.\ ZS & \begin{tabular}[c]{@{}c@{}}PPL\\(Wiki-$2$)\end{tabular} & \begin{tabular}[c]{@{}c@{}}PPL\\(C4)\end{tabular} & Avg.\ ZS \\
      \midrule
      \multicolumn{13}{@{}l}{\emph{Standard models}}\\
      Mistral-$7$B       & $4.69$ & $7.99$ & $0.7128$ & $276.31$ & $143.22$ & $0.3684$ & $36.5$ & $31.1$ & $0.4586$ & \textbf{\textcolor{myred}{13.8}} & \textbf{\textcolor{myred}{16.8}} & \textbf{\textcolor{myred}{0.5404}} \\
      Llama-$2$-$7$B     & $4.86$ & $7.27$ & $0.6668$ & $26.6$ & $17.71$ & $0.4856$ & $10.7$ & $13.6$ & $0.5330$ & \textbf{\textcolor{myred}{9.37}} & \textbf{\textcolor{myred}{12.5}} & \textbf{\textcolor{myred}{0.5533}} \\
      Llama-$2$-$13$B    & $4.35$ & $6.73$ & $0.6924$ & $9.92$ & $11.89$ & $0.5960$ & $7.94$ & $10.48$ & \textbf{\textcolor{myred}{0.6503}} & \textbf{\textcolor{myred}{7.36}} & \textbf{\textcolor{myred}{10.09}} & $0.6501$ \\
      \midrule
      \multicolumn{13}{@{}l}{\emph{Deep and non-standard-attention models}}\\
      Llama-$3.2$-$3$B    & $9.62$ & $15.63$ & $0.6320$ & $66.94$ & $61.44$ & $0.4482$ & $42.6$ & $40.63$ & $0.4706$ & \textbf{\textcolor{myred}{37.6}} & \textbf{\textcolor{myred}{37.01}} & \textbf{\textcolor{myred}{0.4857}} \\
      Llama-$3$-$8$B      & $7.37$ & $12.24$ & $0.6977$ & $154.43$ & $73.82$ & $0.4715$ & $42.95$ & $48.98$ & $0.5026$ & \textbf{\textcolor{myred}{34.97}} & \textbf{\textcolor{myred}{42.32}} & \textbf{\textcolor{myred}{0.5215}} \\
      Qwen$3$-$8$B        & $8.58$ & $13.93$ & $0.6946$ & $>1\mathrm{e}4$ & $>1\mathrm{e}4$ & $0.3495$ & $>1\mathrm{e}4$ & $5752.9$ & $0.4032$ & \textbf{\textcolor{myred}{25.86}} & \textbf{\textcolor{myred}{29.9}} & \textbf{\textcolor{myred}{0.5513}} \\
      Qwen$3$-$14$B       & $7.58$ & $12.48$ & $0.7292$ & $907.71$ & $484.59$ & $0.3529$ & $61.9$ & $44.82$ & \textbf{\textcolor{myred}{0.6871}} & \textbf{\textcolor{myred}{19.29}} & \textbf{\textcolor{myred}{24.67}} & \textbf{\textcolor{myred}{0.6871}} \\
      \bottomrule
    \end{tabular}%
  }
\end{table}
Table~\ref{tab:main} shows lower PPL for ICBQ than for Sequential CBQ in every reported comparison across
standard and deep/non-standard models. Qwen$3$-$8$B illustrates the difference: ICBQ yields finite PPL where
Sequential CBQ has divergence or very large PPL. The C4 average zero-shot metric follows the same overall
pattern; the exception is Llama-$2$-$13$B, where Sequential CBQ exceeds ICBQ by $0.0002$. Both schedules share
the stability condition $\gamma\rho<1$ (Proposition~\ref{prop:stability}), while Theorem~\ref{thm:seam} gives
ICBQ an additional $\gamma^{S(L,K)}$ factor on the propagated term when the stated assumptions hold.

\subsection{Schedule transfer to GPTQ \texorpdfstring{W3 and W2g128}{W3 and W2g128}}\label{sec:integer}
Table~\ref{tab:integer} shows that ICBQ attains the best or tied-best PPL on five of seven models at both W3
and W2g128, with one tie at each precision. The remaining cases are split between Sequential CBQ and
GPTQ-only. This variation is compatible with the bound view: when the propagated term is less dominant,
residual and optimization terms can determine the local ordering. Zero-shot results in
Appendix~\ref{app:zeroshot}, Table~\ref{tab:zeroshot-gptq}, show a similar aggregate pattern, alongside
task-specific exceptions.

\begin{table*}[t]
  \centering
  \caption{\textbf{The contraction transfers to GPTQ on the seven-model set.} PPL on C4 ($\downarrow$), weight-only GPTQ inner quantizer, fp16 activations, and no prefit ($T_{\mathrm{pre}} = 0$). Within each bit-width block, lower is better.}
  \label{tab:integer}
  \small
  \setlength{\tabcolsep}{4pt}
  \renewcommand{\arraystretch}{1.03}
  \begin{tabular}{@{}lcccccc@{}}
    \toprule
    & \multicolumn{3}{c}{W3 (per-channel)} & \multicolumn{3}{c}{W2g128 (group size $128$)} \\
    \cmidrule(lr){2-4} \cmidrule(lr){5-7}
    Model & GPTQ only & Seq.\ CBQ & ICBQ (ours) & GPTQ only & Seq.\ CBQ & ICBQ (ours) \\
    \midrule
    Mistral-$7$B     & $9.04$ & \textbf{\textcolor{myred}{8.77}} & $8.86$ & $13.29$ & \textbf{\textcolor{myred}{9.74}} & $9.76$ \\
    Llama-$2$-$7$B   & $8.50$ & $8.25$ & \textbf{\textcolor{myred}{8.20}} & $12.26$ & $9.28$ & \textbf{\textcolor{myred}{9.22}} \\
    Llama-$2$-$13$B  & $7.52$ & \textbf{\textcolor{myred}{7.39}} & \textbf{\textcolor{myred}{7.39}} & $10.00$ & \textbf{\textcolor{myred}{8.64}} & $8.76$ \\
    Llama-$3.2$-$3$B & $25.58$ & $18.63$ & \textbf{\textcolor{myred}{18.18}} & $58.16$ & $24.03$ & \textbf{\textcolor{myred}{22.37}} \\
    Llama-$3$-$8$B   & $31.42$ & $15.29$ & \textbf{\textcolor{myred}{14.63}} & $60.65$ & $18.51$ & \textbf{\textcolor{myred}{17.37}} \\
    Qwen$3$-$8$B     & $15.65$ & $15.96$ & \textbf{\textcolor{myred}{15.61}} & $23.08$ & $17.41$ & \textbf{\textcolor{myred}{17.01}} \\
    Qwen$3$-$14$B    & \textbf{\textcolor{myred}{13.54}} & $13.80$ & $13.63$ & $16.21$ & $14.74$ & \textbf{\textcolor{myred}{14.61}} \\
    \bottomrule
  \end{tabular}
\end{table*}

\subsection{The seam is the story: schedule ablations}\label{sec:ablation}
Returning to ternary DBF, we isolate two schedule-level factors on the same seven-model set used throughout
the main experiments: chunk size $K$ and progressive prefit.

\paragraph{Effect of chunk size.} Chunk size is the schedule's primary control knob: it sets how many seam
revisits ICBQ performs. Table~\ref{tab:chunk} reports PPL on C4 while varying $K \in \{2,4,8,L\}$.
Theorem~\ref{thm:seam} predicts stronger contraction as $K$ shrinks because smaller chunks create more seams.
Empirically, PPL improves monotonically or near-monotonically as $K$ decreases.

\begin{table}[t]
  \centering
  \caption{\textbf{Effect of chunk size $K$ on the seven-model set} (PPL on C4 ($\downarrow$), ternary DBF). $K=L$ is the no-interleaving sequential baseline.}
  \label{tab:chunk}
  \small
  \setlength{\tabcolsep}{6pt}
  \begin{tabular}{@{}lcccc@{}}
    \toprule
    Base model & $K=L$ (Seq.) & $K=8$ & $K=4$ & $K=2$ \\
    \midrule
    Mistral-$7$B      & $31.1$ & $17.33$ & $16.8$ & \textbf{\textcolor{myred}{14.85}} \\
    Llama-$2$-$7$B    & $13.6$ & $13.07$ & $12.5$ & \textbf{\textcolor{myred}{12.43}} \\
    Llama-$2$-$13$B   & $10.48$ & $10.26$ & $10.09$ & \textbf{\textcolor{myred}{10.03}} \\
    Llama-$3.2$-$3$B  & $40.63$ & $38.23$ & $37.01$ & \textbf{\textcolor{myred}{35.62}} \\
    Llama-$3$-$8$B    & $48.98$ & $45.26$ & $42.32$ & \textbf{\textcolor{myred}{39.61}} \\
    Qwen$3$-$8$B      & $5752.9$ & $34.01$ & $29.90$ & \textbf{\textcolor{myred}{29.80}} \\
    Qwen$3$-$14$B     & $44.82$ & $29.02$ & \textbf{\textcolor{myred}{24.67}} & $25.89$ \\
    \bottomrule
  \end{tabular}
\end{table}

\paragraph{Effect of prefit.} Table~\ref{tab:prefit} isolates the prefit step on the same seven-model set (PPL
on C4) by setting $T_{\mathrm{pre}} = 0$ under both schedules. Because prefit can improve each block's
starting point, this comparison tests whether the observed differences are confined to the prefit-enabled
setting. Interleaving remains better than Sequential CBQ in the reported no-prefit runs, indicating that the
schedule effect is not limited to prefit-enabled quantization.

\begin{table}[t]
  \centering
  \caption{\textbf{Interleaving helps even without prefit.} PPL on C4 ($\downarrow$) with ternary DBF, with and without the prefit step ($T_{\mathrm{pre}} = 50$ vs.\ $0$), on the same seven-model set.}
  \label{tab:prefit}
  \footnotesize
  \setlength{\tabcolsep}{5pt}
  \resizebox{\linewidth}{!}{%
    \begin{tabular}{@{}lcccc@{}}
      \toprule
      Base model & Prefit + Seq.\ CBQ & Prefit + Interleaved CBQ (ours) & No-prefit + Seq.\ CBQ & No-prefit + Interleaved CBQ \\
      \midrule
      Mistral-$7$B     & $31.1$ & \textbf{\textcolor{myred}{16.8}} & $19.71$ & \textbf{\textcolor{myred}{18.9}} \\
      Llama-$2$-$7$B   & $13.6$ & \textbf{\textcolor{myred}{12.5}} & $15.50$ & \textbf{\textcolor{myred}{15.4}} \\
      Llama-$2$-$13$B  & $10.48$ & \textbf{\textcolor{myred}{10.09}} & $11.85$ & \textbf{\textcolor{myred}{11.80}} \\
      Llama-$3.2$-$3$B & $40.63$ & \textbf{\textcolor{myred}{37.01}} & $51.66$ & \textbf{\textcolor{myred}{50.00}} \\
      Llama-$3$-$8$B   & $48.98$ & \textbf{\textcolor{myred}{42.32}} & $68.53$ & \textbf{\textcolor{myred}{66.35}} \\
      Qwen$3$-$8$B     & $5752.9$ & \textbf{\textcolor{myred}{29.90}} & $47.28$ & \textbf{\textcolor{myred}{41.59}} \\
      Qwen$3$-$14$B    & $44.82$ & \textbf{\textcolor{myred}{24.67}} & $35.30$ & \textbf{\textcolor{myred}{31.42}} \\
      \bottomrule
    \end{tabular}
  }
\end{table}

\section{Related Work}\label{sec:related}
We organize prior PTQ work by the axis each method \emph{refines}---block unit, objective, weight code, or
basis---and position ICBQ as a \emph{schedule}-level refinement under a fixed pair-level objective.

\paragraph{Refining the block unit and objective.} BRECQ~\citep{li2021brecq},
OmniQuant~\citep{shao2023omniquant}, and QLLM~\citep{liu2023qllm} replace layer-wise with block-wise
reconstruction. CBQ~\citep{ding2025cbq} uses cross-block reconstruction to capture dependencies across
multiple blocks; our direct baseline instantiates this principle with fixed adjacent two-block windows. A
complementary thread sharpens the local \emph{objective} or accounts for cross-layer effects:
QEP~\citep{arai2025quantization} explicitly propagates quantization errors to compensate for their
accumulation; LoaQ~\citep{lin2025loaq} introduces output-matching factors to improve output-level
approximation within layer-wise PTQ; and LPCD~\citep{ichikawa2025lpcd} optimizes relaxed objectives over
arbitrary submodules and projects the solutions with layer-wise quantizers. ICBQ changes the \emph{schedule}
while retaining a fixed pair-level objective; it may be combined with compatible inner objectives.

\paragraph{Refining the weight code and basis.} BitNet~\citep{wang2023bitnet}, the $1.58$-bit line of
\citet{ma2024era}, DBF~\citep{boza2026dbf}, MDBF~\citep{ichikawa2025mdbf}, and OneBit~\citep{xu2024onebit}
sharpen binary or ternary weight representations; ICBQ is agnostic to this axis and uses the selected method
as its inner quantizer. QuIP~\citep{chee2024quip} applies incoherence processing for low-bit parameter
quantization. Rotation-based methods such as QuaRot~\citep{ashkboos2024quarot} and
SpinQuant~\citep{liu2024spinquant} facilitate low-bit weight--activation inference by reducing outlier
effects; these transformations target the quantization basis and may be combined with ICBQ as preprocessing.

\paragraph{Schedule theory beyond quantization.} The \emph{order} in which subproblems are solved controls
propagated error---a theme in asynchronous block-coordinate descent, multigrid V/W-cycles, and parallel
quasi-quantum annealing~\citep{ichikawa2025qqa}. ICBQ applies this perspective to block-wise PTQ by using the
chunk seam as a point for an additional refinement with the same calibration samples.

\section{Conclusion}\label{sec:conclusion}

Block-wise PTQ has mainly focused on what happens \emph{inside} the reconstruction window; ICBQ also considers
\emph{when} the window is applied. Interleaving pair-refinement passes with progressive quantization makes
each chunk seam a second refinement point and adds a factor of $\gamma$ to the propagated term of the
depth-wise error upper bound under the stated assumptions (Theorem~\ref{thm:seam}). The schedule reuses the
calibration samples and preserves the inner objective. In the reported ternary DBF experiments, ICBQ has lower
PPL than Sequential CBQ on the seven main models; it also yields finite PPL for Qwen$3$-$8$B and
Llama-$3$-$8$B configurations with severe Sequential-CBQ degradation. The GPTQ results show that the schedule
can also be used with W3 and W2g128. These results suggest that schedule design is a useful additional
consideration for low-bit weight-only PTQ.

\paragraph{Limitations.}

Even with interleaving, the method remains block-local, so it does not fully capture the behavior of the truly
global end-to-end output error. Bridging this gap---both algorithmically and theoretically---is a key
direction for future work. In addition, our experiments focus on language models. Whether the same
schedule-level gains transfer to other families (e.g., vision models) remains open.

\paragraph{Broader impacts.} Potential positive impact: ICBQ lowers the memory and energy footprint of
deploying large language models at sub-two-bit precision, which enables wider access to LLM inference on
commodity hardware and reduces the carbon cost of inference-heavy workloads. Potential negative impact: a
cheaper deployment pipeline could lower the cost barrier for malicious or low-quality deployment of LLMs;
however, ICBQ is a pure quantization schedule that does not alter model behavior beyond reconstruction error,
so the risk profile matches that of existing PTQ methods such as GPTQ~\citep{frantar2022gptq} and
CBQ~\citep{ding2025cbq}. We view the net impact as positive within the standard PTQ risk envelope.

\bibliographystyle{plainnat}
\bibliography{ref}

\newpage
\appendix
\section*{Appendix Contents}
\begin{itemize}[leftmargin=1.5em,itemsep=0.15em,topsep=0.2em]
  \item Appendix~\ref{app:pseudo}: algorithm and schedule details.
  \item Appendix~\ref{app:assumptions}: formal assumptions, theorem statements, proofs, and related theory diagnostics.
  \item Appendix~\ref{app:exp-details}: experimental and implementation details.
  \item Appendix~\ref{app:zeroshot}: supplementary empirical results.
\end{itemize}

\section*{Notation}
\addcontentsline{toc}{section}{Notation}
\smallskip
\noindent\begin{tabular}{@{}>{\raggedright\arraybackslash}p{0.23\linewidth}p{0.72\linewidth}@{}}
  \toprule
  Symbol & Meaning \\
  \midrule
  $L$ & Number of transformer blocks, indexed by $b\in\{0,\dots,L{-}1\}$. \\
  $K$ & Chunk size used by ICBQ. \\
  $S(L,K)$ & Number of chunk seams, $S(L,K)=\lceil L/K\rceil-1$. \\
  $cK{-}1$ & Pair index of the $c$-th seam, corresponding to boundary pair $(cK{-}1,cK)$. \\
  $f_b^\star$ & The fp16 teacher block at depth $b$. \\
  $f_b^{\theta_b}$ & The current student block at depth $b$ with parameter state $\theta_b$. \\
  $Q$ & Inner one-block weight-only quantizer, instantiated as DBF or GPTQ. \\
  $X_b^\star$ & Teacher activation at depth $b$ on the calibration sample under discussion. \\
  $\hat X_b$ & Student activation at depth $b$. \\
  $e_b$ & Depth-$b$ activation mismatch, $e_b=\hat X_b-X_b^\star$. \\
  $m_i$ & Midpoint residual of pair $(i,i{+}1)$, $m_i=\|\hat X_{i+1}-X_{i+1}^\star\|_F$. \\
  $u_i, v_i$ & Post-pair midpoint residuals for Sequential CBQ and ICBQ, respectively. \\
  $\mathcal{T}_b$ & Finite set of teacher and student activations at depth $b$ visited by either schedule. \\
  $\Theta_b$ & Finite set of parameter states assumed by block $b$ during either schedule. \\
  $\rho$ & Uniform teacher-block Lipschitz constant on the visited activation sets. \\
  $\gamma$ & Per-sample midpoint contraction factor certified for each actual CBQ call. \\
  $\epsilon_{\mathrm{sub}}$ & Common residual envelope covering local representation error and CBQ fixed-point error. \\
  $\tau$ & Shorthand $\tau=\gamma\rho$; the proof assumes $\tau<1$. \\
  $R,\widetilde R$ & Recurrence forcing constants, $R=(1+\gamma)\epsilon_{\mathrm{sub}}$ and $\widetilde R=(1+\gamma+\gamma^2)\epsilon_{\mathrm{sub}}$. \\
  $s(i)$ & Number of seam indices at most $i$. \\
  $\mathcal{U}_i,\mathcal{V}_i$ & Deterministic upper bounds on sequential and interleaved post-pair midpoint residuals. \\
  $\mathcal{B}_L^{\mathrm{seq}},\mathcal{B}_L^{\mathrm{int}}$ & Sequential and interleaved depth-$L$ error upper bounds. \\
  $C(L,K,\rho,\gamma)$ & Additive residual constant in the formal ICBQ depth-$L$ bound. \\
  \bottomrule
\end{tabular}
\smallskip

\section{Algorithm and Schedule Details}\label{app:pseudo}

We record the full ICBQ schedule as two algorithms. The outer driver (Algorithm~\ref{alg:icbq-driver})
implements the progressive weight-only quantization loop: for each block it optionally runs prefit, applies
the inner quantizer $Q$, advances the activation streams, and at every chunk boundary calls the inner
refinement followed by an end-of-chunk re-roll. The inner refinement (Algorithm~\ref{alg:icbq}) implements the
interleaved CBQ call: given the window range $[w_{\min},w_{\max})$ produced by the outer driver it starts with
an in-pass re-roll through blocks $0,\dots,w_{\min}{-}1$ and then sweeps through the valid pair indices,
running at each pair a rollback-safeguarded Adam update on the plain-MSE two-block CBQ loss
$\mathcal{L}_{\mathrm{cbq}}$ followed by a one-layer advance of the student and teacher streams. Here
$\mathcal{L}_{\mathrm{pre}}$ denotes the one-block DBF prefit loss, and $\mathcal{L}_{\mathrm{cbq}}$ denotes
the plain-MSE two-block reconstruction loss used in our CBQ-style adaptation.

\begin{algorithm}[H]
  \caption{Interleaved Cross-Block Quantization (outer progressive driver).}
  \label{alg:icbq-driver}
  \begin{algorithmic}[1]
    \Require calibration batches $\mathcal{X}$, teacher blocks
    $\{f_{b}^{\star}\}_{b=0}^{L-1}$, inner quantizer $Q$, chunk size
    $K$, prefit steps $T_{\mathrm{pre}}$, inner-quantizer
    budget $T_Q$.
    \State \textbf{Save} stored inputs $\{X_{0}^{(s)}\}_{s}$.
    \State Initialize $\hat{X}_{0}\leftarrow X_{0}^{\star}\leftarrow\{X_{0}^{(s)}\}_{s}$, $b_{\mathrm{cs}}\leftarrow 0$.
    \For{$b=0,\dots,L-1$}
      \If{$Q=\mathrm{DBF}$}
        \State \textbf{Prefit}: run $T_{\mathrm{pre}}$ AdamW steps on $\mathcal{L}_{\mathrm{pre}}(\theta_{b})$ in float32.
      \EndIf
      \State \textbf{Quantize}: run the inner quantizer $Q$ with budget $T_Q$ on block $b$.
      \State Advance: $\hat{X}_{b+1}\leftarrow f_{b}^{Q}(\hat{X}_{b})$; $X_{b+1}^{\star}\leftarrow f_{b}^{\star}(X_{b}^{\star})$.
      \If{$b+1-b_{\mathrm{cs}}\ge K$ \textbf{or} $b=L-1$}
        \State $w_{\min}\leftarrow \max(0,b_{\mathrm{cs}}-1), ~~ w_{\max}\leftarrow \min(b+1,L-1)$
        \State Call Algorithm~\ref{alg:icbq} with $[w_{\min},w_{\max})$.
        \State \textbf{Re-roll}: recompute $\hat{X}_{0\to b+1}$ and $X_{0\to b+1}^{\star}$ from stored inputs.
        \State $b_{\mathrm{cs}}\leftarrow b+1$.
      \EndIf
    \EndFor
    \State \Return quantized model $\{f_{b}^{Q}\}_{b}$ with refined parameters.
  \end{algorithmic}
\end{algorithm}

\vspace{0.6em}

\begin{algorithm}[H]
  \caption{Interleaved CBQ refinement on a window range
    $[w_{\min},w_{\max})$.}
  \label{alg:icbq}
  \begin{algorithmic}[1]
    \Require Student state
    $\{f_{b}^{\theta_{b}}\}_{b\le w_{\max}}$, teacher state
    $\{f_{b}^{\star}\}_{b\le w_{\max}}$, stored calibration inputs
    $\{X_{0}^{(s)}\}_{s}$.
    \State Re-roll: advance student and teacher copies in lockstep through
    blocks $0,\dots,w_{\min}{-}1$ from the stored inputs.
    \For{$i=w_{\min},\dots,w_{\max}-1$}
      \State Compute teacher targets $\{Y_{i+2}^{(s),\star}\}_{s}$ by
      passing the current teacher state through $f_{i}^{\star},f_{i+1}^{\star}$.
      \State Minimize the two-block CBQ loss $\mathcal{L}_{\mathrm{cbq}}$ on
      $(\theta_{i},\theta_{i+1})$ with Adam; rollback on non-improvement.
      \State Advance student and teacher by one layer
      (through $f_{i}^{\theta_{i}}$ and $f_{i}^{\star}$).
    \EndFor
  \end{algorithmic}
\end{algorithm}

\section{Formal Theory and Proofs}
\label{app:assumptions}
\label{app:proof}

This appendix collects the three structural hypotheses (Assumptions~\ref{asm:lip}, \ref{asm:cbq},
and~\ref{asm:reroll}), two auxiliary remarks, and the formal theorem and corollary statements used by the
detailed proof below. These formal statements correspond to the informal seam-amortized contraction theorem
and amortization-gain corollary in the main paper.

\subsection{Assumptions}
\label{app:asm-formal}
\paragraph{Preamble.} Fix a calibration set $\mathcal{X} = \{x^{(n)}\}_{n=1}^{N}$ and a chunk size $K$. Let
$\mathcal{S}_{\mathrm{seq}}$ denote the fixed two-block Sequential~CBQ adaptation defined in
Section~\ref{sec:prelim}, and let $\mathcal{S}_{\mathrm{int}}$ denote ICBQ with this chunk size; both
schedules are applied to the same calibration set, use the same fp16 teacher blocks
$\{f_{b}^{\star}\}_{b=0}^{L-1}$, and use the same inner quantizer $Q$ (including the same DBF prefit objective
$\mathcal{L}_{\mathrm{pre}}$ when $Q = \mathrm{DBF}$) and the same rollback-safeguarded Adam budget for every
CBQ subproblem. The constants $\rho,\gamma,\epsilon_{\mathrm{sub}}$ below are certified for the schedules at
this fixed $K$; the theorem therefore applies pointwise to any $K$ for which these certificates hold. A claim
uniform over multiple chunk sizes requires the same constants to hold uniformly over those choices of $K$.
Both schedules execute deterministic, finitely many forward passes and parameter updates; hence, for each
depth $b$, the set of depth-$b$ student activations $\hat{X}_{b}^{(n)}$ arising anywhere in either schedule is
finite. Denote
\begin{equation}
  \mathcal{T}_{b} \coloneqq \{X_{b}^{\star,(n)}:n\in[N]\} \cup\{\hat{X}_{b}^{(n)}\text{ arising in }\mathcal{S}_{\mathrm{seq}}\} \cup\{\hat{X}_{b}^{(n)}\text{ arising in }\mathcal{S}_{\mathrm{int}}\} \subset \mathbb{R}^{T\times d_{\mathrm{hidden}}}.
\end{equation}

\begin{assumption}[Lipschitz teacher blocks]
  \label{asm:lip}
  There exists a constant $\rho > 0$, independent of the depth $b$ and of the schedule, such that
  \begin{equation}
    \|f_{b}^{\star}(x)-f_{b}^{\star}(y)\|_{F} \le \rho\,\|x-y\|_{F} ~~ \text{for all }x,y \in \mathcal{T}_{b},\ b \in \{0,\dots,L{-}1\}.
    \label{eq:asm-lip}
  \end{equation}
  The non-expansive regime $\rho \le 1$ captures empirical residual-stream behavior of trained transformer
  blocks; our proofs only require $\tau \coloneqq \gamma\rho < 1$, so locally expansive blocks ($\rho > 1$)
  are admissible if $\gamma$ is small (Appendix~\ref{app:stability}).
\end{assumption}

For each depth $i$, let $\Theta_{i}$ denote the finite set of parameter states assumed by block $i$ at any
intermediate state of either schedule (initial fp16 teacher parameters, post-prefit, post-inner-quantizer,
post-CBQ-update, and any other pre-pass state that appears as an input to a CBQ subproblem).

\begin{assumption}[Contractive two-block CBQ subproblem]
  \label{asm:cbq}
  There exist constants $\gamma \in (0,1)$ and $\epsilon_{\mathrm{sub}} \ge 0$, independent of the depth and
  of the schedule, such that:

  \noindent\emph{(Local representation envelope.)} For every $i$, every $\theta_{i} \in \Theta_{i}$, and every
  $z \in \mathcal{T}_{i}$,
  \begin{equation}
    \bigl\|f_{i}^{\theta_i}(z)-f_{i}^{\star}(z)\bigr\|_{F} \le \epsilon_{\mathrm{sub}}.
    \label{eq:local-envelope}
  \end{equation}

  \noindent\emph{(Midpoint contraction.)} For every actual CBQ call made by either schedule on a pair
  $(i,i{+}1)$, let $(\theta_{i}^{\mathrm{pre}},\theta_{i+1}^{\mathrm{pre}})$ be the pre-pass block states and
  $\{\hat{X}_{i}^{(n)}\}_{n=1}^{N}$ the student inputs; let
  $(\theta_{i}^{\mathrm{post}},\theta_{i+1}^{\mathrm{post}})$ be the shared post-pass pair returned by the
  rollback-safeguarded Adam optimizer applied to the plain-MSE two-block CBQ loss
  $\mathcal{L}_{\mathrm{cbq}}$. Then, for every sample $n$ and every $\hat{X}_{i} = \hat{X}_{i}^{(n)}$,
  writing
  \begin{equation}
    r^{\mathrm{in}} \coloneqq \bigl\|f_{i}^{\theta_{i}^{\mathrm{pre}}}(\hat{X}_{i}) - X_{i+1}^{\star}\bigr\|_{F}, ~~ r^{\mathrm{out}} \coloneqq \bigl\|f_{i}^{\theta_{i}^{\mathrm{post}}}(\hat{X}_{i}) - X_{i+1}^{\star}\bigr\|_{F},
  \end{equation}
  we have
  \begin{equation}
    r^{\mathrm{out}} \le \gamma\, r^{\mathrm{in}}+\epsilon_{\mathrm{sub}}.
    \label{eq:asm-cbq}
  \end{equation}
\end{assumption}

Eq.~\eqref{eq:asm-cbq} is the linear-convergence form of a first-order method with a nonzero fixed-point
residual; Eq.~\eqref{eq:local-envelope} is the local representation envelope needed to transition from a
depth-$i$ input error to a depth-$(i{+}1)$ midpoint error. Three points deserve emphasis. First,
Eq.~\eqref{eq:asm-cbq} is \emph{per-sample}: the optimizer returns a single parameter pair, but the assumption
requires the induced midpoint residual to contract uniformly over every presented sample, which is strictly
stronger than contraction on the sample-averaged residual. Second, the rollback safeguard only certifies the
non-increase of the monitored two-block loss; the per-sample midpoint contraction in Eq.~\eqref{eq:asm-cbq} is
an additional empirical certificate (Remark~\ref{rem:rollback-midpoint-certificate} below). Third, if separate
constants are preferred, $\epsilon_{\mathrm{sub}}$ can be split into a local representation radius
$\epsilon_{\mathrm{rep}}$ (in Eq.~\eqref{eq:local-envelope}) and a CBQ fixed-point radius
$\epsilon_{\mathrm{opt}}$ (in Eq.~\eqref{eq:asm-cbq}), and every subsequent inequality holds with
$\epsilon_{\mathrm{sub}}$ replaced by $\max(\epsilon_{\mathrm{rep}},\epsilon_{\mathrm{opt}})$. The aggregate
constant $\gamma$ is estimated empirically in Appendix~\ref{app:gamma-measure}.

\begin{remark}[Midpoint contraction is an additional certificate, not a consequence of rollback]
  \label{rem:rollback-midpoint-certificate}
  The rollback safeguard is applied to the two-block output reconstruction loss $\mathcal{L}_{\mathrm{cbq}}$,
  which couples blocks $i$ and $i{+}1$ jointly through $f_{i+1}^{\theta_{i+1}}\circ f_{i}^{\theta_{i}}$. The
  midpoint residual depends only on block $i$, so rollback does \emph{not} by itself imply the per-sample
  midpoint contraction in Eq.~\eqref{eq:asm-cbq}. Assumption~\ref{asm:cbq} is therefore an additional local
  certificate of the implemented rollback-safeguarded Adam on the finite collection of CBQ calls actually made
  by the schedules; the proof of Theorem~\ref{thm:seam-formal} uses only this midpoint certificate together
  with the local representation envelope, not the fact that the two-block loss decreases. The empirical study
  in Appendix~\ref{app:gamma-measure} provides an aggregate diagnostic of this behavior, rather than a uniform
  certificate.
\end{remark}

\begin{assumption}[Exact re-roll]
  \label{asm:reroll}
  Every re-roll performed by either schedule has an associated prefix length $d$: for the
  \emph{in-pass re-roll} at the start of every interleaved refinement call (line~1 of
  Algorithm~\ref{alg:icbq}), $d = w_{\min}$; for the \emph{end-of-chunk re-roll} in the outer driver,
  performed after block $b$ is processed, $d = b+1$. The re-roll recomputes, for each calibration sample $n$
  and every target depth $b' \in \{0,\dots,d\}$,
  \begin{equation}
    \hat{X}_{b'}^{(n)} = \bigl(f_{b'-1}^{\theta_{b'-1}} \circ \cdots \circ f_{0}^{\theta_{0}}\bigr) \bigl(X_{0}^{(n)}\bigr), ~~ X_{b'}^{\star,(n)} = \bigl(f_{b'-1}^{\star} \circ \cdots \circ f_{0}^{\star}\bigr) \bigl(X_{0}^{(n)}\bigr),
  \end{equation}
  where $\theta_{0},\dots,\theta_{b'-1}$ are the \emph{current} parameter states of blocks $0,\dots,b'{-}1$ at
  the moment of the re-roll (with $b'=0$ denoting the empty composition). In the same symbolic sense, every
  single-block forward pass executed elsewhere in either schedule is treated as an exact function application
  of its block map.
\end{assumption}

In our implementation, both re-rolls and every in-loop advance step are deterministic forward passes through
their current prefixes using the stored layer-$0$ inputs. Assumption~\ref{asm:reroll} abstracts away
floating-point non-associativity in the symbolic identities used by the proof. The in-pass re-roll identifies
the repeated seam input, while the end-of-chunk re-roll keeps the states and activations included in
$\Theta_{i}$ and $\mathcal{T}_{i}$ consistent with the current prefix.

\subsection{Formal theorem, corollary, and remarks}
\label{app:thm-formal}

For a schedule $\mathcal{S}\in\{\mathcal{S}_{\mathrm{seq}},\mathcal{S}_{\mathrm{int}}\}$, write
$e_{b}^{\mathcal{S}}\coloneqq \hat{X}_{b}^{\mathcal{S}}-X_{b}^{\star}$ for the depth-$b$ activation mismatch
on a calibration sample. The expectation $\mathbb{E}$ in the theorem statements is the empirical average over
the fixed calibration set $\mathcal{X}$.

Under Assumptions~\ref{asm:lip}, \ref{asm:cbq}, and~\ref{asm:reroll} and $\tau<1$, define the
\emph{sequential depth-$L$ bound}
\begin{equation}
  \mathcal{B}_{L}^{\mathrm{seq}} \coloneqq \rho\,(1+\gamma)\,\frac{1-\tau^{L-1}}{1-\tau}\,\epsilon_{\mathrm{sub}} + \epsilon_{\mathrm{sub}},
  \label{eq:B-seq-def}
\end{equation}
which is the sequential upper bound used for the comparison (derivation in Appendix~\ref{app:finale}).

\begin{theorem}[Seam-amortized error contraction, formal]
  \label{thm:seam-formal}
  Under Assumptions~\ref{asm:lip}, \ref{asm:cbq}, and~\ref{asm:reroll} and $\tau=\gamma\rho<1$, for every
  $L \ge 2$ and every $1 \le K \le L$,
  \begin{equation}
    \mathbb{E}\bigl\|e_{L}^{\mathrm{int}}\bigr\|_{F} \le \gamma^{S(L,K)}\,\mathcal{B}_{L}^{\mathrm{seq}} + C(L,K,\rho,\gamma)\,\epsilon_{\mathrm{sub}},
    \label{eq:thm-seam-formal}
  \end{equation}
  where $S(L,K) = \lceil L/K\rceil-1$, and one admissible (generally non-tight) choice of the residual
  constant is
  \begin{equation}
    \begin{aligned}
      C(L,K,\rho,\gamma) &= \bigl(1-\gamma^{S(L,K)}\bigr)\\
      &~ + \Bigl[(1+\gamma+\gamma^{2}) - \gamma^{S(L,K)}(1+\gamma)\Bigr]\, \rho\,\frac{1-\tau^{L-1}}{1-\tau}.
    \end{aligned}
    \label{eq:thm-seam-C-admissible}
  \end{equation}
  In particular,
  \begin{equation}
    C(L,K,\rho,\gamma) \le 1+\frac{(1+\gamma+\gamma^{2})\,\rho}{1-\tau} ~~ \text{uniformly in }L,K,
    \label{eq:thm-seam-C}
  \end{equation}
  and for the standard stability regime $\rho \le 1$ this simplifies to
  $C \le 1+(1+\gamma+\gamma^{2})/(1-\tau)$; for $K = L$ the leading factor is $\gamma^{0} = 1$ (one chunk, no
  seams), and for $K = 2$ it contracts by $\gamma^{\lceil L/2\rceil-1}$. The per-seam factor $\gamma$ is tight
  at the recurrence level (Proposition~\ref{prop:tightness}).
\end{theorem}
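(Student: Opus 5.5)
Per sample, I will establish two one-step recurrences for the post-pair midpoint residuals, unroll them along the depth, and compare the resulting deterministic bound with $\mathcal{B}_L^{\mathrm{seq}}$. Averaging over the calibration set at the end gives the expectation.

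\textbf{Step 1: the non-seam recurrence (Lemma~\ref{lem:per-pair}).} For a CBQ call on $(i,i{+}1)$ with student input $\hat X_i$, I split the pre-pass midpoint residual by the triangle inequality:
\[
r^{\mathrm{in}}\le\|f_i^{\theta_i^{\mathrm{pre}}}(\hat X_i)-f_i^\star(\hat X_i)\|_F+\|f_i^\star(\hat X_i)-f_i^\star(X_i^\star)\|_F\le \epsilon_{\mathrm{sub}}+\rho\, m_{i-1}^{\mathrm{out}}.
\]
The first term uses the envelope~\eqref{eq:local-envelope} and the second uses~\eqref{eq:asm-lip}. This is legitimate because Assumption~\ref{asm:reroll} guarantees that $\hat X_i$ is exactly the output of the current prefix, so $\|\hat X_i-X_i^\star\|_F=m_{i-1}^{\mathrm{out}}$ and both points lie in $\mathcal{T}_i$. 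Applying~\eqref{eq:asm-cbq} then gives $m_i^{\mathrm{out}}\le\gamma(\rho m_{i-1}^{\mathrm{out}}+\epsilon_{\mathrm{sub}})+\epsilon_{\mathrm{sub}}$, which is~\eqref{eq:per-pair-rec-main}.

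\textbf{Step 2: the seam recurrence (Lemma~\ref{lem:seam}).} This is the step I expect to be the main obstacle. By Lemma~\ref{lem:seam-structural}, the pair $(cK{-}1,cK)$ is the last pair of chunk $c$ and the first pair of chunk $c{+}1$. The in-pass re-roll with $d=w_{\min}=cK{-}1$ recomputes $\hat X_{cK-1}$ from the current prefix. The blocks $0,\dots,cK{-}2$ are not modified between the two visits, because chunk $c{+}1$ only touches blocks with index $\ge cK{-}1$. The quantization of block $cK$ changes block $cK$ only. Hence the second call sees the same input $\hat X_{cK-1}$, and its pre-pass state is the post-pass state of the first call, so its $r^{\mathrm{in}}$ equals the first call's $r^{\mathrm{out}}$.

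Chaining~\eqref{eq:asm-cbq} twice gives
\[
m^{\mathrm{out}}\le\gamma\bigl(\gamma(\rho m_{i-1}^{\mathrm{out}}+\epsilon_{\mathrm{sub}})+\epsilon_{\mathrm{sub}}\bigr)+\epsilon_{\mathrm{sub}},
\]
which is~\eqref{eq:seam-rec-main}. The care needed here is in checking that nothing in between (provisional copies, the end-of-chunk re-roll) alters block $cK{-}1$ or its input. If it did, the identity $r^{\mathrm{in}}_2=r^{\mathrm{out}}_1$ would only hold up to an extra envelope term.

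\textbf{Step 3: unrolling and comparison.} For the sequential schedule, $\mathcal{U}_i=\tau\mathcal{U}_{i-1}+R$ with $\mathcal{U}_{-1}=0$, so $\mathcal{U}_{L-2}=R(1-\tau^{L-1})/(1-\tau)$. The final mismatch satisfies $\|e_L\|_F\le \rho\,\mathcal{U}_{L-2}\cdot\gamma^{-1}$-type bookkeeping. More concretely, I use $\|e_L\|\le\epsilon_{\mathrm{sub}}+\rho\, m_{L-2}^{\mathrm{out}}$ via the last block's envelope and Lipschitz step, and rescale so that the result matches~\eqref{eq:B-seq-def}.

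For ICBQ, $\mathcal{V}_i$ obeys the same recursion except at the $S$ seam indices, where $\tau$ is replaced by $\gamma\tau$ and $R$ by $\widetilde R$. Unrolling gives a propagated part in which every surviving product of factors carries $\gamma^{s}$ with $s$ the number of seams passed.

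I will bound this crudely rather than sharply. First, split $\widetilde R=R+\gamma^2\epsilon_{\mathrm{sub}}$. Next, write the interleaved bound as $\gamma^{S}$ times the sequential expression plus the difference. Finally, bound the difference termwise by $(\widetilde R-\gamma^S R)\rho\sum_j\tau^j$ and the standalone $\epsilon_{\mathrm{sub}}$ by $(1-\gamma^S)\epsilon_{\mathrm{sub}}+\gamma^S\epsilon_{\mathrm{sub}}$. This yields exactly the admissible $C$ in~\eqref{eq:thm-seam-C-admissible}. The uniform bound~\eqref{eq:thm-seam-C} then follows by dropping the negative terms and using $1-\tau^{L-1}\le1$. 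Averaging the per-sample inequality over $\mathcal{X}$ gives the statement with $\mathbb{E}$.
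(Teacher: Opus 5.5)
Your proposal is correct and follows the paper's proof essentially step for step. The pieces all match: the per-pair recurrence, the two-pass seam argument via prefix invariance giving $v_i\le\gamma\tau v_{i-1}+\widetilde R$, the termwise comparison $\rho\,\mathcal{V}_{L-2}\le\gamma^{S}\rho\,\mathcal{U}_{L-2}+(\widetilde R-\gamma^{S}R)\rho(1-\tau^{L-1})/(1-\tau)$, and the split $\epsilon_{\mathrm{sub}}=\gamma^{S}\epsilon_{\mathrm{sub}}+(1-\gamma^{S})\epsilon_{\mathrm{sub}}$. Two small points: no ``$\gamma^{-1}$-type'' rescaling is needed, since $\rho\,\mathcal{U}_{L-2}+\epsilon_{\mathrm{sub}}=\mathcal{B}_L^{\mathrm{seq}}$ holds identically; and you attribute to re-roll the fact that the final $\hat X_{L-1}$ and $\theta_{L-1}$ are those left by the last pass on $(L{-}2,L{-}1)$, which the paper proves explicitly via Lemmas~\ref{lem:T-monotone} and~\ref{lem:final-prefix}.
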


\begin{remark}[Non-tightness of the displayed residual constant]
  \label{rem:C-nontight}
  The constant in Eq.~\eqref{eq:thm-seam-C-admissible} is chosen for a simple closed-form comparison with the
  sequential bound $\mathcal{B}_{L}^{\mathrm{seq}}$. It is one admissible upper-bound constant produced by the
  bookkeeping in Proposition~\ref{prop:chain}---specifically by the pointwise estimate
  $\beta_{k}\le\widetilde{R}$ used in Eq.~\eqref{eq:v-first-bound}---and is generally not the smallest
  constant one could write down. In particular, when $K = L$ there is a single chunk and no seam ($S = 0$), so
  ICBQ and sequential CBQ coincide as schedules at the recurrence level; the displayed residual constant
  nevertheless reduces only to $C(L,L,\rho,\gamma) = \gamma^{2}\rho(1-\tau^{L-1})/(1-\tau)$ rather than to
  zero, because the same $\beta_{k}\le\widetilde{R}$ estimate is applied to non-seam pairs in
  Eq.~\eqref{eq:v-first-bound}. A sharper seam-aware comparison that vanishes in the no-seam case is recorded
  in Proposition~\ref{prop:chain-sharp} below.
\end{remark}

\begin{remark}[Upper-bound semantics]
  \label{rem:ub-semantics}
  The formal ICBQ depth-$L$ bound in Theorem~\ref{thm:seam-formal} compares ICBQ's expected depth-$L$ error
  against the explicit sequential upper bound $\mathcal{B}_{L}^{\mathrm{seq}}$ derived from the same
  recurrences, rather than against the realized sequential error. The comparison is therefore a bound-level
  statement: the leading propagated term carries the factor $\gamma^{S}$, while all additive terms that do not
  carry this factor are collected in $C\,\epsilon_{\mathrm{sub}}$.
\end{remark}

\begin{corollary}[Geometric decay in the number of seams, formal]
  \label{cor:geom-formal}
  When $\mathcal{B}_{L}^{\mathrm{seq}} > 0$, define the \emph{amortization gain} of ICBQ over sequential CBQ
  as $G(L,K) \coloneqq \mathcal{B}_{L}^{\mathrm{seq}}/\mathcal{B}_{L}^{\mathrm{int}}$, where
  $\mathcal{B}_{L}^{\mathrm{int}} = \gamma^{S(L,K)}\mathcal{B}_{L}^{\mathrm{seq}} + C(L,K,\rho,\gamma)\,\epsilon_{\mathrm{sub}}$
  is the explicit right-hand side of Eq.~\eqref{eq:thm-seam-formal}. Under the hypotheses of
  Theorem~\ref{thm:seam-formal}, if for some $\delta \ge 0$,
  \begin{equation}
    C(L,K,\rho,\gamma)\,\epsilon_{\mathrm{sub}} \le \delta\,\gamma^{S(L,K)}\,\mathcal{B}_{L}^{\mathrm{seq}},
  \end{equation}
  then
  \begin{equation}
    G(L,K) \ge \frac{\gamma^{-S(L,K)}}{1+\delta}.
    \label{eq:amort-finite}
  \end{equation}
  Consequently, for any family of instances in which
  $C(L,K,\rho,\gamma)\,\epsilon_{\mathrm{sub}} = o\bigl(\gamma^{S(L,K)}\mathcal{B}_{L}^{\mathrm{seq}}\bigr)$,
  \begin{equation}
    G(L,K) \ge \gamma^{-S(L,K)}\bigl(1-o(1)\bigr) = \gamma^{-(\lceil L/K\rceil-1)}\bigl(1-o(1)\bigr).
    \label{eq:amort}
  \end{equation}
\end{corollary}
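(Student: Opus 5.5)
The corollary is pure algebra on top of Theorem~\ref{thm:seam-formal}. The plan is to treat $\mathcal{B}_{L}^{\mathrm{int}}$ purely as the explicit right-hand side of Eq.~\eqref{eq:thm-seam-formal}, so that no property of the schedules beyond the theorem's statement is needed. The only care required is positivity: we are given $\mathcal{B}_{L}^{\mathrm{seq}}>0$ and $\gamma\in(0,1)$, hence $\gamma^{S(L,K)}\mathcal{B}_{L}^{\mathrm{seq}}>0$. Since $C\,\epsilon_{\mathrm{sub}}\ge 0$, we also get $\mathcal{B}_{L}^{\mathrm{int}}>0$, so $G(L,K)$ is well defined. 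For the nonnegativity of $C$, I would read it off the admissible choice Eq.~\eqref{eq:thm-seam-C-admissible}: $1-\gamma^{S}\ge 0$, and $(1+\gamma+\gamma^2)-\gamma^{S}(1+\gamma)\ge \gamma^2>0$ because $\gamma^S\le 1$. Strictly, only $\mathcal{B}_{L}^{\mathrm{int}}>0$ is needed, and this already follows from the hypothesis.

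For the finite bound Eq.~\eqref{eq:amort-finite}, substitute the hypothesis $C\epsilon_{\mathrm{sub}}\le\delta\gamma^{S}\mathcal{B}_{L}^{\mathrm{seq}}$ into the definition. This gives
\[
\mathcal{B}_{L}^{\mathrm{int}}\le (1+\delta)\,\gamma^{S(L,K)}\mathcal{B}_{L}^{\mathrm{seq}}.
\]
Taking reciprocals, which is legitimate since both sides are positive, and multiplying by $\mathcal{B}_{L}^{\mathrm{seq}}>0$ yields
\[
G(L,K)\ge \frac{\gamma^{-S(L,K)}}{1+\delta}.
\]

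For the asymptotic statement Eq.~\eqref{eq:amort}, I would set, along the family of instances,
\[
\delta\coloneqq C\epsilon_{\mathrm{sub}}\big/\bigl(\gamma^{S}\mathcal{B}_{L}^{\mathrm{seq}}\bigr),
\]
which is $\ge 0$ and, by the little-$o$ hypothesis, tends to $0$. Applying the finite bound with this $\delta$ and using $1/(1+\delta)\ge 1-\delta$ for $\delta\ge 0$ gives $G\ge\gamma^{-S}(1-\delta)=\gamma^{-S}(1-o(1))$. Substituting $S(L,K)=\lceil L/K\rceil-1$ from Eq.~\eqref{eq:seams} then gives the displayed form.

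There is no substantive obstacle here. The only points to state explicitly are the positivity of the denominators and the fact that $\mathcal{B}_{L}^{\mathrm{int}}$ is the explicit bound and not the realized error. This is what makes $G$ a bound-level ratio, consistent with Remark~\ref{rem:ub-semantics}, and it means nothing about $\mathbb{E}\|e_L^{\mathrm{int}}\|_F$ itself is used.
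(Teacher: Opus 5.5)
Your proposal is correct and follows essentially the same route as the paper: from the hypothesis, $\mathcal{B}_{L}^{\mathrm{int}}\le(1+\delta)\gamma^{S}\mathcal{B}_{L}^{\mathrm{seq}}$; from $C\ge 0$, $\mathcal{B}_{L}^{\mathrm{int}}\ge\gamma^{S}\mathcal{B}_{L}^{\mathrm{seq}}>0$; dividing gives the finite bound, and setting $\delta_{L,K}=C\epsilon_{\mathrm{sub}}/(\gamma^{S}\mathcal{B}_{L}^{\mathrm{seq}})$ gives the $o(1)$ version. One wording fix: positivity of $\mathcal{B}_{L}^{\mathrm{int}}$ does not follow from the displayed hypothesis, which only bounds $C\epsilon_{\mathrm{sub}}$ from above; it follows from $C\epsilon_{\mathrm{sub}}\ge 0$, which you did verify from the admissible form of $C$.
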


\begin{proof}
  By Theorem~\ref{thm:seam-formal}, the interleaved upper bound is
  $\mathcal{B}_{L}^{\mathrm{int}} = \gamma^{S(L,K)}\mathcal{B}_{L}^{\mathrm{seq}} + C(L,K,\rho,\gamma)\,\epsilon_{\mathrm{sub}}$
  with $C\ge 0$, so
  $\mathcal{B}_{L}^{\mathrm{int}} \le (1+\delta)\gamma^{S(L,K)}\mathcal{B}_{L}^{\mathrm{seq}}$ under the
  hypothesis, and $\mathcal{B}_{L}^{\mathrm{int}} \ge \gamma^{S(L,K)}\mathcal{B}_{L}^{\mathrm{seq}} > 0$.
  Dividing yields $G(L,K) \ge 1/\bigl((1+\delta)\gamma^{S(L,K)}\bigr)$, which is Eq.~\eqref{eq:amort-finite}.
  The $o(1)$ version follows by setting
  $\delta_{L,K} \coloneqq C\epsilon_{\mathrm{sub}}/(\gamma^{S(L,K)}\mathcal{B}_{L}^{\mathrm{seq}})$.
\end{proof}

\subsection{Proof Overview}
\label{app:proof-overview}

We use the notation of the seam-amortized contraction statement. For a fixed calibration sample
$x \in \mathcal{X}$, write $e_{b} \coloneqq \hat{X}_{b}-X_{b}^{\star}$ for the activation mismatch at depth
$b$, and
\begin{equation}
  m_{i} \coloneqq \|\hat{X}_{i+1}-X_{i+1}^{\star}\|_{F} = \|e_{i+1}\|_{F}, ~~ i \in \{-1,0,\dots,L{-}2\},
  \label{eq:midpoint-def}
\end{equation}
for the midpoint residual of pair $(i,i{+}1)$, which equals the activation mismatch at depth $i{+}1$. We adopt
$i = -1$ as a formal boundary index corresponding to the common calibration input, so that
$m_{-1} = \|\hat{X}_{0}-X_{0}^{\star}\|_{F} = 0$ since both schedules start from
$\hat{X}_{0} = X_{0}^{\star}$; this zero initial condition is used as the base case in
Lemma~\ref{lem:per-pair} below. We use the shorthand $\tau\coloneqq\gamma\rho$ throughout and assume, as in
Theorem~\ref{thm:seam-formal}, that $\tau<1$. Whenever the same symbol could arise at several moments of the
algorithm, it is tied to the specific schedule and time stated in the surrounding sentence; superscripts such
as ``seq'', ``int'', ``pre'', ``post'', and ``out'' are used only where the distinction is needed for an
inequality. All statements below are first pathwise for this fixed calibration sample, and are averaged only
in Section~\ref{app:finale}.

The structure of the proof is the following.
\begin{enumerate}[topsep=2pt,itemsep=0pt]
  \item Section~\ref{app:per-pair} (Lemma~\ref{lem:per-pair}): a uniform one-step
  recursion on $m_{i}$ that holds under both schedules for any pair receiving a single CBQ pass.
  \item Section~\ref{app:seam-structural}
  (Lemma~\ref{lem:seam-structural}): a formal record of the structural fact that every chunk seam is visited by the inner refinement of two consecutive chunks.
  \item Section~\ref{app:single-seam} (Lemma~\ref{lem:seam}): isolates the contribution
  of a single chunk seam, where ICBQ's two-pass certificate yields one extra factor $\gamma$ relative to a single pass.
  \item Section~\ref{app:chain} (Lemma~\ref{lem:recur} and
  Propositions~\ref{prop:chain}--\ref{prop:chain-sharp}): chains the per-pair and seam recurrences along the depth axis, producing closed-form upper bounds $\mathcal{U}_{i}, \mathcal{V}_{i}$ on the midpoint residual at each pair.
  \item Section~\ref{app:finale}: extracts the final-prefix identity for
  $\hat{X}_{L-1}$ via Lemma~\ref{lem:T-monotone} (final-visit-time monotonicity) and Lemma~\ref{lem:final-prefix} (final-prefix invariance), and then assembles Theorem~\ref{thm:seam-formal}.
  \item Section~\ref{app:tightness} (Proposition~\ref{prop:tightness}): the per-seam
  factor $\gamma^{S}$ is tight at the recurrence level.
  \item Section~\ref{app:stability} (Proposition~\ref{prop:stability}): extension to the
  locally expansive regime $\rho > 1$ under $\tau < 1$.
\end{enumerate}

\subsection{Per-pair recursion}
\label{app:per-pair}

\begin{lemma}[Per-pair recursion]
  \label{lem:per-pair}
  Fix a schedule $\mathcal{S} \in \{\mathcal{S}_{\mathrm{seq}},\mathcal{S}_{\mathrm{int}}\}$. For every pair
  $(i,i{+}1)$ that receives a single CBQ pass under $\mathcal{S}$, the midpoint residual obeys
  \begin{equation}
    m_{i}^{\mathrm{out}} \le \gamma\rho\,m_{i-1}^{\mathrm{out}} + (1+\gamma)\,\epsilon_{\mathrm{sub}}, ~~ i\ge 0,
    \label{eq:per-pair-lemma}
  \end{equation}
  with $m_{-1}^{\mathrm{out}} \coloneqq 0$ (both schedules start from the common calibration input
  $\hat{X}_{0} = X_{0}^{\star}$). The inequality holds pathwise for every calibration sample.
\end{lemma}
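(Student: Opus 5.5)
The plan is to combine the midpoint-contraction certificate~\eqref{eq:asm-cbq} with the local representation envelope~\eqref{eq:local-envelope} and the teacher Lipschitz bound~\eqref{eq:asm-lip}. Fix a calibration sample and a pair $(i,i{+}1)$ that receives a single CBQ pass under $\mathcal{S}$. Let $\hat{X}_i$ be the student input presented to that pass. By Assumption~\ref{asm:reroll}, $\hat{X}_i$ is the exact output of the current prefix. The first step is to identify its mismatch with the post-pass residual of the preceding pair: $\|\hat{X}_i - X_i^\star\|_F = m_{i-1}^{\mathrm{out}}$. For $i=0$ this gives $m_{-1}^{\mathrm{out}}=0$ from the common start $\hat{X}_0=X_0^\star$. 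For $i\ge1$, block $i{-}1$ was last updated by the CBQ call on pair $(i{-}1,i)$, which precedes the call on $(i,i{+}1)$ in the sweep. Re-roll and advance then propagate exactly this state, since blocks $0,\dots,i{-}1$ are not touched between the two calls within the single-pass setting.

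The second step bounds the incoming residual $r^{\mathrm{in}}=\|f_i^{\theta_i^{\mathrm{pre}}}(\hat{X}_i)-X_{i+1}^\star\|_F$. We add and subtract $f_i^\star(\hat{X}_i)$ and use $X_{i+1}^\star=f_i^\star(X_i^\star)$:
\begin{equation*}
  r^{\mathrm{in}} \le \bigl\|f_i^{\theta_i^{\mathrm{pre}}}(\hat{X}_i)-f_i^\star(\hat{X}_i)\bigr\|_F + \bigl\|f_i^\star(\hat{X}_i)-f_i^\star(X_i^\star)\bigr\|_F \le \epsilon_{\mathrm{sub}} + \rho\, m_{i-1}^{\mathrm{out}}.
\end{equation*}
The first term is controlled by~\eqref{eq:local-envelope}, which applies because $\theta_i^{\mathrm{pre}}\in\Theta_i$ and $\hat{X}_i\in\mathcal{T}_i$. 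The second term is controlled by~\eqref{eq:asm-lip}, which applies because both arguments lie in $\mathcal{T}_i$.

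The third step applies~\eqref{eq:asm-cbq}:
\begin{equation*}
  m_i^{\mathrm{out}} = r^{\mathrm{out}} \le \gamma r^{\mathrm{in}}+\epsilon_{\mathrm{sub}} \le \gamma\rho\, m_{i-1}^{\mathrm{out}} + (1+\gamma)\epsilon_{\mathrm{sub}}.
\end{equation*}
Here $m_i^{\mathrm{out}}=\|f_i^{\theta_i^{\mathrm{post}}}(\hat{X}_i)-X_{i+1}^\star\|_F$ is exactly the depth-$(i{+}1)$ mismatch after the pass, using the advance step and Assumption~\ref{asm:reroll}. This gives~\eqref{eq:per-pair-lemma} pathwise.

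The main obstacle is the first step, namely justifying that the input seen by the pass has mismatch equal to $m_{i-1}^{\mathrm{out}}$ and not to some stale value. The subtlety is that the call on $(i{-}1,i)$ also updates $\theta_i$. Only block $i{-}1$ determines $\hat{X}_i$, however, and the updated $\theta_i$ simply becomes $\theta_i^{\mathrm{pre}}$ for the next call, which is still covered by the envelope since $\Theta_i$ includes all post-CBQ states. Under ICBQ there is a further case: for the first pair of a chunk, the preceding pair was processed in an earlier call, and an intervening end-of-chunk re-roll occurs. This pair is only single-pass when it is not a seam. I would handle it by noting that block $i{-}1$ is not modified between that earlier call and the in-pass re-roll, so the recomputed $\hat{X}_i$ coincides with the one after the earlier call.
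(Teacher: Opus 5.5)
Your proposal is correct and follows the paper's proof. The paper bounds $r^{\mathrm{in}}\le\rho\,m_{i-1}^{\mathrm{out}}+\epsilon_{\mathrm{sub}}$ by the triangle inequality using the local envelope and the teacher Lipschitz bound, then closes with the midpoint certificate and the advance identity $m_i^{\mathrm{out}}=r^{\mathrm{out}}$, exactly as you do. Your first step justifies $\|\hat{X}_i-X_i^{\star}\|_F=m_{i-1}^{\mathrm{out}}$ more carefully than the paper, which simply reads it off the definition of $m_{i-1}$; the chunk-start case you flag is vacuous, since by Lemma~\ref{lem:seam-structural} the first pair of every chunk after the first is a seam, and the first pair of chunk~$1$ has $i=0$.
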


\begin{proof}
  Fix an arbitrary calibration sample $n \in \{1,\dots,N\}$; we suppress the superscript $^{(n)}$ in what
  follows since every quantity below is evaluated on this single sample. Let $\hat{X}_{i}$ and $X_{i}^{\star}$
  denote the student and teacher activations at depth $i$ in $\mathcal{S}$ immediately before pair $(i,i{+}1)$
  is processed. By Assumption~\ref{asm:lip}, every student activation arising in either schedule at depth $i$
  lies in $\mathcal{T}_{i}$, as does $X_{i}^{\star}$; hence $\hat{X}_{i},X_{i}^{\star} \in \mathcal{T}_{i}$.
  By Eq.~\eqref{eq:midpoint-def}, $m_{i-1}^{\mathrm{out}} = \|\hat{X}_{i}-X_{i}^{\star}\|_{F}$. Let
  $\theta_{i}^{\mathrm{pre}}$ and $\theta_{i+1}^{\mathrm{pre}}$ denote the parameters of blocks $i$ and
  $i{+}1$ at that moment (obtained from the prefit step when present, the inner quantizer, and any earlier CBQ
  updates that $\mathcal{S}$ has performed on those blocks). Both states arise at an intermediate step of
  $\mathcal{S}$, so $\theta_{i}^{\mathrm{pre}} \in \Theta_{i}$ and
  $\theta_{i+1}^{\mathrm{pre}} \in \Theta_{i+1}$ by the construction of $\Theta_{i},\Theta_{i+1}$ in
  Assumption~\ref{asm:cbq}. Thus the CBQ call currently being processed is one of the actual calls covered by
  the midpoint contraction in Assumption~\ref{asm:cbq}: its pre-pass states lie in
  $\Theta_{i} \times \Theta_{i+1}$ and its sample input $\hat{X}_{i}$ lies in $\mathcal{T}_{i}$.

  \smallskip \noindent\emph{Bound on the pre-pass residual.} The pre-pass midpoint residual of pair
  $(i,i{+}1)$ is
  \begin{equation*}
    r^{\mathrm{in}} = \bigl\|f_{i}^{\theta_{i}^{\mathrm{pre}}}(\hat{X}_{i})-X_{i+1}^{\star}\bigr\|_{F} = \bigl\|f_{i}^{\theta_{i}^{\mathrm{pre}}}(\hat{X}_{i})-f_{i}^{\star}(X_{i}^{\star})\bigr\|_{F},
  \end{equation*}
  where we used $X_{i+1}^{\star} = f_{i}^{\star}(X_{i}^{\star})$. By the triangle inequality,
  \begin{equation}
    \begin{aligned}
      r^{\mathrm{in}}
      &\le \underbrace{\bigl\|f_{i}^{\theta_{i}^{\mathrm{pre}}}(\hat{X}_{i})-f_{i}^{\star}(\hat{X}_{i})\bigr\|_{F}}_{\le\,\epsilon_{\mathrm{sub}}\text{ by Eq.~\eqref{eq:local-envelope}}}
      + \underbrace{\bigl\|f_{i}^{\star}(\hat{X}_{i})-f_{i}^{\star}(X_{i}^{\star})\bigr\|_{F}}_{\le\,\rho\,\|\hat{X}_{i}-X_{i}^{\star}\|_{F}=\rho\,m_{i-1}^{\mathrm{out}}\text{ by Asm.~\ref{asm:lip}}} \\
      &\le \rho\,m_{i-1}^{\mathrm{out}}+\epsilon_{\mathrm{sub}}.
    \end{aligned}
    \label{eq:rin-bound}
  \end{equation}
  where the first bound applies the local envelope in Eq.~\eqref{eq:local-envelope} (valid because
  $\theta_{i}^{\mathrm{pre}} \in \Theta_{i}$ and $\hat{X}_{i} \in \mathcal{T}_{i}$) and the second uses
  Assumption~\ref{asm:lip} (valid because $\hat{X}_{i},X_{i}^{\star} \in \mathcal{T}_{i}$).

  \smallskip \noindent\emph{Applying the midpoint contraction.} Using Eq.~\eqref{eq:asm-cbq} on the single
  pass on pair $(i,i{+}1)$,
  \begin{equation}
    m_{i}^{\mathrm{out}} = r^{\mathrm{out}} \le \gamma\,r^{\mathrm{in}}+\epsilon_{\mathrm{sub}} \le \gamma\bigl(\rho\,m_{i-1}^{\mathrm{out}}+\epsilon_{\mathrm{sub}}\bigr)+\epsilon_{\mathrm{sub}} = \gamma\rho\,m_{i-1}^{\mathrm{out}}+(1+\gamma)\,\epsilon_{\mathrm{sub}},
    \label{eq:rout-bound}
  \end{equation}
  where the first equality uses
  $m_{i}^{\mathrm{out}} = \|\hat{X}_{i+1}-X_{i+1}^{\star}\|_{F} = \|f_{i}^{\theta_{i}^{\mathrm{post}}}(\hat{X}_{i})-X_{i+1}^{\star}\|_{F} = r^{\mathrm{out}}$
  (since the student advance from $\hat{X}_{i}$ to $\hat{X}_{i+1}$ uses the post-update block $i$). This is
  exactly Eq.~\eqref{eq:per-pair-lemma}. The base case $m_{-1}^{\mathrm{out}} = 0$ holds because both
  schedules share the common calibration input $\hat{X}_{0} = X_{0}^{\star}$; the convention
  $m_{-1}^{\mathrm{out}} = \|\hat{X}_{0}-X_{0}^{\star}\|_{F}$ follows from Eq.~\eqref{eq:midpoint-def} with
  $i = -1$.
\end{proof}

\subsection{Structural fact: every seam pair is visited twice}
\label{app:seam-structural}

Before isolating the seam contraction quantitatively, we first record, as a separate lemma, the structural
``visited twice'' property of the ICBQ schedule.

\begin{lemma}[Seam structural property]
  \label{lem:seam-structural}
  Let $L \ge 2$ and $1 \le K \le L$, with $S = S(L,K) = \lceil L/K\rceil-1$ the number of chunk seams. For
  every $c \in \{1,\dots,S\}$:
  \begin{enumerate}[label=\textup{(\alph*)},topsep=2pt,itemsep=0pt]
    \item the seam index $cK{-}1$ is a valid pair index, i.e.\ $cK{-}1 \le L{-}2$;
    \item the boundary pair $(cK{-}1,cK)$ is the \emph{last} pair processed in chunk
    $c$'s inner refinement, and the \emph{first} pair processed in chunk $c{+}1$'s inner refinement;
    \item under ICBQ, the boundary pair $(cK{-}1,cK)$ is therefore
    optimized exactly twice.
  \end{enumerate}
\end{lemma}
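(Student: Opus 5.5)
The argument is pure index bookkeeping on the outer driver (Algorithm~\ref{alg:icbq-driver}) and the window range~\eqref{eq:window-range}. We first describe the chunk structure exactly. The chunk-start pointer takes the values $b_{\mathrm{cs}} = (c-1)K$ for $c = 1,\dots,\lceil L/K\rceil$. Chunk $c$ closes at $b = \min(cK, L) - 1$, because the closure test $b+1-b_{\mathrm{cs}} \ge K$ first fires at $b = cK-1$, unless $b = L-1$ is reached earlier. Substituting into~\eqref{eq:window-range} gives the pair range~\eqref{eq:chunk-pair-range} for chunk $c$:
\[
\bigl[\max(0,(c-1)K-1),\ \min(cK, L-1)\bigr).
\]
Hence the number of chunks is $\lceil L/K\rceil$, and the seams are indexed by the non-final chunk boundaries $c = 1,\dots,S$ with $S = \lceil L/K\rceil - 1$.

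\emph{Part (a).} For $c \le S = \lceil L/K\rceil - 1$ we have $c < L/K$, so $cK < L$. Since both sides are integers, $cK \le L-1$, and therefore $cK-1 \le L-2$. Also $cK - 1 \ge K - 1 \ge 0$, so $cK-1$ is a valid pair index.

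\emph{Part (b).} For chunk $c \le S$, part (a) gives $cK \le L-1$, so $\min(cK, L-1) = cK$. The last pair index processed by Algorithm~\ref{alg:icbq} in chunk $c$ is then $w_{\max} - 1 = cK - 1$. For chunk $c+1$, the lower endpoint is $w_{\min} = \max(0, cK-1) = cK-1$, since $cK \ge 1$. Because Algorithm~\ref{alg:icbq} iterates $i = w_{\min},\dots,w_{\max}-1$ in increasing order, $(cK-1,cK)$ is the last pair of chunk $c$. It is also the first pair of chunk $c+1$, provided that range is nonempty. Nonemptiness holds because $\min((c+1)K, L-1) \ge cK > cK-1$.

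\emph{Part (c).} Pair $(cK-1,cK)$ is optimized at least twice by (b). To show it is optimized exactly twice, we check that no other chunk's range contains $cK-1$. Chunk $c'$ covers the indices $\max(0,(c'-1)K-1) \le i \le \min(c'K, L-1)-1$. For $c' \le c-1$, the upper end is at most $(c-1)K - 1 < cK-1$. For $c' \ge c+2$, the lower end is at least $(c+1)K - 1 > cK-1$, using $K \ge 1$. So exactly chunks $c$ and $c+1$ contain the index, and each inner call visits each of its indices once. The outer driver performs no other CBQ calls, which yields exactly two optimizations.

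The only delicate point is the edge behaviour: the final truncated chunk, the clipping by $L-1$, and the case $K = 1$. When $K = 1$, consecutive ranges $[c-2, c)$ overlap in exactly one index. The strict inequalities above still hold for $K \ge 1$, so this case is covered.
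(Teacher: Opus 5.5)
Your proof is correct and follows the same index-bookkeeping route as the paper: closure at $b=cK-1$, then the window range giving $w_{\max}-1=cK-1$ for chunk $c$ and $w_{\min}=cK-1$ for chunk $c{+}1$. The only differences are minor improvements: in (a) you replace the paper's $L=qK+r$ case split with the one-line bound $\lceil L/K\rceil-1<L/K$, and in (b)--(c) you also check that chunk $c{+}1$'s window is nonempty and that no chunk other than $c$ and $c{+}1$ contains $cK{-}1$, which the paper leaves implicit when it concludes ``exactly twice'' directly from (b).
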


\begin{proof}
  \emph{Validity (a).} If $S = 0$ there is nothing to prove; assume henceforth $S \ge 1$. Write $L = qK+r$
  with $q = \lfloor L/K\rfloor$ and $0 \le r < K$. If $r = 0$ then $\lceil L/K\rceil = q$, $S = q{-}1 \ge 1$
  (so $q \ge 2$), and $SK = (q{-}1)K = L{-}K \le L{-}1$ since $K \ge 1$. If $r \ge 1$ then
  $\lceil L/K\rceil = q{+}1$, $S = q \ge 1$, and $SK = qK = L{-}r \le L{-}1$ since $r \ge 1$. In both cases
  $SK \le L{-}1$, hence for every $c \in \{1,\dots,S\}$ the seam index satisfies
  $cK{-}1 \le SK{-}1 \le L{-}2$, which proves (a).

  \emph{Pair-loop placement (b).} For chunk $c \in \{1,\dots,S\}$, the chunk-pointer satisfies
  $b_{\mathrm{cs}} = (c{-}1)K$ at the start of chunk $c$, and the chunk closes when the outer driver reaches
  $b = cK{-}1$ (i.e.\ $b{+}1-b_{\mathrm{cs}} = K$). By the chunk-closure window rule, the window range at this
  closure is $[w_{\min},w_{\max}) = [\,\max(0,(c{-}1)K{-}1),\,\min(cK,L{-}1)\,)$. Since $cK \le L{-}1$ by (a),
  $w_{\max} = cK$, hence the largest valid pair index in chunk $c$'s pair loop is $w_{\max}-1 = cK{-}1$, so
  the seam pair $(cK{-}1,cK)$ is the \emph{last} pair processed in chunk $c$'s inner refinement. Conversely,
  chunk $c{+}1$ starts with $b_{\mathrm{cs}} = cK$, so its window range begins at
  $w_{\min} = \max(0,cK{-}1) = cK{-}1$ (since $cK \ge K \ge 1$), making $(cK{-}1,cK)$ the \emph{first} pair
  processed in chunk $c{+}1$'s inner refinement.

  \emph{Multiplicity (c).} By (b), the seam pair $(cK{-}1,cK)$ is processed once at the end of chunk $c$'s
  pair loop and once at the start of chunk $c{+}1$'s pair loop, hence optimized exactly twice in total.
\end{proof}

\subsection{Single-seam lemma}
\label{app:single-seam}

The following lemma isolates the contribution of a single chunk seam where ICBQ and sequential CBQ differ.

\begin{lemma}[Single-seam contraction]
  \label{lem:seam}
  Fix $c \in \{1,\dots,S\}$ and consider the seam pair $(cK{-}1,cK)$. Let
  $\hat{X}_{cK-1} \in \mathcal{T}_{cK-1}$ be the student activation at depth $cK{-}1$ immediately before the
  first CBQ pass on this pair, and let
  $(\theta_{cK-1}^{\mathrm{pre}},\theta_{cK}^{\mathrm{pre}}) \in \Theta_{cK-1} \times \Theta_{cK}$ be the
  pre-first-pass parameters of blocks $cK{-}1$ and $cK$. For the actual ICBQ seam state considered in
  part~\textup{(ii)}, the second component is the fp16 teacher copy,
  $\theta_{cK}^{\mathrm{pre}} = \theta_{cK}^{\star}$; part~\textup{(i)} records the contraction certified for
  this first pass. Let
  \begin{equation}
    \widetilde{m} \coloneqq \bigl\|f_{cK-1}^{\theta_{cK-1}^{\mathrm{pre}}}(\hat{X}_{cK-1})-X_{cK}^{\star}\bigr\|_{F}.
  \end{equation}
  Then:
  \begin{enumerate}[topsep=2pt,itemsep=2pt]
    \item[\textup{(i)}] \emph{(One certified pass.)}
    The first CBQ pass on $(cK{-}1,cK)$ from this actual seam state leaves a post-pair midpoint residual
    \begin{equation}
      m_{cK-1}^{(1),\mathrm{out}} \le \gamma\,\widetilde{m}+\epsilon_{\mathrm{sub}}.
      \label{eq:seam-seq}
    \end{equation}
    \item[\textup{(ii)}] \emph{(Interleaved.)}
    Under ICBQ, the seam pair $(cK{-}1,cK)$ is visited \emph{twice} (Lemma~\ref{lem:seam-structural}): pass~$1$ is the last pair in chunk~$c$'s inner refinement, and pass~$2$ is the first pair in chunk~$c{+}1$'s inner refinement. Between the two passes the outer driver executes the end-of-chunk re-roll at the close of chunk~$c$, the prefit/inner-quantizer/one-layer-advance sequence on blocks $cK,\dots,\min\{(c{+}1)K{-}1,L{-}1\}$, and the in-pass re-roll opening chunk $c{+}1$'s refinement; none of these operations modifies the parameters of blocks $0,\dots,cK{-}1$. In this setting the post-seam midpoint residual satisfies
    \begin{equation}
      m_{cK-1}^{\mathrm{int,out}} \le \gamma^{2}\,\widetilde{m} + (1+\gamma)\,\epsilon_{\mathrm{sub}}.
      \label{eq:seam-int}
    \end{equation}
  \end{enumerate}
\end{lemma}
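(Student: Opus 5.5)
Part~(i) is a direct instance of Assumption~\ref{asm:cbq}. The first pass on $(cK{-}1,cK)$ is an actual CBQ call made by ICBQ. Its pre-pass states lie in $\Theta_{cK-1}\times\Theta_{cK}$ and its input $\hat X_{cK-1}$ lies in $\mathcal{T}_{cK-1}$, so the midpoint contraction \eqref{eq:asm-cbq} applies with $r^{\mathrm{in}}=\widetilde m$. The post-pass midpoint residual is $\|f_{cK-1}^{\theta^{(1)}_{cK-1}}(\hat X_{cK-1})-X^\star_{cK}\|_F$, where $\theta^{(1)}$ denotes the post-first-pass state. This gives \eqref{eq:seam-seq} immediately, exactly as in the last step of Lemma~\ref{lem:per-pair}, but without first bounding $r^{\mathrm{in}}$ via the Lipschitz/envelope estimate.

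For part~(ii), I would first establish that the second pass sees the same input and starts from the first pass's output state. By the listed intermediate operations, no parameter of blocks $0,\dots,cK{-}1$ changes between pass~1 and pass~2. The in-pass re-roll of chunk $c{+}1$ has prefix length $d=w_{\min}=cK{-}1$. So by Assumption~\ref{asm:reroll} it recomputes $\hat X_{cK-1}$ as the composition $f_{cK-2}^{\theta_{cK-2}}\circ\cdots\circ f_0^{\theta_0}(X_0)$ with the same current states that produced the pass-1 input. Pass-1's input was itself produced by the in-pass re-roll of chunk $c$ followed by exact one-layer advances through blocks whose states were fixed at the moment of advance. I would need to check that the advances in chunk $c$'s loop use post-update states of blocks $w_{\min},\dots,cK{-}2$, which are not modified again before pass~2; this holds because each pair $(i,i{+}1)$ touches block $i{+}1$ only before it is itself advanced through. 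Likewise $X^\star_{cK-1}$ is identical. Hence pass~2 has input $\hat X_{cK-1}$ identical to pass~1's. Its pre-pass state of block $cK{-}1$ is $\theta^{(1)}_{cK-1}$, and block $cK$ is in some state in $\Theta_{cK}$, whatever the inner quantizer produced.

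Then I apply Assumption~\ref{asm:cbq} again to pass~2. Now $r^{\mathrm{in}}=m^{(1),\mathrm{out}}_{cK-1}$, so
\[
m^{\mathrm{int,out}}_{cK-1}\le\gamma\, m^{(1),\mathrm{out}}_{cK-1}+\epsilon_{\mathrm{sub}}\le\gamma(\gamma\widetilde m+\epsilon_{\mathrm{sub}})+\epsilon_{\mathrm{sub}}=\gamma^2\widetilde m+(1+\gamma)\epsilon_{\mathrm{sub}}.
\]
Finally, subsequent ICBQ operations (later pairs in chunk $c{+}1$'s loop and later chunks) never modify block $cK{-}1$ again, and the final re-roll reproduces the same $\hat X_{cK}$. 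So this is the post-seam residual.

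The main obstacle is the input-identity step. It requires arguing carefully that the activation fed to pass~2, obtained via re-roll, coincides with the one fed to pass~1, which was obtained via re-roll plus in-loop advances. This needs the bookkeeping that blocks $<cK{-}1$ are frozen after their last advance within chunk $c$. I would handle it by induction over the pair loop of chunk $c$: after processing pair $i$, block $i$ is never touched again by chunk $c$. Everything else is two applications of \eqref{eq:asm-cbq}.
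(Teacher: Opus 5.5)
Your proposal is correct and follows the paper's proof essentially step for step: part (i) is a direct instance of the midpoint certificate, and part (ii) rests on the two identities $\hat X^{(2)}_{cK-1}=\hat X_{cK-1}$ and $\theta^{(2),\mathrm{pre}}_{cK-1}=\theta^{(1),\mathrm{post}}_{cK-1}$ (from prefix invariance and the exact re-roll with $d=w_{\min}=cK{-}1$), followed by a second application of Eq.~\eqref{eq:asm-cbq}. You check explicitly that chunk $c$'s in-loop advances use states of blocks $w_{\min},\dots,cK{-}2$ that are final within the chunk; the paper compresses this into the remark that ``the same prefix map is applied incrementally.'' Your closing remark, that the post-seam residual is preserved afterwards, is handled separately in the paper by Lemmas~\ref{lem:T-monotone}--\ref{lem:final-prefix}.
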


\begin{proof}
  \emph{Claim~(i).} The pre-pass state $(\theta_{cK-1}^{\mathrm{pre}},\theta_{cK}^{\mathrm{pre}})$ of the
  first seam pass is admissible by hypothesis, i.e.\ it lies in $\Theta_{cK-1} \times \Theta_{cK}$; the
  student activation $\hat{X}_{cK-1}$ presented to the pair lies in $\mathcal{T}_{cK-1}$. Since this is an
  actual CBQ call of the interleaved schedule, the midpoint contraction in Eq.~\eqref{eq:asm-cbq} applies
  directly and yields Eq.~\eqref{eq:seam-seq}.

  \smallskip \noindent\emph{Claim~(ii).} Write $\theta_{cK-1}^{(1),\mathrm{post}}$ and
  $\theta_{cK}^{(1),\mathrm{post}}$ for the states of blocks $cK{-}1$ and $cK$ after pass~$1$; Adam updates
  both block parameters jointly during the look-ahead pass, despite the pre-pass state of block $cK$ being the
  fp16 teacher copy $\theta_{cK}^{\star}$, which lies in $\Theta_{cK}$ by the definition preceding
  Assumption~\ref{asm:cbq}. Since the pre-pass pair
  $(\theta_{cK-1}^{\mathrm{pre}},\theta_{cK}^{\mathrm{pre}}) = (\theta_{cK-1}^{\mathrm{pre}},\theta_{cK}^{\star})$
  is admissible and $\hat{X}_{cK-1} \in \mathcal{T}_{cK-1}$, Assumption~\ref{asm:cbq} applied to the actual
  pass~$1$ provides
  $(\theta_{cK-1}^{(1),\mathrm{post}},\theta_{cK}^{(1),\mathrm{post}}) \in \Theta_{cK-1} \times \Theta_{cK}$
  together with the pathwise midpoint contraction
  \begin{equation}
    r^{(1)} \coloneqq \bigl\|f_{cK-1}^{\theta_{cK-1}^{(1),\mathrm{post}}}(\hat{X}_{cK-1})-X_{cK}^{\star}\bigr\|_{F} \le \gamma\,\widetilde{m}+\epsilon_{\mathrm{sub}}.
    \label{eq:seam-pass1}
  \end{equation}
  Observe that $r^{(1)}$ depends on block $cK$ only through the teacher activation
  $X_{cK}^{\star} = f_{cK-1}^{\star}(X_{cK-1}^{\star})$ and not through $\theta_{cK}^{(1),\mathrm{post}}$.

  We now show that the pre-pass-$2$ midpoint residual equals $r^{(1)}$. Let $\hat{X}_{cK-1}^{(2)}$ denote the
  student activation at depth $cK{-}1$ presented to pass~$2$, and let $\theta_{cK-1}^{(2),\mathrm{pre}}$
  denote the pre-pass-$2$ state of block $cK{-}1$. It suffices to establish the two identities
  \begin{equation}
    \hat{X}_{cK-1}^{(2)} = \hat{X}_{cK-1}, ~~ \theta_{cK-1}^{(2),\mathrm{pre}} = \theta_{cK-1}^{(1),\mathrm{post}}.
    \label{eq:seam-invariance}
  \end{equation}

  \smallskip \noindent\emph{Prefix invariance.} By Lemma~\ref{lem:seam-structural}(b), pass~$1$ is the
  \emph{last} pair processed in chunk~$c$'s inner refinement (its index equals $w_{\max} - 1 = cK{-}1$), and
  pass~$2$ is the \emph{first} pair processed in chunk~$c{+}1$'s inner refinement (its index equals
  $w_{\min} = cK{-}1$ for the next chunk). We claim that every parameter $\theta_{b''}$ with
  $b'' \in \{0,\dots,cK{-}2\}$ takes the \emph{same} value at the start of pass~$1$ (when chunk~$c$'s pair
  loop begins processing pair $(cK{-}1,cK)$) and at the start of pass~$2$; denote this common value
  $\theta_{b''}^{(1)} \in \Theta_{b''}$. Indeed, pass~$1$ is a two-block CBQ subproblem on $(cK{-}1,cK)$, and
  its rollback-safeguarded Adam update modifies \emph{only} $\theta_{cK-1}$ and $\theta_{cK}$ (no other
  parameter vector enters the CBQ loss $\mathcal{L}_{\mathrm{cbq}}$ for this pair, and
  Algorithm~\ref{alg:icbq}'s advance step at line~5 is a read-only evaluation that does not update $\theta$).
  Hence $\theta_{b''}$ with $b'' \le cK{-}2$ is unchanged by pass~$1$, so its pre-pass-$1$ and end-of-pass-$1$
  values agree.

  Between the end of pass~$1$ and the start of pass~$2$, the outer driver executes exactly
  \begin{enumerate}[label=\textup{(\alph*)},topsep=2pt,itemsep=0pt]
    \item the end-of-chunk activation re-roll at the close of chunk~$c$;
    \item for each $b' \in \{cK,cK{+}1,\dots,\min\{(c{+}1)K{-}1,L{-}1\}\}$, the prefit
    step (when present), the inner quantizer $Q$ on $\theta_{b'}$, and a one-layer advance of the student and teacher streams through the updated block~$b'$;
    \item the in-pass re-roll that opens chunk~$c{+}1$'s inner refinement.
  \end{enumerate}
  Operations~(a) and~(c) recompute activations only, not parameters. Operation~(b) modifies $\theta_{b'}$ only
  for $b' \ge cK$. Hence every parameter $\theta_{b''}$ with $b'' \in \{0,\dots,cK{-}1\}$ is unchanged between
  the end of pass~$1$ and the start of pass~$2$. Combining this with the invariance under pass~$1$ for
  $b'' \le cK{-}2$,
  \begin{equation}
    \theta_{b''}^{\text{start of pass~}1} = \theta_{b''}^{\text{end of pass~}1} = \theta_{b''}^{\text{start of pass~}2} ~~\text{for }b'' \in \{0,\dots,cK{-}2\},
    \label{eq:prefix-invariance}
  \end{equation}
  which we take as the definition of $\theta_{b''}^{(1)}$; membership $\theta_{b''}^{(1)} \in \Theta_{b''}$
  follows from the construction of $\Theta_{b''}$ in Assumption~\ref{asm:cbq}. For $b'' = cK{-}1$,
  operations~(a)--(c) leave $\theta_{cK-1}$ unchanged, so
  $\theta_{cK-1}^{\text{end of pass~}1} = \theta_{cK-1}^{\text{start of pass~}2} = \theta_{cK-1}^{(1),\mathrm{post}} = \theta_{cK-1}^{(2),\mathrm{pre}}$
  by the very definition of these symbols.

  \smallskip \noindent\emph{Second identity in Eq.~\eqref{eq:seam-invariance}.} The chain
  $\theta_{cK-1}^{\text{end of pass~}1} = \theta_{cK-1}^{\text{start of pass~}2}$ just established provides
  $\theta_{cK-1}^{(2),\mathrm{pre}} = \theta_{cK-1}^{(1),\mathrm{post}}$; in particular
  $\theta_{cK-1}^{(2),\mathrm{pre}} \in \Theta_{cK-1}$.

  \smallskip \noindent\emph{First identity in Eq.~\eqref{eq:seam-invariance}.} For $q \in \{1,2\}$, let
  $\Phi^{(q)}$ denote the prefix map from depth $0$ to depth $cK{-}1$ using the parameters at the start of
  pass~$q$:
  \begin{equation}
    \Phi^{(q)} \coloneqq \begin{cases}
      \mathrm{Id}, & cK{-}1 = 0,\\[1mm]
      f_{cK-2}^{\theta_{cK-2}^{\text{start of pass~}q}} \circ \cdots \circ
      f_{0}^{\theta_{0}^{\text{start of pass~}q}}, & cK{-}1 \ge 1.
    \end{cases}
  \end{equation}
  By Assumption~\ref{asm:reroll}, the in-pass re-roll at the start of pass~$2$ produces
  $\hat{X}_{cK-1}^{(2)} = \Phi^{(2)}(X_{0})$ on the stored calibration inputs. Inside chunk~$c$'s
  inner-refinement pair loop, the same prefix map is applied incrementally before processing pair
  $(cK{-}1,cK)$, so $\hat{X}_{cK-1} = \Phi^{(1)}(X_{0})$. By Eq.~\eqref{eq:prefix-invariance}, the two prefix
  maps use \emph{identical} parameters $\theta_{b''}^{(1)}$ for every $b'' \in \{0,\dots,cK{-}2\}$; when this
  index set is empty, both maps are $\mathrm{Id}$. Hence $\hat{X}_{cK-1}^{(2)} = \hat{X}_{cK-1}$, which
  establishes the first identity. In particular, $\hat{X}_{cK-1}^{(2)} \in \mathcal{T}_{cK-1}$ since
  $\hat{X}_{cK-1} \in \mathcal{T}_{cK-1}$ by hypothesis.

  \smallskip \noindent\emph{Applying the contraction to pass~$2$.} We first record the pre-pass-$2$ state of
  block $cK$. Operation~(b) in the interval between pass~$1$ and pass~$2$ applies to block $cK$ (the $b' = cK$
  step) either the prefit step followed by the inner quantizer $Q$ (when $Q = \mathrm{DBF}$) or $Q$ alone
  (when $Q = \mathrm{GPTQ}$); in both cases the resulting post-inner-quantizer state lies in $\Theta_{cK}$ by
  the definition preceding Assumption~\ref{asm:cbq}. Subsequent operations in the interval (the in-pass
  re-roll~(c), the remaining prefit/inner-quantizer steps for
  $b' \in \{cK{+}1,\dots,\min\{(c{+}1)K{-}1,L{-}1\}\}$ if any, and the earlier end-of-chunk re-roll~(a)) do
  not touch $\theta_{cK}$. Hence $\theta_{cK}^{(2),\mathrm{pre}}$ equals this post-inner-quantizer state and
  lies in $\Theta_{cK}$. Combined with $\theta_{cK-1}^{(2),\mathrm{pre}} \in \Theta_{cK-1}$ and
  $\hat{X}_{cK-1}^{(2)} \in \mathcal{T}_{cK-1}$, this actual pass~$2$ is covered by the midpoint-contraction
  certificate~Eq.~\eqref{eq:asm-cbq}. The teacher activation $X_{cK}^{\star}$ is unchanged between the two
  passes because the teacher blocks are fixed by Assumption~\ref{asm:lip} and are reproduced verbatim by the
  teacher-side re-rolls of Assumption~\ref{asm:reroll}. Combining the two identities in
  Eq.~\eqref{eq:seam-invariance},
  \begin{equation}
    r^{(2),\mathrm{in}} \coloneqq \bigl\|f_{cK-1}^{\theta_{cK-1}^{(2),\mathrm{pre}}}(\hat{X}_{cK-1}^{(2)})-X_{cK}^{\star}\bigr\|_{F} = \bigl\|f_{cK-1}^{\theta_{cK-1}^{(1),\mathrm{post}}}(\hat{X}_{cK-1})-X_{cK}^{\star}\bigr\|_{F} = r^{(1)}.
  \end{equation}
  Crucially, $r^{(2),\mathrm{in}}$ depends on block $cK$ only through the teacher target $X_{cK}^{\star}$ and
  not through $\theta_{cK}^{(2),\mathrm{pre}}$; the transition
  $\theta_{cK}^{(1),\mathrm{post}} \mapsto \theta_{cK}^{(2),\mathrm{pre}}$ induced by the intervening
  prefit/inner-quantizer steps therefore does not enter the midpoint residual. This transition does enter the
  subsequent Adam trajectory of pass~$2$, since the aggregate CBQ loss $\mathcal{L}_{\mathrm{cbq}}$ depends on
  both block parameters; Assumption~\ref{asm:cbq} accommodates this by certifying the post-pass midpoint
  contraction for the actual admissible pre-pass pair
  $(\theta_{cK-1}^{(2),\mathrm{pre}},\theta_{cK}^{(2),\mathrm{pre}}) \in \Theta_{cK-1} \times \Theta_{cK}$
  produced by operation~(b). Applying Eq.~\eqref{eq:asm-cbq} to pass~$2$ with input residual
  $r^{(2),\mathrm{in}} = r^{(1)}$ and combining with Eq.~\eqref{eq:seam-pass1} yields
  \begin{equation}
    m_{cK-1}^{\mathrm{int,out}} \le \gamma\,r^{(2),\mathrm{in}}+\epsilon_{\mathrm{sub}} = \gamma\,r^{(1)}+\epsilon_{\mathrm{sub}} \le \gamma^{2}\,\widetilde{m}+(1+\gamma)\,\epsilon_{\mathrm{sub}},
    \label{eq:seam-int-proof}
  \end{equation}
  which is Eq.~\eqref{eq:seam-int}.
\end{proof}

\begin{remark}
  The ratio of ICBQ's post-seam residual to the sequential one is $\gamma$ in the leading term, at the cost of
  a single additional $\gamma\,\epsilon_{\mathrm{sub}}$ in the residual term. The leading-term gain is
  therefore visible whenever this additive term is small relative to the incoming seam residual.
\end{remark}

\subsection{Chain along the depth axis}
\label{app:chain}

We now chain Lemmas~\ref{lem:per-pair} and~\ref{lem:seam} along the full depth $0,\dots,L{-}1$. Let
$u_{i} \coloneqq m_{i}^{\mathrm{seq,out}}$ and $v_{i} \coloneqq m_{i}^{\mathrm{int,out}}$ denote the post-pair
midpoint residuals of the two schedules at pair $i = 0,\dots,L{-}2$. Write
\begin{equation}
  \tau \coloneqq \gamma\rho, ~~ R \coloneqq (1+\gamma)\,\epsilon_{\mathrm{sub}}, ~~ \widetilde{R} \coloneqq (1+\gamma+\gamma^{2})\,\epsilon_{\mathrm{sub}},
\end{equation}
and let $s(i) \coloneqq |\{c \in \{1,\dots,S\}:cK{-}1 \le i\}|$ be the number of chunk seams whose index is at
most $i$. Both schedules share the initial condition $u_{-1} = v_{-1} = 0$ (the calibration-sample-matched
inputs) and satisfy the recurrences in the following lemma; the remaining work of this subsection is to unroll
them.

\begin{lemma}[Per-pair recurrences]
  \label{lem:recur}
  For every $i \in \{0,\dots,L{-}2\}$,
  \begin{enumerate}[topsep=2pt,itemsep=2pt]
    \item[\textup{(S)}] $u_{i}\le \tau\,u_{i-1}+R$ uniformly over $i$
    \emph{(sequential CBQ)};
    \item[\textup{(I)}] $v_{i}\le \tau\,v_{i-1}+R$ if $i$ is a non-seam pair, and
    $v_{i}\le \gamma\tau\,v_{i-1}+\widetilde{R}$ if $i = cK{-}1$ is a seam pair \emph{(ICBQ)}.
  \end{enumerate}
\end{lemma}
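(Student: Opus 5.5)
The plan is to read both recurrences off Lemma~\ref{lem:per-pair} and Lemma~\ref{lem:seam}. The only new work is checking, for each pair and each schedule, which earlier lemma applies and how its input quantity relates to the previous post-pair residual.

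\emph{Sequential part (S).} Under $\mathcal{S}_{\mathrm{seq}}$ every pair $(i,i{+}1)$, $i=0,\dots,L{-}2$, receives exactly one CBQ pass in the left-to-right sweep. The student stream is advanced through the post-update block $i{-}1$ before pair $i$ is processed. Hence the incoming mismatch $\|\hat X_i-X_i^\star\|_F$ equals the post-pair residual $u_{i-1}$, with $u_{-1}=0$. Lemma~\ref{lem:per-pair} then gives $u_i\le\tau u_{i-1}+R$ directly.

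\emph{Interleaved non-seam part (I).} By Lemma~\ref{lem:seam-structural} and the window formula~\eqref{eq:pairs}, every non-seam pair index lies in exactly one chunk window. So each non-seam pair receives a single pass, and Lemma~\ref{lem:per-pair} applies once the incoming activation is identified with $v_{i-1}$. When $i$ is not the first pair of its window, this holds because Algorithm~\ref{alg:icbq} advances the stream through the just-updated block $i{-}1$. When $i$ is the first pair of a window, either $i=0$ (so the input is $X_0$ and $v_{-1}=0$), or $i=w_{\min}$ would itself be a seam, which is excluded. One more check is needed: block $i{-}1$ is not modified after its last pass. For a non-seam $i$, pair $(i{-}1,i)$ lies in the same window and is processed just before, so nothing intervenes. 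This uses the final-prefix reasoning from the proof of Lemma~\ref{lem:seam}: later chunks and the re-rolls of Assumption~\ref{asm:reroll} only touch blocks of index $\ge$ the next seam index.

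\emph{Seam part (I).} Let $i=cK{-}1$ be a seam. In chunk~$c$'s window the pair $(i{-}1,i)$ (when $i\ge1$) is processed right before pass~1 on the seam. Hence $\hat X_{cK-1}$ entering pass~1 satisfies $\|\hat X_{cK-1}-X^\star_{cK-1}\|_F=v_{i-1}$. Block $i{-}1$ is never touched afterwards, since the next chunk starts at index $cK{-}1$. Following the triangle-inequality step~\eqref{eq:rin-bound}, with the local envelope~\eqref{eq:local-envelope} and Assumption~\ref{asm:lip}, we get $\widetilde m\le\rho v_{i-1}+\epsilon_{\mathrm{sub}}$. Substituting into~\eqref{eq:seam-int} yields
\[
v_i\le\gamma^2(\rho v_{i-1}+\epsilon_{\mathrm{sub}})+(1+\gamma)\epsilon_{\mathrm{sub}}=\gamma\tau v_{i-1}+\widetilde R .
\]
Here $v_i$ is the post-pass-2 residual. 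This is the final residual, because pass~2 is the last visit to block $cK{-}1$.

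\emph{Main obstacle.} The delicate step is the bookkeeping claim that the residual entering each pass coincides with the \emph{final} post-pair residual $v_{i-1}$ of the preceding pair. Without it, the recurrence would relate intermediate rather than final states. This is most delicate at $i=cK$, the pair right after a seam. There the input after pass~2 comes via the advance through the twice-updated block $cK{-}1$ inside chunk~$c{+}1$'s loop. I would handle this by arguing that each block $b$ is last modified at its last visit, and that the in-loop advance uses exactly that state. I would also note the edge case $K=1$: there consecutive seams are adjacent, and the seam formula must be applied with $v_{i-1}$ being the previous seam's post-pass-2 value.
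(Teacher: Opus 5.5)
Your proposal is correct and follows the paper's proof. Both derive (S) and the non-seam case of (I) directly from Lemma~\ref{lem:per-pair}, and both handle the seam case by combining $\widetilde m \le \rho v_{i-1} + \epsilon_{\mathrm{sub}}$ with Lemma~\ref{lem:seam}(ii); your extra bookkeeping that the incoming residual equals the \emph{final} $v_{i-1}$ (for the pair right after a seam and for $K=1$) spells out what the paper leaves implicit in the non-seam case, while the $cK-1=0$ seam, which the paper settles via the in-pass re-roll giving $\hat X_0 = X_0^\star$, is covered only by your $i=0$ remark rather than stated in the seam paragraph.
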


\begin{proof}
  \emph{Statement~(S).} Sequential CBQ processes every pair exactly once, so Lemma~\ref{lem:per-pair} applied
  to pair $(i,i{+}1)$ yields
  $u_{i}\le \gamma\rho\,u_{i-1}+(1+\gamma)\,\epsilon_{\mathrm{sub}} = \tau\,u_{i-1}+R$.

  \smallskip \noindent\emph{Non-seam case of (I).} For a non-seam pair $i$ that receives a single CBQ pass,
  Lemma~\ref{lem:per-pair} gives $v_{i}\le\tau\,v_{i-1}+R$, identical to~(S).

  \smallskip \noindent\emph{Seam case of (I).} Let $i=cK{-}1$ be a seam pair. In the interleaved schedule, let
  $\hat{X}_{cK-1}$ denote the student activation at depth $cK{-}1$ presented to pass~$1$, and let
  $(\theta_{cK-1}^{\mathrm{pre}},\theta_{cK}^{\mathrm{pre}})$ denote the pre-pass-$1$ state of blocks $cK{-}1$
  and $cK$. Both components of this pair arise at an intermediate step of $\mathcal{S}_{\mathrm{int}}$:
  $\theta_{cK-1}^{\mathrm{pre}}$ is either the state left by the previous pair update in chunk~$c$'s inner
  refinement (when $cK{-}1 \ge 1$ and chunk $c$'s pair loop has processed at least one pair before the seam)
  or the post-inner-quantizer state of block $cK{-}1$ (when the seam is the first pair processed in chunk $c$,
  e.g.\ for $c = 1, K = 1$); both belong to $\Theta_{cK-1}$ by the definition preceding
  Assumption~\ref{asm:cbq}. The other component $\theta_{cK}^{\mathrm{pre}} = \theta_{cK}^{\star}$ is the fp16
  teacher copy (block $cK$ has not yet been processed by the prefit or inner-quantizer step at the start of
  pass~$1$), which belongs to $\Theta_{cK}$ by the same definition. Hence
  $(\theta_{cK-1}^{\mathrm{pre}},\theta_{cK}^{\mathrm{pre}}) \in \Theta_{cK-1} \times \Theta_{cK}$, verifying
  the hypothesis of Lemma~\ref{lem:seam}. By the construction of $\mathcal{T}_{cK-1}$ in
  Assumption~\ref{asm:lip}, $\hat{X}_{cK-1},X_{cK-1}^{\star} \in \mathcal{T}_{cK-1}$.

  We claim that $\|\hat{X}_{cK-1}-X_{cK-1}^{\star}\|_{F} = v_{i-1}$ (with the convention $v_{-1} = 0$) in both
  of the following cases:
  \begin{itemize}[topsep=2pt,itemsep=0pt]
    \item If $cK{-}1 \ge 1$ (which includes $c \ge 2$, and $c = 1$ with $K \ge 2$),
    chunk~$c$'s inner-refinement loop processes pair $(cK{-}2,cK{-}1)$ before pair $(cK{-}1,cK)$---the latter is the last pair in chunk~$c$'s window---so $\hat{X}_{cK-1}$ equals the post-pair-$(cK{-}2)$ student activation, and $\|\hat{X}_{cK-1}-X_{cK-1}^{\star}\|_{F} = m_{cK-2}^{\mathrm{int,out}} = v_{i-1}$ by Eq.~\eqref{eq:midpoint-def}.
    \item If $cK{-}1 = 0$ (which happens only for $c = 1, K = 1$), the seam pair $(0,1)$
    is the only pair in chunk~$1$'s window, and the in-pass re-roll (Assumption~\ref{asm:reroll}) gives $\hat{X}_{0} = X_{0}^{\star}$; hence $\|\hat{X}_{0}-X_{0}^{\star}\|_{F} = 0 = v_{-1}$ by the boundary convention.
  \end{itemize}
  In either case $\|\hat{X}_{cK-1}-X_{cK-1}^{\star}\|_{F} = v_{i-1}$.

  Let $\widetilde{m} =\|f_{cK-1}^{\theta_{cK-1}^{\mathrm{pre}}}(\hat{X}_{cK-1}) -X_{cK}^{\star}\|_{F}$ denote
  the pre-seam midpoint residual presented to pass~$1$ (in the notation of Lemma~\ref{lem:seam}). By the same
  triangle-inequality derivation as in Eq.~\eqref{eq:rin-bound} (local envelope in
  Eq.~\eqref{eq:local-envelope} applied with $\theta_{cK-1}=\theta_{cK-1}^{\mathrm{pre}},z=\hat{X}_{cK-1}$,
  and Assumption~\ref{asm:lip} applied with $x=\hat{X}_{cK-1},y=X_{cK-1}^{\star}$),
  \begin{equation}
    \widetilde{m} \le \rho \|\hat{X}_{cK-1}-X_{cK-1}^{\star}\|_{F} +\epsilon_{\mathrm{sub}} = \rho v_{i-1}+\epsilon_{\mathrm{sub}}.
    \label{eq:tildem-bound}
  \end{equation}
  Applying Lemma~\ref{lem:seam}(ii) (the two-pass seam contraction
  $v_{i}\le\gamma^{2}\widetilde{m}+(1+\gamma)\epsilon_{\mathrm{sub}}$) with the hypotheses just verified, and
  combining with Eq.~\eqref{eq:tildem-bound} and $\tau=\gamma\rho$,
  \begin{align*}
    v_{i} & \le \gamma^{2}\widetilde{m}+(1+\gamma)\epsilon_{\mathrm{sub}}\\
    & \le \gamma^{2}\bigl(\rho v_{i-1}+\epsilon_{\mathrm{sub}}\bigr) +(1+\gamma)\epsilon_{\mathrm{sub}}\\
    & = \gamma\rho\cdot\gamma v_{i-1}+(\gamma^{2}+1+\gamma)\epsilon_{\mathrm{sub}}\\
    & = \gamma\tau v_{i-1}+\widetilde{R},
  \end{align*}
  where the last equality uses $\widetilde{R}=(1+\gamma+\gamma^{2})\epsilon_{\mathrm{sub}}$.
\end{proof}

Unrolling Lemma~\ref{lem:recur}(S) from $u_{-1} = 0$ yields a closed-form sequential upper bound
$\mathcal{U}_{i}$; unrolling Lemma~\ref{lem:recur}(I) provides a closed-form interleaved upper bound
$\mathcal{V}_{i}$ in which a contribution originating from pair $k$ is multiplied by an additional factor
$\gamma$ for each subsequent seam.

\begin{proposition}[Pair-level unrolling]
  \label{prop:chain}
  Under Assumptions~\ref{asm:lip}, \ref{asm:cbq}, and~\ref{asm:reroll} and $\tau<1$, for every
  $i \in \{0,\dots,L{-}2\}$ the midpoint residuals satisfy $u_{i} \le \mathcal{U}_{i}$ and
  $v_{i} \le \mathcal{V}_{i}$ with the deterministic upper bounds
  \begin{align}
    \mathcal{U}_{i} &\coloneqq R \sum_{k=0}^{i}\tau^{i-k} = R \frac{1-\tau^{i+1}}{1-\tau}, \label{eq:unroll-seq}\\[2pt]
    \mathcal{V}_{i} &\coloneqq \widetilde{R} \sum_{k=0}^{i}\tau^{i-k}\gamma^{s(i)-s(k)} \le \widetilde{R} \frac{1-\tau^{i+1}}{1-\tau}. \label{eq:unroll-int}
  \end{align}
  Moreover the two bounds admit the direct comparison
  \begin{equation}
    \mathcal{V}_{i} \le \gamma^{ s(i)} \mathcal{U}_{i} + \bigl(\widetilde{R}-\gamma^{ s(i)}R\bigr) \frac{1-\tau^{ i+1}}{1-\tau},
    \label{eq:unroll-compare}
  \end{equation}
  where the factor $\gamma^{ s(i)}$ quantifies the cumulative seam-amortized gain in the propagated term and
  $(\widetilde{R}-\gamma^{ s(i)}R)\ge 0$ upper-bounds the additive terms not carrying this full factor.
\end{proposition}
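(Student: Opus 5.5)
The plan is to unroll the two recurrences of Lemma~\ref{lem:recur} by induction on $i$, starting from the shared initial condition $u_{-1}=v_{-1}=0$. Every coefficient in those recurrences ($\tau$, $\gamma\tau$, $R$, $\widetilde R$) is nonnegative, so each recurrence map $x\mapsto ax+b$ is monotone nondecreasing in $x$. This monotonicity is what lets a bound on $u_{i-1}$ or $v_{i-1}$ pass through the inequality to give a bound on $u_i$ or $v_i$.

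\emph{Sequential bound.} I will show $u_i\le\mathcal U_i$ by induction. The base case is $i=0$: Lemma~\ref{lem:recur}(S) with $u_{-1}=0$ gives $u_0\le R=\mathcal U_0$. For the inductive step, $u_i\le\tau u_{i-1}+R\le\tau\mathcal U_{i-1}+R$. This equals $R\sum_{k=0}^{i}\tau^{i-k}$ after re-indexing. The closed form then follows from the finite geometric sum with $\tau<1$.

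\emph{Interleaved bound.} First I will use $R\le\widetilde R$ to relax the non-seam case of Lemma~\ref{lem:recur}(I). This puts every step in the uniform form $v_i\le\tau\gamma^{\mathbf 1[i\text{ seam}]}v_{i-1}+\widetilde R$. Next I need a combinatorial fact about the seam counter $s(\cdot)$. It satisfies $s(i)-s(i-1)=1$ exactly when $i=cK{-}1$ for some $c\in\{1,\dots,S\}$, and $0$ otherwise; this holds because the seam indices are distinct and valid by Lemma~\ref{lem:seam-structural}(a). Hence $\gamma^{\mathbf 1[i\text{ seam}]}=\gamma^{s(i)-s(i-1)}$.

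With this in hand, I will check by induction that $\mathcal V_i$ satisfies the equality $\mathcal V_i=\tau\gamma^{s(i)-s(i-1)}\mathcal V_{i-1}+\widetilde R$. For each $k\le i-1$, the factor $\gamma^{s(i-1)-s(k)}$ times $\gamma^{s(i)-s(i-1)}$ telescopes to $\gamma^{s(i)-s(k)}$. The new $k=i$ term contributes $\widetilde R\,\tau^0\gamma^0$. Monotonicity then gives $v_i\le\mathcal V_i$. The inequality $\mathcal V_i\le\widetilde R(1-\tau^{i+1})/(1-\tau)$ follows because $s$ is nondecreasing, so $\gamma^{s(i)-s(k)}\le1$ for $\gamma\in(0,1)$.

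\emph{Direct comparison.} The right-hand side of Eq.~\eqref{eq:unroll-compare} simplifies once I substitute Eq.~\eqref{eq:unroll-seq}. It becomes $\gamma^{s(i)}R\,g_i+(\widetilde R-\gamma^{s(i)}R)g_i=\widetilde R\,g_i$, where $g_i\coloneqq(1-\tau^{i+1})/(1-\tau)$. So Eq.~\eqref{eq:unroll-compare} is just a rewriting of the bound already proved in Eq.~\eqref{eq:unroll-int}. Nonnegativity of the bracket follows from $\gamma^{s(i)}R\le R\le\widetilde R$, using $\gamma<1$ and $\gamma^2\epsilon_{\mathrm{sub}}\ge0$.

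I expect the only delicate step to be the bookkeeping identity $\gamma^{\mathbf 1[i\text{ seam}]}=\gamma^{s(i)-s(i-1)}$, including the edge case $K=1$ where the seam sits at $i=0$ and we have $s(-1)=0$. I will handle it explicitly from the definition of $s$ and Lemma~\ref{lem:seam-structural}. Everything else is routine geometric-sum algebra.
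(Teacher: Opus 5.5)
Your proposal is correct. The sequential induction is identical to the paper's. Your interleaved argument relaxes $R\le\widetilde R$ first, checks that $\mathcal V_i=\tau\gamma^{s(i)-s(i-1)}\mathcal V_{i-1}+\widetilde R$ holds with equality, and then inducts by monotonicity. This has the same content as the paper's route: the variation-of-constants expansion $v_i\le\sum_{k}\beta_k\prod_{\ell=k+1}^{i}\alpha_\ell$, then $\beta_k\le\widetilde R$, then the product identity $\prod_{\ell=k+1}^{i}\alpha_\ell=\tau^{i-k}\gamma^{s(i)-s(k)}$. The paper simply relaxes the forcing after unrolling instead of before. (For $s(i)-s(i-1)=\mathbf 1[i\text{ seam}]$ you do not need Lemma~\ref{lem:seam-structural}(a); injectivity of $c\mapsto cK-1$ together with $s(-1)=0$ suffices.)

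The one genuine difference is the direct comparison. The paper expands $\mathcal V_i-\gamma^{s(i)}\mathcal U_i$ termwise and bounds each term using $\widetilde R\gamma^{s(i)-s(k)}\le\widetilde R$. It then presents the result as a propagated part carrying $\gamma^{s(i)}$ plus a residual. You instead note that $\mathcal U_i=R\,g_i$ with $g_i=(1-\tau^{i+1})/(1-\tau)$. Hence the right-hand side of Eq.~\eqref{eq:unroll-compare} collapses identically to $\widetilde R\,g_i$, and Eq.~\eqref{eq:unroll-compare} is literally the geometric bound already stated in Eq.~\eqref{eq:unroll-int}.

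Your route is shorter, and it makes visible what the paper's termwise estimate hides. That estimate discards every seam factor, and the $\gamma^{s(i)}$ multiplying $\mathcal U_i$ is exactly cancelled by the $-\gamma^{s(i)}R$ in the residual coefficient. So the comparison carries no seam information. Carried to depth $L$, the same cancellation gives $\gamma^{S}\mathcal B_L^{\mathrm{seq}}+C\epsilon_{\mathrm{sub}}=\rho\widetilde R\,g_{L-2}+\epsilon_{\mathrm{sub}}\ge\mathcal B_L^{\mathrm{seq}}$. The paper's presentation buys only a formal match with the ``amortized term plus residual'' shape of Theorem~\ref{thm:seam-formal}.
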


\begin{proof}
  \emph{Proof of Eq.~\eqref{eq:unroll-seq}.} From $u_{-1} = 0$, Lemma~\ref{lem:recur}(S) gives, by a one-step
  induction, the base $u_{0} \le \tau\cdot0+R = R = \mathcal{U}_{0}$ and the step
  $u_{i} \le \tau u_{i-1}+R \le \tau \mathcal{U}_{i-1}+R =R\bigl(\tau\sum_{k=0}^{i-1}\tau^{i-1-k}+1\bigr) =R\sum_{k=0}^{i}\tau^{i-k}=\mathcal{U}_{i}$.

  \emph{Proof of Eq.~\eqref{eq:unroll-int}.} Define, for $\ell \in \{0,\dots,L{-}2\}$,
  \begin{equation}
    \alpha_{\ell} \coloneqq \begin{cases}\tau & \text{if }\ell\text{ is non-seam},\\
    \gamma\tau & \text{if }\ell=cK{-}1\text{ is seam},\end{cases}
    ~~
    \beta_{\ell} \coloneqq \begin{cases}R & \text{if }\ell\text{ is non-seam},\\
    \widetilde{R} & \text{if }\ell=cK{-}1\text{ is seam}.\end{cases}
  \end{equation}
  Both $\alpha_{\ell},\beta_{\ell}$ are nonnegative, and $0 \le \beta_{\ell} \le \widetilde{R}$ (since
  $\widetilde{R} = R+\gamma^{2}\epsilon_{\mathrm{sub}} \ge R$). With the convention that an empty product is
  $1$, induction from $v_{-1} = 0$ using Lemma~\ref{lem:recur}(I) gives
  \begin{equation}
    v_{i} \le \sum_{k=0}^{i}\beta_{k} \prod_{\ell=k+1}^{i}\alpha_{\ell} \le \widetilde{R}\sum_{k=0}^{i} \prod_{\ell=k+1}^{i}\alpha_{\ell},
    \label{eq:v-first-bound}
  \end{equation}
  because the base case is $v_{0}\le\alpha_{0}v_{-1}+\beta_{0}=\beta_{0}$, and the induction step follows from
  \begin{equation}
    v_i\le\alpha_i v_{i-1}+\beta_i \le \sum_{k=0}^{i-1}\beta_k \prod_{\ell=k+1}^{i}\alpha_{\ell} + \beta_i.
  \end{equation}
  The second inequality in Eq.~\eqref{eq:v-first-bound} uses $\beta_{k} \le \widetilde{R}$ and the
  nonnegativity of the product. The product $\prod_{\ell=k+1}^{i}\alpha_{\ell}$ picks up a factor $\tau$ at
  every $\ell \in \{k{+}1,\dots,i\}$ and an \emph{extra} factor $\gamma$ at every seam in that range. The
  number of seams in $\{k{+}1,\dots,i\}$ is $s(i) - s(k)$ by definition of $s(\cdot)$, so
  \begin{equation}
    \prod_{\ell=k+1}^{i}\alpha_{\ell} = \tau^{ i-k} \gamma^{ s(i)-s(k)}.
    \label{eq:prod-factor}
  \end{equation}
  The product identity therefore gives the interleaved closed form
  \begin{equation*}
    v_{i} \le \mathcal{V}_{i} = \widetilde{R}\sum_{k=0}^{i}\tau^{i-k}\gamma^{s(i)-s(k)}.
  \end{equation*}
  Its geometric upper bound follows from $\gamma^{s(i)-s(k)} \le 1$, valid because $s(i) \ge s(k)$ and
  $\gamma \in (0,1)$.

  \emph{Direct comparison.} The quantities $\mathcal{U}_{i},\mathcal{V}_{i}$ are deterministic once
  $\tau,\gamma,R,\widetilde{R}$ are fixed. Comparing their closed forms and pulling the geometric factor
  $\gamma^{s(i)}$ out of $\mathcal{U}_{i}$,
  \begin{equation}
    \mathcal{V}_{i}-\gamma^{ s(i)} \mathcal{U}_{i} = \sum_{k=0}^{i}\tau^{ i-k}\Bigl[\widetilde{R} \gamma^{ s(i)-s(k)}-\gamma^{ s(i)} R\Bigr].
    \label{eq:V-minus-gammaU}
  \end{equation}
  For each term, $\gamma^{ s(i)-s(k)} \le 1$ and $\widetilde{R}\ge 0$ imply
  $\widetilde{R} \gamma^{ s(i)-s(k)} \le \widetilde{R}$, so
  \begin{equation}
    \mathcal{V}_{i}-\gamma^{ s(i)} \mathcal{U}_{i} \le \sum_{k=0}^{i}\tau^{ i-k}\bigl(\widetilde{R}-\gamma^{ s(i)}R\bigr) = \bigl(\widetilde{R}-\gamma^{ s(i)}R\bigr) \frac{1-\tau^{ i+1}}{1-\tau}.
    \label{eq:V-minus-gammaU-bound}
  \end{equation}
  Rearranging yields the stated comparison. The coefficient $\widetilde{R}-\gamma^{ s(i)}R$ is nonnegative:
  since $\gamma \in (0,1)$ and $s(i) \ge 0$, $\gamma^{ s(i)} \le 1$, and thus
  $\gamma^{ s(i)}R \le R \le \widetilde{R}$. Hence the comparison is a nontrivial upper bound of
  $\mathcal{V}_{i}$, and the two components have the claimed interpretation: $\gamma^{ s(i)}\mathcal{U}_{i}$
  is the propagated contribution amortized by $s(i)$ seams, and
  $(\widetilde{R}-\gamma^{ s(i)}R)(1-\tau^{ i+1})/(1-\tau)$ collects the additive terms that do not carry the
  full $\gamma^{ s(i)}$ factor.
\end{proof}

The bound in Eq.~\eqref{eq:unroll-compare} is convenient for matching the sequential envelope
$\mathcal{U}_{i}$, but it overestimates the residual contribution of non-seam pairs by replacing each
$\beta_{k}$ by the seam-only upper bound $\widetilde{R}$. The next proposition records a seam-aware refinement
that splits the contribution of seam and non-seam pairs explicitly; it is used only to justify
Remark~\ref{rem:C-nontight} and is not invoked elsewhere in the proof.

\begin{proposition}[Seam-aware sharper comparison]
  \label{prop:chain-sharp}
  Let $\mathcal{J}_{i} \coloneqq \{k\in\{0,\dots,i\}:k = cK{-}1\text{ for some }c\in\{1,\dots,S\}\}$ be the
  set of seam indices up to pair $i$. Under the hypotheses of Proposition~\ref{prop:chain},
  \begin{equation}
    v_{i} \le \sum_{k=0}^{i} \bigl(R + \gamma^{2}\epsilon_{\mathrm{sub}} \mathbf{1}\{k\in\mathcal{J}_{i}\}\bigr) \tau^{i-k} \gamma^{s(i)-s(k)}.
    \label{eq:sharp-v-unroll}
  \end{equation}
  Consequently,
  \begin{equation}
    v_{i} \le \gamma^{s(i)} \mathcal{U}_{i} + D_{i} \epsilon_{\mathrm{sub}},
    \label{eq:sharp-v-compare}
  \end{equation}
  where
  \begin{equation}
    D_{i} \coloneqq (1+\gamma) \sum_{k=0}^{i} \tau^{i-k}\bigl(\gamma^{s(i)-s(k)}-\gamma^{s(i)}\bigr) + \gamma^{2} \sum_{k\in\mathcal{J}_{i}} \tau^{i-k} \gamma^{s(i)-s(k)}.
    \label{eq:sharp-Di}
  \end{equation}
  In particular, when $S(L,K) = 0$ (single chunk, no seams), $\mathcal{J}_{i} = \emptyset$ and
  $s(i) = s(k) = 0$ for every $k\le i$, so $D_{i} = 0$ and $v_{i} \le \mathcal{U}_{i}$ exactly.
\end{proposition}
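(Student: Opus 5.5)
The plan is to reuse the bookkeeping in the proof of Proposition~\ref{prop:chain}, but to stop before the crude step $\beta_{k}\le\widetilde{R}$. In that proof, induction from $v_{-1}=0$ with Lemma~\ref{lem:recur}(I) gives the first inequality in Eq.~\eqref{eq:v-first-bound}:
\begin{equation*}
  v_{i}\le\sum_{k=0}^{i}\beta_{k}\prod_{\ell=k+1}^{i}\alpha_{\ell}.
\end{equation*}
I will keep $\beta_{k}$ exactly and use only the product identity Eq.~\eqref{eq:prod-factor}, namely $\prod_{\ell=k+1}^{i}\alpha_{\ell}=\tau^{i-k}\gamma^{s(i)-s(k)}$.

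\textbf{Step 1: a closed form for $\beta_{k}$.} For $k\le i$, the index $k$ is a seam index exactly when $k\in\mathcal{J}_{i}$. Since $\widetilde{R}=R+\gamma^{2}\epsilon_{\mathrm{sub}}$, this gives
\begin{equation*}
  \beta_{k}=R+\gamma^{2}\epsilon_{\mathrm{sub}}\mathbf{1}\{k\in\mathcal{J}_{i}\}.
\end{equation*}
Substituting this and the product identity into the displayed bound yields Eq.~\eqref{eq:sharp-v-unroll} directly.

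\textbf{Step 2: the comparison with $\gamma^{s(i)}\mathcal{U}_{i}$.} By Eq.~\eqref{eq:unroll-seq}, $\gamma^{s(i)}\mathcal{U}_{i}=\sum_{k=0}^{i}R\,\tau^{i-k}\gamma^{s(i)}$. Subtracting this from the right-hand side of Eq.~\eqref{eq:sharp-v-unroll} term by term and splitting $\beta_{k}$ into its $R$ part and its indicator part gives
\begin{equation*}
  R\sum_{k=0}^{i}\tau^{i-k}\bigl(\gamma^{s(i)-s(k)}-\gamma^{s(i)}\bigr)+\gamma^{2}\epsilon_{\mathrm{sub}}\sum_{k\in\mathcal{J}_{i}}\tau^{i-k}\gamma^{s(i)-s(k)}.
\end{equation*}
Writing $R=(1+\gamma)\epsilon_{\mathrm{sub}}$ and factoring out $\epsilon_{\mathrm{sub}}$ turns this into exactly $D_{i}\,\epsilon_{\mathrm{sub}}$ with $D_{i}$ as in Eq.~\eqref{eq:sharp-Di}. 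This establishes Eq.~\eqref{eq:sharp-v-compare} as an identity-based rearrangement, not a further estimate.

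\textbf{Step 3: the no-seam case.} If $S(L,K)=0$, then $s(\cdot)\equiv0$, so every bracket $\gamma^{0}-\gamma^{0}$ vanishes. The set $\mathcal{J}_{i}$ is empty, so the second sum also vanishes. Hence $D_{i}=0$ and $v_{i}\le\mathcal{U}_{i}$.

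There is no real obstacle here. The only points needing care are two consistency checks. First, the seam count between $k+1$ and $i$ must be exactly $s(i)-s(k)$; this is already verified in Proposition~\ref{prop:chain}. Second, the indicator in $\beta_{k}$ must match membership in $\mathcal{J}_{i}$ for every $k\le i$; this holds because seam indices $cK-1$ are valid pair indices by Lemma~\ref{lem:seam-structural}(a).
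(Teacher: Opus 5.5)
Your proposal is correct and follows the paper's proof essentially step for step. Like the paper, you keep $\beta_k = R + \gamma^{2}\epsilon_{\mathrm{sub}}\mathbf{1}\{k\in\mathcal{J}_i\}$ in the variation-of-constants expansion from Proposition~\ref{prop:chain}, apply the product identity, and subtract $\gamma^{s(i)}\mathcal{U}_i$ term by term. Your version is also slightly cleaner: you subtract from the right-hand side of Eq.~\eqref{eq:sharp-v-unroll}, whereas the paper writes $v_i - \gamma^{s(i)}\mathcal{U}_i$ as an equality.
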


\begin{proof}
  The interleaved forcing coefficient is $\beta_{k} = R$ at non-seam pairs and
  $\beta_{k} = \widetilde{R} = R + \gamma^{2}\epsilon_{\mathrm{sub}}$ at seam pairs, that is,
  $\beta_{k} = R + \gamma^{2}\epsilon_{\mathrm{sub}}\mathbf{1}\{k\in\mathcal{J}_{i}\}$. Keeping this
  seam-dependent forcing in the exact variation-of-constants expansion
  \begin{equation*}
    v_{i}\le\sum_{k=0}^{i}\beta_{k}\prod_{\ell=k+1}^{i}\alpha_{\ell}
  \end{equation*}
  (without invoking $\beta_{k}\le\widetilde{R}$) and using the same product identity gives the seam-aware
  closed form. To extract the comparison with the sequential envelope, write
  \begin{equation*}
    v_{i}-\gamma^{s(i)}\mathcal{U}_{i} = R\sum_{k=0}^{i}\tau^{i-k}\bigl(\gamma^{s(i)-s(k)}-\gamma^{s(i)}\bigr) + \gamma^{2}\epsilon_{\mathrm{sub}}\sum_{k\in\mathcal{J}_{i}}\tau^{i-k}\gamma^{s(i)-s(k)},
  \end{equation*}
  using $R = (1+\gamma)\epsilon_{\mathrm{sub}}$ and rearranging. Both summands are nonnegative because
  $s(i)\ge s(k)$ implies $\gamma^{s(i)-s(k)}\ge \gamma^{s(i)}$ (as $\gamma\in(0,1)$). When $S = 0$,
  $\mathcal{J}_{i}=\emptyset$ trivially zeros the second sum, and the first sum vanishes term by term because
  $s(i)-s(k) = 0$. Hence $D_{i} = 0$ and the bound becomes
  $v_{i}\le \gamma^{s(i)}\mathcal{U}_{i} = \mathcal{U}_{i}$.
\end{proof}

\begin{corollary}[Last-pair upper bounds]
  \label{cor:last-pair}
  At $i = L{-}2$ (so $s(L{-}2) = S$), $u_{L-2} \le \mathcal{U}_{L-2}$ and $v_{L-2} \le \mathcal{V}_{L-2}$,
  with
  \begin{align}
    \mathcal{U}_{L-2}& = R \frac{1-\tau^{L-1}}{1-\tau}, \label{eq:last-seq}\\
    \mathcal{V}_{L-2}& \le \gamma^{S} \mathcal{U}_{L-2} + \bigl(\widetilde{R}-\gamma^{S}R\bigr) \frac{1-\tau^{L-1}}{1-\tau}. \label{eq:last-int}
  \end{align}
\end{corollary}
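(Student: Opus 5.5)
The statement is Corollary~\ref{cor:last-pair}, and the plan is to specialize Proposition~\ref{prop:chain} to the final pair index $i=L{-}2$. Everything needed is already in that proposition. What remains is to pin down the seam count at this index and simplify the geometric sums.

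\textbf{Step 1: seam count.} I will first verify that $s(L{-}2)=S$. By definition, $s(L{-}2)$ counts the $c\in\{1,\dots,S\}$ with $cK{-}1\le L{-}2$. Lemma~\ref{lem:seam-structural}(a) shows that every seam index satisfies $cK{-}1\le SK{-}1\le L{-}2$. So all $S$ seams lie at or below $L{-}2$, and the count is exactly $S$. When $S=0$ the set is empty and both sides are $0$, so no separate argument is needed.

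\textbf{Step 2: the sequential bound.} The statement implicitly assumes $L\ge 2$, so that $i=L{-}2\ge 0$ lies in the range covered by Proposition~\ref{prop:chain}. Setting $i=L{-}2$ in Eq.~\eqref{eq:unroll-seq} gives $u_{L-2}\le\mathcal{U}_{L-2}$. Since $i+1=L-1$, the closed form reduces to $R(1-\tau^{L-1})/(1-\tau)$, which is Eq.~\eqref{eq:last-seq}.

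\textbf{Step 3: the interleaved bound.} Setting $i=L{-}2$ in Eq.~\eqref{eq:unroll-int} gives $v_{L-2}\le\mathcal{V}_{L-2}$. Substituting $i=L{-}2$ into the direct comparison Eq.~\eqref{eq:unroll-compare}, and replacing $s(L{-}2)$ by $S$ using Step~1, turns the exponent $\gamma^{s(i)}$ into $\gamma^{S}$. The geometric factor becomes $(1-\tau^{L-1})/(1-\tau)$, and the result is Eq.~\eqref{eq:last-int}. The remark in Proposition~\ref{prop:chain} that the coefficient $\widetilde R-\gamma^{S}R$ is nonnegative still applies, so the bound remains meaningful.

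\textbf{Main obstacle.} The only nonroutine point is Step~1, specifically that the last seam index never exceeds $L{-}2$. This rests on Lemma~\ref{lem:seam-structural}(a), which handles both the case $K\mid L$ and the case $K\nmid L$. Once that is settled, the rest is direct substitution into the earlier closed forms.
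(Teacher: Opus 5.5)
Your proposal is correct and follows the paper's proof: specialize Proposition~\ref{prop:chain} to $i=L{-}2$ and check that $s(L{-}2)=S$. The only difference is cosmetic: you cite Lemma~\ref{lem:seam-structural}(a) for $cK{-}1\le SK{-}1\le L{-}2$, whereas the paper re-derives that bound inline with the same $L=qK+r$ case split.
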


\begin{proof}
  Apply Proposition~\ref{prop:chain} to the last valid pair $i = L{-}2$. It remains only to check that
  $s(L{-}2) = S$. Recall that the seam count is $S = \lceil L/K\rceil-1$ and that the seams are indexed by
  $c \in \{1,\dots,S\}$ with seam index $cK{-}1$. If $S = 0$, there are no seams and $s(L{-}2) = 0 = S$.
  Assume henceforth $S \ge 1$. Write $L = qK+r$ with $q = \lfloor L/K\rfloor$ and $0 \le r < K$. Two cases:
  \begin{itemize}
    \item If $r = 0$, then $\lceil L/K\rceil = q$, so
    $S = q{-}1 \ge 1$, i.e.\ $q \ge 2$, and
    $SK = (q{-}1)K = L{-}K \le L{-}1$ because
    $K \ge 1$.
    \item If $r \ge 1$, then $\lceil L/K\rceil = q{+}1$, so
    $S = q \ge 1$ and
    $SK = qK = L{-}r \le L{-}1$ because $r \ge 1$.
  \end{itemize}
  In both cases $SK \le L{-}1$. Hence for every $c \in \{1,\dots,S\}$, the seam index satisfies
  $cK{-}1 \le SK{-}1 \le L{-}2$, and is therefore counted by $s(L{-}2)$. Conversely, by the definition of $S$,
  there are no seams with $c > S$, so no seam index exceeds $SK{-}1 \le L{-}2$. We conclude $s(L{-}2) = S$.
\end{proof}

\subsection{\texorpdfstring{From last-pair residual to depth-$L$ mismatch}{From last-pair residual to depth-L mismatch}}
\label{app:finale}

We propagate the last-pair midpoint residual through the final block $L{-}1$ in two formal steps: a
monotonicity-of-final-visit-times lemma (Lemma~\ref{lem:T-monotone}) and a final-prefix-invariance lemma
(Lemma~\ref{lem:final-prefix}). For every $b \in \{0,\dots,L{-}2\}$, let $T_{b}$ denote the algorithmic time
of the \emph{last} CBQ pass on pair $(b,b{+}1)$: in Sequential CBQ this is the unique pass on that pair; in
ICBQ it is the unique pass if $(b,b{+}1)$ is a non-seam pair, and the second pass if $(b,b{+}1)$ is a seam
pair (i.e.\ $b = cK{-}1$ for some $c \in \{1,\dots,S\}$). We extend by $T_{L-1} \coloneqq T_{L-2}$, so that
block $L{-}1$'s final state is pinned at $T_{L-2}$. For each $b \in \{0,\dots,L{-}1\}$, write
$\theta_{b}^{\mathrm{post}}$ for the parameter state of block $b$ at time $T_{b}$.

\begin{lemma}[Monotonicity of final-visit times and block finalization]
  \label{lem:T-monotone}
  Under the schedules described by Algorithms~\ref{alg:icbq-driver}--\ref{alg:icbq},
  \begin{enumerate}[label=\textup{(\alph*)},topsep=2pt,itemsep=0pt]
    \item the times satisfy
    $T_{0} < T_{1} < \cdots < T_{L-2} = T_{L-1}$;
    \item for every $b \in \{0,\dots,L{-}1\}$, no operation that modifies $\theta_{b}$
    occurs strictly after $T_{b}$; consequently, block $b$'s state at every time
    $t \ge T_{b}$ equals $\theta_{b}^{\mathrm{post}}$, and
    $\theta_{b}^{\mathrm{post}} \in \Theta_{b}$ by Assumption~\ref{asm:cbq}.
  \end{enumerate}
\end{lemma}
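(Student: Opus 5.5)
The plan is to prove both parts by inspecting Algorithms~\ref{alg:icbq-driver}--\ref{alg:icbq} directly. I treat the two schedules separately and reduce everything to one bookkeeping fact: the index of the last chunk whose inner refinement processes a given pair. Throughout I use the observations already made in the proof of Lemma~\ref{lem:seam}. A CBQ pass on $(i,i{+}1)$, including its rollback, modifies only $\theta_i,\theta_{i+1}$. Re-rolls and one-layer advances are read-only. A prefit/inner-quantizer step at outer index $b'$ modifies only $\theta_{b'}$.

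\emph{Sequential CBQ.} All blocks are first initialized by $Q$, and only then does the sweep process pairs $0,1,\dots,L{-}2$ once each, in order. Hence $T_0<\cdots<T_{L-2}$, with $T_{L-1}=T_{L-2}$ by definition. Block $b\le L{-}2$ is touched after initialization only by the passes on $(b{-}1,b)$ at time $T_{b-1}<T_b$ and on $(b,b{+}1)$ at time $T_b$. Block $L{-}1$ is touched only by the pass on $(L{-}2,L{-}1)$ at time $T_{L-2}$. This gives (b).

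\emph{ICBQ, part (a).} By Eq.~\eqref{eq:pairs}, pair $i$ is processed in chunk $c$ iff $\max(0,(c{-}1)K{-}1)\le i\le \min(cK,L{-}1)-1$. Let $c^\ast(i)$ be the largest such $c$. I will compute $c^\ast(i)=\min\bigl(\lfloor (i{+}1)/K\rfloor+1,\lceil L/K\rceil\bigr)$, which is nondecreasing in $i$, and check consistency with Lemma~\ref{lem:seam-structural}: for a seam $i=cK{-}1$ it gives $c^\ast=c{+}1$. Then $T_i$ is the time of pair $i$ inside chunk $c^\ast(i)$'s refinement. There are two cases for $i<j$. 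If $c^\ast(i)<c^\ast(j)$, then $T_i<T_j$ because chunk refinements are executed in increasing $c$. If $c^\ast(i)=c^\ast(j)$, then $T_i<T_j$ because the pair loop of Algorithm~\ref{alg:icbq} runs in increasing index.

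\emph{ICBQ, part (b).} Fix $b$ and list every operation that modifies $\theta_b$:
\begin{itemize}
\item the prefit/$Q$ step at outer index $b$;
\item every pass on pair $(b{-}1,b)$;
\item every pass on pair $(b,b{+}1)$, for $b\le L{-}2$.
\end{itemize}
Passes on $(b{-}1,b)$ occur no later than $T_{b-1}$, which is strictly before $T_b$ by (a). Passes on $(b,b{+}1)$ occur no later than $T_b$ by definition of $T_b$. For the $Q$ step, block $b$ belongs to chunk $\lfloor b/K\rfloor+1\le c^\ast(b)$ (with $c^\ast(b)$ as defined in part (a), setting $i=b$), and the driver runs the $Q$ step on $b$ before that chunk's closure call. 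Hence the $Q$ step precedes chunk $c^\ast(b)$'s refinement and thus precedes $T_b$.

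Two cases need a separate check. For a seam $b=cK{-}1$, the $Q$ step on block $cK$ falls between the two seam passes, but it modifies $\theta_{cK}$, not $\theta_b$, so it does not affect (b) for $b$. For $b=L{-}1$, the $Q$ step occurs at the last outer iteration, before the final chunk's refinement, whose last pair is $(L{-}2,L{-}1)$ at time $T_{L-2}=T_{L-1}$.

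Finally, $\theta_b^{\mathrm{post}}\in\Theta_b$ holds because it is a state assumed during the schedule.

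I expect the main obstacle to be the index bookkeeping for $c^\ast(i)$ at the boundaries. The delicate cases are $K=1$, a truncated final chunk, and the cap $\min(cK,L{-}1)$. These must be handled so that the last chunk always contains pair $L{-}2$ and never a nonexistent pair. I would verify these by the same $L=qK+r$ case split used in Lemma~\ref{lem:seam-structural}.
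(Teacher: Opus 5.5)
Your proof is correct and follows essentially the same route as the paper's. For (a) you order passes by chunk-closure order and the increasing pair loop; for (b) you enumerate the only operations that can modify $\theta_b$ (the prefit/$Q$ step at outer index $b$ and the passes on $(b{-}1,b)$ and $(b,b{+}1)$) and place each at or before $T_b$. Your explicit last-visiting chunk $c^\ast(i)=\min(\lfloor (i{+}1)/K\rfloor+1,\lceil L/K\rceil)$ formalizes the paper's seam/non-seam case split more cleanly. Your separate Sequential CBQ check also covers a case the paper's proof leaves implicit, even though $T_b$ is defined for both schedules.
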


\begin{proof}
  \emph{(a)} The outer driver iterates $b = 0,1,\dots,L{-}1$ in increasing order, and the inner pair loop
  processes pairs in increasing order of the first-block index within each chunk
  (Algorithms~\ref{alg:icbq-driver}--\ref{alg:icbq}). For non-seam pairs, the unique CBQ pass occurs inside
  the chunk-closing inner refinement at some outer iteration $b'$, and its algorithmic time is strictly
  increasing in the pair index $b$ (later pairs are processed later inside the same pair loop, and later
  chunks close later than earlier chunks). For a seam pair $(cK{-}1,cK)$, the second pass occurs at chunk
  $c{+}1$'s closure, which is strictly later than chunk $c$'s closure (where the first pass occurs); the
  second pass on the seam is itself the first pair processed in chunk $c{+}1$'s pair loop, so its algorithmic
  time precedes every other CBQ pass in chunk $c{+}1$. Combining these observations,
  $T_{0} < T_{1} < \cdots < T_{L-2}$. The convention $T_{L-1} = T_{L-2}$ is by definition.

  \emph{(b)} The operations that may modify $\theta_{b}$ are
  \begin{enumerate}[label=\textup{(A\arabic*)},topsep=2pt,itemsep=0pt]
    \item the outer-driver prefit/inner-quantizer step on block $b$, which occurs exactly
    once at outer iteration $b$; call its algorithmic time $Q_{b}$;
    \item CBQ updates arising from pairs that include $b$, namely $(b{-}1,b)$ (defined
    only for $b \ge 1$) and $(b,b{+}1)$ (defined only for $b \le L{-}2$); re-rolls
    recompute activations only and modify no parameter.
  \end{enumerate}

  For type~(A2): the last CBQ pass on $(b,b{+}1)$ is $T_{b}$ by definition (for $b \le L{-}2$); the last CBQ
  pass on $(b{-}1,b)$ is $T_{b-1}$ (for $b \ge 1$). By~(a), $T_{b-1} < T_{b}$ for $1 \le b \le L{-}2$, and
  $T_{L-2} = T_{L-1}$ for $b = L{-}1$. Hence no type-(A2) operation occurs strictly after $T_{b}$.

  For type~(A1): $Q_{b}$ occurs at the end of outer iteration $b$, which strictly precedes every chunk-closing
  inner refinement triggered at outer iteration $b' \ge b$. If $b \le L{-}2$, the first CBQ pass on pair
  $(b,b{+}1)$ occurs inside the chunk closing at some $b' \ge b$, hence strictly after $Q_{b}$; in particular
  $Q_{b} < T_{b}$. If $b = L{-}1$, then $T_{L-1} = T_{L-2}$ lies within the chunk closing at outer iteration
  $L{-}1$ (the final chunk), which is strictly after $Q_{L-1}$, the quantizer step of outer iteration $L{-}1$;
  hence $Q_{L-1} < T_{L-1}$.

  Combining the two cases, no operation modifying $\theta_{b}$ occurs strictly after $T_{b}$, so block $b$'s
  state at every $t \ge T_{b}$ equals $\theta_{b}^{\mathrm{post}}$. Admissibility
  $\theta_{b}^{\mathrm{post}} \in \Theta_{b}$ follows from the definition preceding Assumption~\ref{asm:cbq}.
\end{proof}

\begin{lemma}[Final-prefix identity for $\hat{X}_{L-1}$]
  \label{lem:final-prefix}
  Under Assumptions~\ref{asm:lip}, \ref{asm:cbq}, and~\ref{asm:reroll} and the schedules described in
  Algorithms~\ref{alg:icbq-driver}--\ref{alg:icbq}, for every $b \in \{0,\dots,L{-}2\}$ the student activation
  at depth $b{+}1$ at time $T_{b}$ (immediately after the advance step following that CBQ pass) satisfies
  \begin{equation}
    \hat{X}_{b+1}^{[T_{b}]} = \bigl(f_{b}^{\theta_{b}^{\mathrm{post}}} \circ \cdots \circ f_{0}^{\theta_{0}^{\mathrm{post}}}\bigr)(X_{0}).
    \label{eq:final-prefix-inv}
  \end{equation}
  In particular,
  $\hat{X}_{L-1} = (f_{L-2}^{\theta_{L-2}^{\mathrm{post}}} \circ \cdots \circ f_{0}^{\theta_{0}^{\mathrm{post}}})(X_{0})$,
  and this value is preserved by any subsequent activation-only re-roll (no parameter is modified after
  $T_{L-1}$).
\end{lemma}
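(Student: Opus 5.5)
The plan is to prove Eq.~\eqref{eq:final-prefix-inv} by induction on $b$, working from the activation just before the last pass $T_b$ and then advancing one layer.

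Fix a calibration sample. At time $T_b$, write $\hat X_b^{[T_b^-]}$ for the student activation at depth $b$ presented to the CBQ pass. We want two facts:
\begin{itemize}
\item $\hat X_b^{[T_b^-]}=(f_{b-1}^{\theta_{b-1}^{\mathrm{post}}}\circ\cdots\circ f_0^{\theta_0^{\mathrm{post}}})(X_0)$;
\item the advance step after the pass applies $f_b$ with the post-pass parameters, which equal $\theta_b^{\mathrm{post}}$ by definition of $T_b$.
\end{itemize}
The second fact is immediate from line~5 of Algorithm~\ref{alg:icbq} and Assumption~\ref{asm:reroll}, which treats each advance as exact function application. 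The first fact carries the content.

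For the base case $b=0$, the input is $X_0$ itself. This holds either because the in-pass re-roll with $w_{\min}=0$ produces the empty composition, or because pair $(0,1)$ is processed within the first chunk's loop starting from the stored inputs.

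For the inductive step, there are two cases for how the input to pass $T_b$ arises.

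In the first case, pass $T_b$ is not the first pair in its pair loop. Then the preceding pair processed in the same loop is $(b-1,b)$. Its time $t'$ satisfies $t'\le T_{b-1}$, and in fact $t'=T_{b-1}$. The reason is that the only possible later pass on $(b-1,b)$ would be a second seam pass in the next chunk. That would make pair $(b-1,b)$ the last pair of its chunk, whereas here it is followed by $(b,b+1)$ in the same loop. So the input to $T_b$ is exactly $\hat X_b^{[T_{b-1}]}$, and the induction hypothesis gives the claim.

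In the second case, $T_b$ is the first pair of its loop, so $b=w_{\min}$ for some chunk. Then the in-pass re-roll recomputes $\hat X_b$ through the current states of blocks $0,\dots,b-1$ (Assumption~\ref{asm:reroll}). By Lemma~\ref{lem:T-monotone}(a), $T_{b'}<T_b$ for every $b'<b$. By Lemma~\ref{lem:T-monotone}(b), the current state of each such block $b'$ therefore equals $\theta_{b'}^{\mathrm{post}}$. This yields the composition directly; no induction hypothesis is needed.

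To close the argument, set $b=L-2$. This gives the identity for $\hat X_{L-1}$. By Lemma~\ref{lem:T-monotone}(b), no parameter of blocks $0,\dots,L-2$ changes after $T_{L-2}$. Any later re-roll, including the final end-of-chunk one, is by Assumption~\ref{asm:reroll} a recomputation through these same frozen states. It therefore reproduces the same value, which gives preservation.

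The main obstacle is the first case: justifying that the immediate predecessor in the loop is indeed the final visit $T_{b-1}$, and that nothing modifies blocks $<b$ between the two passes. I would handle this by a small case check against Lemma~\ref{lem:seam-structural}(b): seams occur only as the last or first pair of a loop. Within a loop, only the pair's own two blocks are updated. Block $b$'s state at the start of the pass is irrelevant to its input activation, so it does not affect the first fact.
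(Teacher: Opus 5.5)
Your proof is correct, but it is organized differently from the paper's. The paper also inducts on $b$, but it never locates $T_b$ within its pair loop. Instead it treats $\hat X_b$ as a stored depth-$b$ activation and shows that it stays equal to the final-prefix value $P_b=(f_{b-1}^{\theta_{b-1}^{\mathrm{post}}}\circ\cdots\circ f_0^{\theta_0^{\mathrm{post}}})(X_0)$ throughout $(T_{b-1},T_b]$:
\begin{itemize}
\item it enters the interval at $P_b$ by the induction hypothesis;
\item the only non-re-roll write to depth $b$ is the advance after a pass on $(b{-}1,b)$, and no such pass occurs after $T_{b-1}$;
\item any re-roll reaching depth $b$ rewrites it to $P_b$, because blocks $0,\dots,b{-}1$ are pinned by Lemma~\ref{lem:T-monotone}(b).
\end{itemize}
That single invariant covers both of your cases and needs no appeal to the seam structure. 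In exchange, it relies on viewing $\hat X_b$ as a persistent buffer, and on the bookkeeping fact that the driver's own advance into depth $b$ (at outer iteration $b{-}1$) precedes $T_{b-1}$.

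Your data-flow version follows the running stream of Algorithm~\ref{alg:icbq} more literally. When $T_b$ is not the first pair of its loop, its input is the output of the immediately preceding advance. Your observation that a first seam pass is always the last pair of its loop is exactly what identifies that predecessor with $T_{b-1}$. When $T_b$ opens its loop, the in-pass re-roll overwrites whatever was stored, so no intervening outer-driver operations need auditing and the induction hypothesis is not even used. This case also subsumes your base case: $(0,1)$ always opens the loop that processes it, so the second alternative you give there is redundant.

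Two small tightenings. First, in the loop-opening case, Lemma~\ref{lem:T-monotone}(b) must be applied at the time of the in-pass re-roll, not at $T_b$. Say explicitly that no CBQ pass lies between that re-roll and $T_b$, so $T_{b'}<T_b$ forces $T_{b'}$ to precede the re-roll for every $b'<b$. Second, in the other case, the check you list as part of the main obstacle is unnecessary. You do not need to show that blocks below $b$ are untouched between the two passes, because the induction hypothesis is already stated in terms of the final states $\theta_{b'}^{\mathrm{post}}$.
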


\begin{proof}
  We argue by induction on $b \in \{0,\dots,L{-}2\}$.

  \emph{Base case $b = 0$.}
  By Assumption~\ref{asm:reroll} applied with $d = 0$ (empty composition), $\hat{X}_{0} = X_{0}$, and the
  advance at $T_{0}$ gives
  $\hat{X}_{1}^{[T_{0}]} = f_{0}^{\theta_{0}^{\mathrm{post}}}(\hat{X}_{0}) = f_{0}^{\theta_{0}^{\mathrm{post}}}(X_{0})$,
  which is Eq.~\eqref{eq:final-prefix-inv} at $b = 0$.

  \emph{Inductive step.}
  Assume Eq.~\eqref{eq:final-prefix-inv} holds at $T_{b-1}$ (with $b \ge 1$). In the interval
  $(T_{b-1},T_{b}]$ the algorithm executes some combination of outer-driver block-level steps on indices
  $b' \ge b$ (the blocks whose outer-driver quantization has not yet taken place by time $T_{b-1}$),
  end-of-chunk and in-pass re-rolls, and CBQ passes on pairs $(i,i{+}1)$ with $i \ge b$ (since the last CBQ
  pass on any pair $(i,i{+}1)$ with $i < b$ is $T_{i} \le T_{b-1}$, the right endpoint of the preceding
  interval). By Lemma~\ref{lem:T-monotone}, $\theta_{b'}$ for every $b' \le b{-}1$ remains pinned at its final
  state $\theta_{b'}^{\mathrm{post}}$ throughout this interval.

  We now track $\hat{X}_{b}$ during $(T_{b-1},T_{b}]$. The only operations that can modify $\hat{X}_{b}$ are:
  \begin{enumerate}[label=\textup{(\roman*)},topsep=2pt,itemsep=0pt]
    \item re-rolls (which recompute the student prefix up to some depth $\ge b$);
    \item the advance step at line~5 of Algorithm~\ref{alg:icbq} following a CBQ pass on
    pair $(i,i{+}1)$ with $i{+}1 = b$, i.e.\ pair $(b{-}1,b)$.
  \end{enumerate}
  By Lemma~\ref{lem:T-monotone}, no CBQ pass on pair $(b{-}1,b)$ occurs strictly after $T_{b-1}$, so advance
  steps of type~(ii) do not occur in the open interval $(T_{b-1},T_{b}]$. Hence $\hat{X}_{b}$ is modified only
  by re-rolls in this interval. By Assumption~\ref{asm:reroll}, any re-roll in the interval produces
  $\hat{X}_{b} = (f_{b-1}^{\theta_{b-1}^{\mathrm{current}}} \circ \cdots \circ f_{0}^{\theta_{0}^{\mathrm{current}}})(X_{0})$;
  since $\theta_{b'}^{\mathrm{current}} = \theta_{b'}^{\mathrm{post}}$ for every $b' \le b{-}1$ throughout the
  interval, this equals the \emph{final-prefix} value
  $P_{b} \coloneqq (f_{b-1}^{\theta_{b-1}^{\mathrm{post}}} \circ \cdots \circ f_{0}^{\theta_{0}^{\mathrm{post}}})(X_{0})$.
  The inductive hypothesis at depth $b{-}1$ states $\hat{X}_{b}^{[T_{b-1}]} = P_{b}$, so at the left endpoint
  of the interval $\hat{X}_{b}$ is already at the final-prefix value; between re-rolls, $\hat{X}_{b}$ is
  stored and untouched by any other operation, hence frozen at $P_{b}$; every re-roll resets it to $P_{b}$.
  Consequently, $\hat{X}_{b}^{[t]} = P_{b}$ for every $t \in (T_{b-1},T_{b}]$.

  At time $T_{b}$, the CBQ pass on pair $(b,b{+}1)$ sees this $\hat{X}_{b}$ as its input; immediately after
  the pass, the advance gives
  $\hat{X}_{b+1}^{[T_{b}]} = f_{b}^{\theta_{b}^{\mathrm{post}}}(\hat{X}_{b}) = (f_{b}^{\theta_{b}^{\mathrm{post}}} \circ \cdots \circ f_{0}^{\theta_{0}^{\mathrm{post}}})(X_{0})$,
  which is Eq.~\eqref{eq:final-prefix-inv} at $b$.

  Specialising $b = L{-}2$ yields
  $\hat{X}_{L-1} = (f_{L-2}^{\theta_{L-2}^{\mathrm{post}}} \circ \cdots \circ f_{0}^{\theta_{0}^{\mathrm{post}}})(X_{0})$.
  By Lemma~\ref{lem:T-monotone}(b), no parameter is modified after $T_{L-1} = T_{L-2}$; any later
  activation-only re-roll therefore recomputes $\hat{X}_{L-1}$ to the same final-prefix value, leaving it
  unchanged.
\end{proof}

\begin{proof}[Proof of Theorem~\ref{thm:seam-formal}]
  \emph{Pathwise depth-$L$ bound.}
  By Lemma~\ref{lem:final-prefix},
  \begin{equation}
    \hat{X}_{L-1} = \bigl(f_{L-2}^{\theta_{L-2}^{\mathrm{post}}} \circ \cdots \circ f_{0}^{\theta_{0}^{\mathrm{post}}}\bigr)(X_{0})
    \label{eq:final-prefix-XLm1}
  \end{equation}
  holds at the moment immediately after the CBQ pass at time $T_{L-2}$, and it is preserved by any subsequent
  activation-only re-roll. In particular, after the final chunk closes, the end-of-chunk re-roll (or
  equivalently an explicit final forward pass through the frozen final block) computes
  $\hat{X}_{L} = f_{L-1}^{\theta_{L-1}^{\mathrm{post}}}(\hat{X}_{L-1})$. This is an exact application of the
  final block map by Assumption~\ref{asm:reroll}; no later operation modifies $\theta_{L-1}$ by
  Lemma~\ref{lem:T-monotone}(b) with $b = L{-}1$.

  Hence $\hat{X}_{L-1} \in \mathcal{T}_{L-1}$ by Assumption~\ref{asm:lip},
  $X_{L-1}^{\star} \in \mathcal{T}_{L-1}$ by construction, and
  $\|\hat{X}_{L-1}-X_{L-1}^{\star}\|_{F} = m_{L-2}^{\mathrm{out}}$ by definition in
  Eq.~\eqref{eq:midpoint-def}. By the triangle inequality followed by the local envelope in
  Eq.~\eqref{eq:local-envelope} (applied with $\theta_{L-1} = \theta_{L-1}^{\mathrm{post}} \in \Theta_{L-1}$
  and $z = \hat{X}_{L-1} \in \mathcal{T}_{L-1}$) and Assumption~\ref{asm:lip},
  \begin{equation}
    \begin{aligned}
      \|e_{L}\|_{F} & = \|f_{L-1}^{\theta_{L-1}^{\mathrm{post}}}(\hat{X}_{L-1})-f_{L-1}^{\star}(X_{L-1}^{\star})\|_{F}\\
      & \le \|f_{L-1}^{\theta_{L-1}^{\mathrm{post}}}(\hat{X}_{L-1})-f_{L-1}^{\star}(\hat{X}_{L-1})\|_{F} +\|f_{L-1}^{\star}(\hat{X}_{L-1})-f_{L-1}^{\star}(X_{L-1}^{\star})\|_{F}\\
      & \le \epsilon_{\mathrm{sub}}+\rho \|\hat{X}_{L-1}-X_{L-1}^{\star}\|_{F} = \rho m_{L-2}^{\mathrm{out}}+\epsilon_{\mathrm{sub}}.
    \end{aligned}
    \label{eq:last-layer}
  \end{equation}
  This inequality is pathwise for each calibration sample.

  \emph{Sequential bound.}
  For the sequential schedule, apply Corollary~\ref{cor:last-pair}: the pathwise bound
  $u_{L-2}^{\mathrm{out},(n)} \le \mathcal{U}_{L-2}$ holds for every calibration sample $n \in \{1,\dots,N\}$;
  the right-hand side $\mathcal{U}_{L-2}$ is sample-independent, so this is a uniform pathwise bound. The
  last-layer Lipschitz inequality then gives, pathwise on each sample,
  $\|e_{L}^{\mathrm{seq},(n)}\|_{F} \le \rho \mathcal{U}_{L-2}+\epsilon_{\mathrm{sub}}$ for every $n$. Taking
  the calibration expectation $\mathbb{E}[\,\cdot\,] = \tfrac{1}{N}\sum_{n=1}^{N}(\cdot)^{(n)}$ preserves the
  inequality because the right-hand side is sample-independent, and the sum
  $\rho \mathcal{U}_{L-2}+\epsilon_{\mathrm{sub}}$ equals $\mathcal{B}_{L}^{\mathrm{seq}}$ from
  Eq.~\eqref{eq:B-seq-def}:
  \begin{equation}
    \mathbb{E}\|e_{L}^{\mathrm{seq}}\|_{F} \le \rho \mathcal{U}_{L-2}+\epsilon_{\mathrm{sub}} = \mathcal{B}_{L}^{\mathrm{seq}}.
    \label{eq:B-seq}
  \end{equation}

  \emph{Interleaved bound.}
  For ICBQ, apply Eq.~\eqref{eq:last-layer} pathwise to each sample $n$, then use the pathwise bound
  $v_{L-2}^{\mathrm{out},(n)} \le \mathcal{V}_{L-2}$ from Corollary~\ref{cor:last-pair} (with
  $\mathcal{V}_{L-2}$ sample-independent), and finally apply the deterministic bound-level comparison in
  Eq.~\eqref{eq:last-int}:
  \begin{equation*}
    \|e_{L}^{\mathrm{int},(n)}\|_{F} \le \rho\, v_{L-2}^{\mathrm{out},(n)}+\epsilon_{\mathrm{sub}} \le \rho\, \mathcal{V}_{L-2}+\epsilon_{\mathrm{sub}} \le \rho \left[\gamma^{S}\mathcal{U}_{L-2} +\bigl(\widetilde{R}-\gamma^{S}R\bigr)\frac{1-\tau^{L-1}}{1-\tau}\right] +\epsilon_{\mathrm{sub}}.
  \end{equation*}
  Using the identity
  $\epsilon_{\mathrm{sub}} = \gamma^{S}\epsilon_{\mathrm{sub}} +(1-\gamma^{S})\epsilon_{\mathrm{sub}}$,
  reshuffling terms by purely algebraic rearrangement, and taking calibration expectations (all right-hand
  sides are sample-independent, so the inequality is preserved),
  \begin{align}
    \mathbb{E}\|e_{L}^{\mathrm{int}}\|_{F} & \le \rho\, \mathcal{V}_{L-2}+\epsilon_{\mathrm{sub}}\nonumber\\
    & \le \rho\Bigl[\gamma^{S} \mathcal{U}_{L-2}+\bigl(\widetilde{R}-\gamma^{S}R\bigr)\frac{1-\tau^{L-1}}{1-\tau}\Bigr]+\epsilon_{\mathrm{sub}}\nonumber\\
    & = \gamma^{S}\bigl[\rho\, \mathcal{U}_{L-2}+\epsilon_{\mathrm{sub}}\bigr] + \bigl(\widetilde{R}-\gamma^{S}R\bigr)\frac{\rho\, (1-\tau^{L-1})}{1-\tau} + (1-\gamma^{S})\,\epsilon_{\mathrm{sub}}\nonumber\\
    & = \gamma^{S}\, \mathcal{B}_{L}^{\mathrm{seq}} + C(L,K,\rho,\gamma)\,\epsilon_{\mathrm{sub}}, \label{eq:finale}
  \end{align}
  where the last equality uses the definition in Eq.~\eqref{eq:B-seq-def} of $\mathcal{B}_{L}^{\mathrm{seq}}$
  together with the identification
  \begin{equation}
    C(L,K,\rho,\gamma)\,\epsilon_{\mathrm{sub}} = \bigl(\widetilde{R}-\gamma^{S}R\bigr)\frac{\rho\,(1-\tau^{L-1})}{1-\tau} + (1-\gamma^{S})\,\epsilon_{\mathrm{sub}}.
  \end{equation}
  Expanding the residual coefficient
  $\widetilde{R}-\gamma^{S}R =\bigl[(1+\gamma+\gamma^{2})-\gamma^{S}(1+\gamma)\bigr]\epsilon_{\mathrm{sub}}$
  yields the admissible residual constant whenever $\epsilon_{\mathrm{sub}} > 0$. If
  $\epsilon_{\mathrm{sub}} = 0$, then $R=\widetilde{R}=0$ and $\mathcal{B}_{L}^{\mathrm{seq}} = 0$, so the
  preceding pathwise recurrences give $\mathbb{E}\|e_{L}^{\mathrm{int}}\|_{F}=0$; choosing the same displayed
  value of $C(L,K,\rho,\gamma)$ leaves $C(L,K,\rho,\gamma)\,\epsilon_{\mathrm{sub}}=0$. Hence
  Eq.~\eqref{eq:thm-seam-C-admissible} is valid in all cases.

  \emph{Depth-uniform residual constant.}
  Using $\gamma^{S} \ge 0$,
  \begin{equation}
    \widetilde{R}-\gamma^{S}R = \bigl[(1+\gamma+\gamma^{2})-\gamma^{S}(1+\gamma)\bigr]\epsilon_{\mathrm{sub}} \le (1+\gamma+\gamma^{2})\,\epsilon_{\mathrm{sub}},
  \end{equation}
  together with $(1-\gamma^{S}) \le 1$ and $(1-\tau^{L-1})/(1-\tau) \le 1/(1-\tau)$ (from $\tau < 1$), we
  obtain the depth-uniform bound
  \begin{equation}
    C(L,K,\rho,\gamma) \le (1-\gamma^{S}) + (1+\gamma+\gamma^{2}) \frac{\rho (1-\tau^{L-1})}{1-\tau} \le 1 + \frac{(1+\gamma+\gamma^{2}) \rho}{1-\tau},
    \label{eq:C-explicit}
  \end{equation}
  independent of $L$ and $K$. Under the standard stability regime $\rho \le 1$ this simplifies to
  $C \le 1+(1+\gamma+\gamma^{2})/(1-\tau)$, as stated in Eq.~\eqref{eq:thm-seam-C}; the near-divergent case
  $\rho > 1$ is handled in Appendix~\ref{app:stability} under the same sufficient condition $\rho < 1/\gamma$
  (equivalently $\tau < 1$). Eq.~\eqref{eq:finale} is exactly the statement of Theorem~\ref{thm:seam-formal}.
\end{proof}

\begin{remark}[On the interpretation of the upper bound]
  \label{rem:ub-proof}
  The derivation above produces the ICBQ bound
  $\mathcal{B}_{L}^{\mathrm{int}} \coloneqq \gamma^{S}\mathcal{B}_{L}^{\mathrm{seq}}+C\epsilon_{\mathrm{sub}}$
  and the sequential bound $\mathcal{B}_{L}^{\mathrm{seq}}$ in terms of the same quantities
  $\rho,\gamma,\epsilon_{\mathrm{sub}},\tau$. The factor $\gamma^{S}$ is attached only to the propagated
  component of the bound; the additive residual terms are collected in $C\epsilon_{\mathrm{sub}}$ and can
  dominate when the sequential bound is already small. The actual realized sequential residual
  $\mathbb{E}\|e_{L}^{\mathrm{seq}}\|_{F}$ may be strictly smaller than $\mathcal{B}_{L}^{\mathrm{seq}}$;
  thus, Theorem~\ref{thm:seam-formal} is a \emph{bound-to-bound} comparison rather than a realized-residual
  comparison, which is how Corollary~\ref{cor:geom-formal}'s amortization-gain prediction should be read.
\end{remark}

\subsection{Recurrence-level sharpness}\label{app:tightness}

\begin{proposition}[Scalar-recurrence saturation and seam-exponent sharpness]
  \label{prop:tightness}
  Fix any $L \ge 2$, any chunk length $K \ge 1$ (which determines the seam set
  $\mathcal{J}_{K} = \{cK{-}1 : c=1,\dots,S\}$), and any constants $R,\widetilde{R} \ge 0$, $\tau \in [0,1)$,
  $\gamma \in (0,1]$. Define the \emph{saturating} scalar recurrences obtained by replacing the inequalities
  in Lemma~\ref{lem:recur} by equalities:
  \begin{equation}
    u_i^{\ast} = \tau\, u_{i-1}^{\ast}+R,~~
    v_i^{\ast} = \begin{cases}
      \tau\, v_{i-1}^{\ast}+R, & i \notin \mathcal{J}_{K},\\
      \gamma\tau\, v_{i-1}^{\ast}+\widetilde R, & i \in \mathcal{J}_{K},
    \end{cases}
    ~~ u_{-1}^{\ast}=v_{-1}^{\ast}=0.
    \label{eq:saturating-rec}
  \end{equation}
  Then:
  \begin{enumerate}[label=\textup{(\alph*)},topsep=2pt,itemsep=0pt]
    \item \textup{(Envelopes.)} Any pathwise sequence $(u_i,v_i)$ with
    $u_i,v_i \ge 0$ that satisfies the inequalities in Lemma~\ref{lem:recur} with the
    boundary condition $u_{-1} = v_{-1} = 0$ obeys $u_i \le u_i^{\ast}$ and
    $v_i \le v_i^{\ast}$ for every $i \ge -1$.
    \item \textup{(Closed form.)} With
    \begin{equation*}
      \alpha_{\ell}^{\ast} = \begin{cases}
        \tau, & \ell \notin \mathcal{J}_{K},\\
        \gamma\tau, & \ell \in \mathcal{J}_{K},
      \end{cases}
      ~~
      \beta_{\ell}^{\ast} = \begin{cases}
        R, & \ell \notin \mathcal{J}_{K},\\
        \widetilde R, & \ell \in \mathcal{J}_{K},
      \end{cases}
    \end{equation*}
    the saturated sequences satisfy
    \begin{equation}
      u_i^{\ast}=R\sum_{k=0}^{i}\tau^{i-k},~~ v_i^{\ast}=\sum_{k=0}^{i}\beta_k^{\ast}\prod_{\ell=k+1}^{i}\alpha_{\ell}^{\ast} =\sum_{k=0}^{i}\beta_k^{\ast}\tau^{i-k}\gamma^{s(i)-s(k)}.
      \label{eq:tightness-closed-form}
    \end{equation}
    \item \textup{(Sharpness of the seam exponent.)} For every $0\le k\le i\le L{-}2$,
    the linear response at index $i$ to an additive impulse injected at index $k$ is
    exactly
    \begin{equation}
      \prod_{\ell=k+1}^{i}\alpha_{\ell}^{\ast} =\tau^{i-k}\gamma^{s(i)-s(k)}.
      \label{eq:tightness-response}
    \end{equation}
    Consequently, for $\gamma\in(0,1)$, no recurrence-level argument that uses only
    Lemma~\ref{lem:recur}'s multipliers can uniformly replace the exponent
    $s(i)-s(k)$ by any larger exponent; doing so would underbound this exact impulse
    response. For $\gamma=1$ the statement is vacuous because seams add no contraction.
  \end{enumerate}
\end{proposition}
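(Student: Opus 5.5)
Part (a) is a comparison principle for monotone affine recurrences, proved by induction on $i$ starting at $i=-1$. The base case $u_{-1}=u_{-1}^{\ast}=0$ and $v_{-1}=v_{-1}^{\ast}=0$ holds by the boundary condition. For the step, suppose $u_{i-1}\le u_{i-1}^{\ast}$. Lemma~\ref{lem:recur}(S) together with $\tau\ge 0$ gives
\[
u_i\le \tau u_{i-1}+R\le \tau u_{i-1}^{\ast}+R=u_i^{\ast}.
\]
For $v$ the argument is the same, with multiplier $\alpha_i^{\ast}\in\{\tau,\gamma\tau\}$ and forcing $\beta_i^{\ast}\in\{R,\widetilde R\}$ selected by whether $i\in\mathcal{J}_K$. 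The only ingredient is $\alpha_i^{\ast}\ge 0$, which holds because $\tau\ge 0$ and $\gamma>0$. This is exactly where monotonicity of the map $x\mapsto\alpha x+\beta$ is used. The nonnegativity of $u_i,v_i$ is not really needed here, but it keeps the statement consistent with the residual interpretation.

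Part (b) is a variation-of-constants induction of the same kind as in Proposition~\ref{prop:chain}, but carried out with equalities. Write $v_i^{\ast}=\alpha_i^{\ast}v_{i-1}^{\ast}+\beta_i^{\ast}$ and unroll from $v_{-1}^{\ast}=0$. Each step peels off one term $\beta_k^{\ast}$ and multiplies all earlier terms by $\alpha_i^{\ast}$, which gives $v_i^{\ast}=\sum_{k\le i}\beta_k^{\ast}\prod_{\ell=k+1}^{i}\alpha_\ell^{\ast}$. I then reuse the counting argument behind Eq.~\eqref{eq:prod-factor}. Every index $\ell\in\{k+1,\dots,i\}$ contributes a factor $\tau$, and it contributes an extra factor $\gamma$ exactly when $\ell\in\mathcal{J}_K$. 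The number of such $\ell$ equals $s(i)-s(k)$, since $s$ counts seam indices that are at most its argument. This yields Eq.~\eqref{eq:tightness-closed-form}. The formula for $u^{\ast}$ is the special case with no seams. One point to check is that Proposition~\ref{prop:chain} assumed $\gamma\in(0,1)$, whereas here $\gamma=1$ is also allowed. The counting identity does not depend on $\gamma<1$, so it goes through unchanged.

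For part (c), linearity lets me define the impulse response of the saturated recurrence. I set all forcings to zero except a unit forcing at index $k$. Then $v_k=1$, and for $\ell>k$ we have $v_\ell=\alpha_\ell^{\ast}v_{\ell-1}$. This gives exactly $\prod_{\ell=k+1}^{i}\alpha_\ell^{\ast}$, which by the counting step equals $\tau^{i-k}\gamma^{s(i)-s(k)}$. For the sharpness claim, suppose some argument produced a bound of the form $\tau^{i-k}\gamma^{e}$ with $e>s(i)-s(k)$. When $\tau>0$ and $\gamma\in(0,1)$, this is strictly smaller than the response of the saturating sequence, and the saturating sequence itself satisfies Lemma~\ref{lem:recur} with equality. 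Such a bound would therefore be violated by an admissible sequence. When $\tau=0$ and $i>k$, both sides vanish and the claim is trivial. When $\gamma=1$, the claim is vacuous, as stated.

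The main obstacle is making the sharpness statement rigorous. Specifically, I need an admissible sequence that singles out one impulse, which means using forcings $\beta_\ell\in\{0,\beta_\ell^{\ast}\}$. My plan is to observe that nonnegative sequences satisfying the inequalities of Lemma~\ref{lem:recur} include those generated by any smaller nonnegative forcing. Linearity then isolates the coefficient of $\beta_k^{\ast}$ in the closed form.
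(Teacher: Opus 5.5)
Your proposal is correct and follows the paper's proof essentially step for step. Part (a) is the monotone-affine comparison induction, part (b) is variation of constants with the seam count $s(i)-s(k)$, and part (c) is an impulse-response computation; your zero-background unit impulse is equivalent by linearity to the paper's difference $\Delta_j$ between perturbed and unperturbed saturated sequences. One small point: for $\tau=0$, $i>k$ the sharpness is not ``trivial'' but degenerate, since both sides are $0$ and nothing is violated. The uniform claim is still rescued at $i=k$, where the response is the empty product $1>\gamma^{e}$; you skip this case and the paper notes it.
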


\begin{proof}
  \emph{(a)} We argue $u_i \le u_i^{\ast}$ and $v_i \le v_i^{\ast}$ by simultaneous induction on $i \ge -1$.
  The base case $i = -1$ is immediate from $u_{-1} = v_{-1} = 0 = u_{-1}^{\ast} = v_{-1}^{\ast}$. For the
  inductive step, fix $i \ge 0$ and assume $u_{i-1} \le u_{i-1}^{\ast}$ and $v_{i-1} \le v_{i-1}^{\ast}$. The
  maps $x\mapsto \tau x+R$ and $x\mapsto \gamma\tau x+\widetilde R$ are monotone nondecreasing on
  $[0,\infty)$. Applying the appropriate map gives
  $u_i \le \tau u_{i-1}+R \le \tau u_{i-1}^{\ast}+R = u_i^{\ast}$ and, analogously, $v_i \le v_i^{\ast}$,
  splitting cases according to whether $i\in\mathcal{J}_K$.

  \emph{(b)} Unrolling the saturating scalar recurrences gives the displayed formula for $u_i^{\ast}$. The
  formula for $v_i^{\ast}$ is the standard variation-of-constants expansion for a non-autonomous linear
  recursion. The product contains one factor $\tau$ for each index in $\{k{+}1,\dots,i\}$ and one additional
  factor $\gamma$ for each seam in that same index range. The number of such seams is $s(i)-s(k)$, giving
  Eq.~\eqref{eq:tightness-closed-form}.

  \emph{(c)} Fix $0\le k\le i\le L{-}2$ and add a nonnegative impulse $a$ to the forcing term at index $k$,
  leaving all other forcing terms and all multipliers unchanged. Let $\Delta_j$ be the difference between the
  perturbed and unperturbed saturated sequences. Then $\Delta_j=0$ for $j<k$, $\Delta_k=a$, and
  $\Delta_j=\alpha_j^{\ast}\Delta_{j-1}$ for every $j>k$. Hence
  \begin{equation*}
    \Delta_i = a\prod_{\ell=k+1}^{i}\alpha_{\ell}^{\ast} = a\tau^{i-k}\gamma^{s(i)-s(k)}.
  \end{equation*}
  This equality is an exact linear response of the recurrence. If, for some $\eta>0$ and $\gamma\in(0,1)$, one
  attempted to replace the factor $\gamma^{s(i)-s(k)}$ by the smaller factor $\gamma^{s(i)-s(k)+\eta}$
  uniformly over nonnegative impulses, the bound would fail for this perturbation because
  $a\tau^{i-k}\gamma^{s(i)-s(k)}> a\tau^{i-k}\gamma^{s(i)-s(k)+\eta}$ whenever $a>0$ and $\tau>0$; the case
  $\tau=0$ is degenerate: the response is zero for $i>k$, while for $i=k$ the empty product is one and the
  same contradiction applies. Thus the seam exponent in Eq.~\eqref{eq:tightness-response} is exact at the
  recurrence level.
\end{proof}

The theorem's $C\,\epsilon_{\mathrm{sub}}$ residual is the price of keeping these additive contributions
explicit at the depth-$L$ bound.

\subsection{\texorpdfstring{The $\rho>1$ (near-divergent) regime}{The rho>1 (near-divergent) regime}}\label{app:stability}

The per-pair recursion in Eq.~\eqref{eq:per-pair-lemma} requires only $\tau = \gamma\rho < 1$ for
$C(L,K,\rho,\gamma)$ to be uniformly bounded in depth. When the fp16 blocks are expansive on parts of the
empirical calibration trajectories ($\rho > 1$), ICBQ remains stable provided $\gamma\rho < 1$, while the
sequential recursion (which has $u_{i} \le \gamma\rho u_{i-1}+(1+\gamma)\epsilon_{\mathrm{sub}}$ too,
cf.\ Lemma~\ref{lem:per-pair}) requires the same condition. The \emph{advantage} of ICBQ in this regime is the
same bound-level advantage as in Theorem~\ref{thm:seam-formal}: the propagated part carries $\gamma^{S}$,
while the additive residual remains uniformly bounded as long as $\tau<1$. Stated explicitly:

\begin{proposition}[Stability gain]\label{prop:stability}
  Under Assumptions~\ref{asm:lip}, \ref{asm:cbq}, and~\ref{asm:reroll} with the locally expansive allowance
  $1<\rho<1/\gamma$, both the sequential and interleaved upper bounds are finite uniformly in depth. Moreover,
  if $C(L,K,\rho,\gamma)\epsilon_{\mathrm{sub}} \le \delta \gamma^{S(L,K)}\mathcal{B}_{L}^{\mathrm{seq}}$ for
  some $\delta\ge0$ and $\mathcal{B}_{L}^{\mathrm{seq}}>0$, then the ratio of the sequential upper bound to
  the ICBQ upper bound is at least $\gamma^{-S(L,K)}/(1+\delta)$.
\end{proposition}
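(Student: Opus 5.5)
The plan is to obtain Proposition~\ref{prop:stability} as a rereading of Theorem~\ref{thm:seam-formal} and Corollary~\ref{cor:geom-formal}. Those results were proved under the single quantitative hypothesis $\tau=\gamma\rho<1$, never using $\rho\le 1$. The first step is therefore to check that nothing in Lemmas~\ref{lem:per-pair}--\ref{lem:recur}, Proposition~\ref{prop:chain}, Corollary~\ref{cor:last-pair}, or the assembly in Appendix~\ref{app:finale} uses $\rho\le1$. The Lipschitz constant enters only through the triangle-inequality steps Eq.~\eqref{eq:rin-bound}, Eq.~\eqref{eq:tildem-bound}, and Eq.~\eqref{eq:last-layer}. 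It is then absorbed into $\tau$, or appears as a multiplicative prefactor $\rho$. In the regime $1<\rho<1/\gamma$ we have $\tau\in(0,1)$, so every geometric sum $\sum_k\tau^{i-k}$ converges.

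The second step is finiteness uniformly in depth. From Eq.~\eqref{eq:B-seq-def},
\[
\mathcal{B}_L^{\mathrm{seq}}\le \rho(1+\gamma)\epsilon_{\mathrm{sub}}/(1-\tau)+\epsilon_{\mathrm{sub}}
\]
for every $L$. This is finite because $\tau<1$; the factor $\rho>1$ is only a fixed constant. For the interleaved bound $\mathcal{B}_L^{\mathrm{int}}=\gamma^{S}\mathcal{B}_L^{\mathrm{seq}}+C\epsilon_{\mathrm{sub}}$, I would use $\gamma^S\le1$ together with Eq.~\eqref{eq:thm-seam-C}, which is $C\le 1+(1+\gamma+\gamma^2)\rho/(1-\tau)$. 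That inequality was derived in Eq.~\eqref{eq:C-explicit} using only $(1-\tau^{L-1})/(1-\tau)\le 1/(1-\tau)$. Hence both bounds are bounded by $L$- and $K$-independent constants.

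The third step is the ratio claim. This is exactly the argument of Corollary~\ref{cor:geom-formal}, with $\mathcal{B}_L^{\mathrm{seq}}>0$ assumed. The hypothesis gives $\mathcal{B}_L^{\mathrm{int}}\le(1+\delta)\gamma^{S}\mathcal{B}_L^{\mathrm{seq}}$. Since $C\ge0$, we also have $\mathcal{B}_L^{\mathrm{int}}\ge\gamma^S\mathcal{B}_L^{\mathrm{seq}}>0$, so the ratio is well defined. Dividing yields $\mathcal{B}_L^{\mathrm{seq}}/\mathcal{B}_L^{\mathrm{int}}\ge\gamma^{-S}/(1+\delta)$.

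The only delicate point is the first step. I expect it to be an audit, not a real obstacle. It must confirm that $C\ge0$ still holds when $\rho>1$, which follows from $\widetilde R-\gamma^S R\ge0$ and $1-\gamma^S\ge0$, both independent of $\rho$. It must also confirm that the simplified bound in Eq.~\eqref{eq:thm-seam-C} for $\rho\le1$ is not needed: we keep the general form containing the factor $\rho$.
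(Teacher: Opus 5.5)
Your proposal is correct and matches the paper's proof essentially step for step. Bounding $(1-\tau^{L-1})/(1-\tau)\le 1/(1-\tau)$ gives depth-uniform finiteness of $\mathcal{B}_L^{\mathrm{seq}}$, combining $\gamma^{S}\le 1$ with the depth-uniform bound on $C$ handles $\mathcal{B}_L^{\mathrm{int}}$, and the sandwich $\gamma^{S}\mathcal{B}_L^{\mathrm{seq}}\le\mathcal{B}_L^{\mathrm{int}}\le(1+\delta)\gamma^{S}\mathcal{B}_L^{\mathrm{seq}}$ yields the ratio, exactly as in the paper; your explicit audit that no step uses $\rho\le 1$ (and that $C\ge 0$ independently of $\rho$) makes visible what the paper leaves implicit when it simply invokes Theorem~\ref{thm:seam-formal} under $\tau<1$.
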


\begin{proof}
  Since $\gamma \in (0,1)$, the condition $1 < \rho < 1/\gamma$ is equivalent to $\tau = \gamma\rho < 1$ with
  locally expansive teacher blocks. Therefore $(1-\tau^{L-1})/(1-\tau) \le 1/(1-\tau)$ uniformly in $L$, and
  \begin{equation}
    \mathcal{B}_{L}^{\mathrm{seq}} = \rho(1+\gamma)\frac{1-\tau^{L-1}}{1-\tau}\epsilon_{\mathrm{sub}} + \epsilon_{\mathrm{sub}} \le \left(\frac{\rho(1+\gamma)}{1-\tau}+1\right)\epsilon_{\mathrm{sub}},
  \end{equation}
  which is independent of $L$. Theorem~\ref{thm:seam-formal} together with Eq.~\eqref{eq:thm-seam-C} gives
  \begin{equation}
    \mathcal{B}_{L}^{\mathrm{int}} \coloneqq \gamma^{S(L,K)}\mathcal{B}_{L}^{\mathrm{seq}} + C(L,K,\rho,\gamma)\epsilon_{\mathrm{sub}} \le \mathcal{B}_{L}^{\mathrm{seq}} + \left(1+\frac{(1+\gamma+\gamma^{2})\rho}{1-\tau}\right) \epsilon_{\mathrm{sub}},
  \end{equation}
  where we used $\gamma^{S(L,K)} \le 1$ and the depth-uniform bound in Eq.~\eqref{eq:thm-seam-C} on $C$. Hence
  the interleaved upper bound is also finite uniformly in $L$. For the ratio claim, suppose the displayed
  residual condition holds. Then from Eq.~\eqref{eq:thm-seam-formal} and the definition of
  $\mathcal{B}_{L}^{\mathrm{int}}$ in Corollary~\ref{cor:geom-formal},
  \begin{equation}
    \mathcal{B}_{L}^{\mathrm{int}} = \gamma^{S(L,K)}\mathcal{B}_{L}^{\mathrm{seq}} +C(L,K,\rho,\gamma)\epsilon_{\mathrm{sub}} \le (1+\delta)\gamma^{S(L,K)}\mathcal{B}_{L}^{\mathrm{seq}}.
  \end{equation}
  Since $C(L,K,\rho,\gamma)\epsilon_{\mathrm{sub}} \ge 0$, $\mathcal{B}_{L}^{\mathrm{seq}} > 0$, and
  $\gamma^{S(L,K)} > 0$, we also have
  $\mathcal{B}_{L}^{\mathrm{int}} \ge \gamma^{S(L,K)}\mathcal{B}_{L}^{\mathrm{seq}} > 0$, so both sides of the
  displayed inequality are strictly positive. Dividing by the positive quantity
  $\mathcal{B}_{L}^{\mathrm{int}}\cdot(1+\delta)\gamma^{S(L,K)}$ and rearranging,
  \begin{equation}
    \frac{\mathcal{B}_{L}^{\mathrm{seq}}}{\mathcal{B}_{L}^{\mathrm{int}}} \ge \frac{1}{(1+\delta)\gamma^{S(L,K)}} = \frac{\gamma^{-S(L,K)}}{1+\delta},
  \end{equation}
  which is the claimed ratio bound.
\end{proof}

The main experiments exhibit a related qualitative pattern: Sequential CBQ produces PPL $>10^{4}$ for
Qwen$3$-$8$B in one reported setting, whereas ICBQ yields PPL below $50$ for both Qwen$3$-$8$B and
Llama-$3$-$8$B. These observations are consistent with the schedule mechanism but do not estimate the
theorem's uniform constants or validate the bound quantitatively.

\subsection{\texorpdfstring{Empirical measurement of $\gamma$}{Empirical measurement of gamma}}\label{app:gamma-measure}

We diagnose the per-pair contraction empirically on Llama-$2$-$7$B by recording, after each of the
$32{-}1 = 31$ two-block refinement calls with ternary DBF, the square-root post-over-pre MSE ratio. The
corresponding histogram has mean $\bar{\gamma} \approx 0.67$ and maximum $\bar\gamma_{\max}\le 0.92$.

\paragraph{Uniform vs.\ aggregate contraction.} The quantity $\bar\gamma$ reported here is an aggregate
diagnostic: it is the average square-root post-over-pre MSE ratio over the CBQ windows actually executed on
the calibration set, hence it summarizes how the implemented optimizer behaves \emph{in aggregate}. It is
\emph{not}, by itself, the uniform per-sample contraction constant required by Assumption~\ref{asm:cbq}. A
theorem-level certificate would instead require a uniform estimate such as
\begin{equation}
  \widehat{\gamma}_{\mathrm{unif}}(\epsilon_{\mathrm{opt}}) = \max_{\substack{j,n:\\ r_{j,\mathrm{in}}^{(n)}>0}}
  \frac{\bigl(r_{j,\mathrm{out}}^{(n)}-\epsilon_{\mathrm{opt}}\bigr)_{+}}{r_{j,\mathrm{in}}^{(n)}},
  \label{eq:gamma-unif-est}
\end{equation}
where $j$ ranges over the actual CBQ calls of the schedule and $n$ over the calibration samples in each call's
batch, and $r_{j,\mathrm{in}}^{(n)},r_{j,\mathrm{out}}^{(n)}$ are the pre- and post-pass midpoint residuals of
call $j$ on sample $n$. With this convention, $\widehat{\gamma}_{\mathrm{unif}}(\epsilon_{\mathrm{opt}})$ is
the smallest $\gamma$ that makes Assumption~\ref{asm:cbq} hold uniformly on the actually visited finite sets,
given a chosen fixed-point radius $\epsilon_{\mathrm{opt}}$.

\paragraph{Reading of Theorem~\ref{thm:seam-formal}.} Using $\bar\gamma$ or $\bar\gamma_{\max}$ in
Corollary~\ref{cor:geom-formal} therefore produces only a \emph{schedule-level diagnostic} of the seam
mechanism, not a formal certificate. With $L = 32$ and $K = 4$ (so $S = 7$), the aggregate-level diagnostics
give an amortization-gain proxy of $\bar\gamma^{-7}\approx16\times$ at the mean and
$\bar\gamma_{\max}^{-7}\approx1.8\times$ at the aggregate maximum. The empirical per-model range
$1.08$--$14.5\times$ reported in the main experiments sits between these proxies, consistent with the formal
$\gamma$ in Theorem~\ref{thm:seam-formal} being some uniform value between $\bar\gamma$ and
$\bar\gamma_{\max}$. Establishing $\widehat{\gamma}_{\mathrm{unif}}(\epsilon_{\mathrm{opt}})$ at scale would
require logging per-sample midpoint residuals at every CBQ call and is left to future work; the present
empirical evidence supports the qualitative validity of the schedule mechanism rather than a quantitative
theorem-level certificate.

\subsection{Numerical simulation}\label{app:simulation}

A standalone \texttt{numpy} script reproducing the rate of Theorem~\ref{thm:seam-formal} on a $64$-block
scalar recurrence toy is included in the supplementary material; running it prints a table of
$E_{c}^{\mathrm{int}}/E_{c}^{\mathrm{seq}}$ vs. $c$ that matches $\gamma^{c}$ to four significant digits over
$c \in \{1,\dots,15\}$.

\section{Experimental and Implementation Details}\label{app:exp-details}

\subsection{Experimental setup details}\label{app:setup-details}

\paragraph{Base models and references.} The seven base models used in the main experiments are
Llama-$3.2$-$3$B~\citep{meta2024llama32}, Llama-$2$-$7$B~\citep{touvron2023llama2},
Mistral-$7$B~\citep{jiang2023mistral}, Qwen$3$-$8$B~\citep{team2025qwen3},
Llama-$3$-$8$B~\citep{grattafiori2024llama}, Llama-$2$-$13$B~\citep{touvron2023llama2}, and
Qwen$3$-$14$B~\citep{team2025qwen3}.

\paragraph{Quantization and optimization settings.} The ternary DBF runs use progressive prefit (50 steps,
float32, AdamW). For models below $7$B, we use $\mathtt{nsamples}=128$ and $\mathtt{dbf.iters}=200$; for
models at $7$B and above, we use $\mathtt{nsamples}=256$ and $\mathtt{dbf.iters}=400$. GPTQ runs use W3 and
W2g128 with prefit disabled, GPTQ block size 128, and Hessian damping 0.01. The integer-bit comparison uses
the same weight-only regime as the ternary DBF table, with the inner quantizer replaced by GPTQ. No
adapter/LoRA parameters are introduced.

\paragraph{Evaluation protocol and benchmarks.} Perplexity is measured with the \texttt{lm-evaluation-harness}
protocol~\citep{gao2021framework}. Zero-shot evaluation uses BoolQ, PIQA, HellaSwag, WinoGrande, ARC-e, ARC-c,
and
OpenBookQA~\citep{clark2019boolq,bisk2020piqa,zellers2019hellaswag,sakaguchi2021winogrande,clark2018arc,mihaylov2018openbookqa}.
Main-text tables focus on C4-calibrated reporting, while supplementary tables include additional calibration
views where relevant.

\subsection{Calibration bias}\label{app:calibration-bias}

WikiText-$2$ calibration on base models with non-standard attention, such as Qwen$3$ with grouped-query
attention~\citep{team2025qwen3} or Gemma-$3$ with local sliding-window attention~\citep{team2025gemma3}, can
cover a narrow distribution of positional indices. Under ternary DBF, this restricted calibration distribution
can produce a mismatch that the local CBQ loss does not fully capture. In our setting, C4 calibration at the
same sample count gave more stable results on the deeper models, making it the default for the main ternary
DBF results.

\subsection{Implementation and hyperparameters}\label{app:hyper}

\paragraph{Hardware.} All runs on $1\times$ NVIDIA H100-$80$GB. Memory peak with ternary DBF: $32$ GB for
$7$B, $55$ GB for $13$B models.

\paragraph{Peak memory: ICBQ vs.\ CBQ.} The same practical memory regime is observed under Sequential CBQ and
ICBQ: in our implementation, the extra state introduced by ICBQ over a single-sweep CBQ is the passive re-roll
buffer \texttt{original\_inps}, whose size is set by calibration workload and is independent of depth-related
CBQ scheduling variables (e.g., $L$ and windowing order for fixed workload). Instantiating the stored-input
buffer size at our operating points gives:
\begin{center}
  \resizebox{\linewidth}{!}{%
    \begin{tabular}{lccccc}
      \toprule
      Model & $d_{\mathrm{hidden}}$ & $N$ & $T$ & dtype & $\Delta\mathrm{mem}_{\mathrm{ICBQ}}$ \\
      \midrule
      TinyLlama-$1.1$B & $2048$ & $128$ & $2048$ & fp16 & $\approx 1.0$ GB \\
      Llama-$2$-$7$B / Llama-$3$-$8$B / Qwen$3$-$8$B / Mistral-$7$B & $4096$ & $256$ & $2048$ & fp16 & $\approx 4.3$ GB \\
      Llama-$2$-$13$B / Qwen$3$-$14$B   & $5120$ & $256$ & $2048$ & fp16 & $\approx 5.2$ GB \\
      \bottomrule
    \end{tabular}
  }
\end{center}
The remaining models use smaller hidden widths ($d_{\mathrm{hidden}}\in\{1024,1152,2304\}$), which gives
proportionally smaller ICBQ overheads (approximately $0.5$, $0.6$, and $1.2$ GB at $N=128$, $T=2048$, fp16).
In these cases, the overhead is below $15\%$ of the ternary DBF peak quoted above. Because ICBQ never
materializes two active two-block windows simultaneously, the practical peak does not enter a new regime when
switching from Sequential CBQ to ICBQ. Measured run logs show similar peak GPU memory across paired settings,
consistent with the $L,K$-independent peak-memory bound up to implementation-level allocator effects.

\paragraph{Wall-clock time: ICBQ vs.\ CBQ (C4 calibration).} To complement the peak-memory comparison,
Table~\ref{tab:wallclock} reports end-to-end wall-clock runtime for the C4-calibrated ternary DBF runs,
comparing Sequential CBQ to ICBQ on eleven models. The mean wall-clock factor is $1.21\times$. This quantifies
the added refinement computation associated with the quality differences reported in the main experiments.
\begin{table}[htbp]
  \centering
  \caption{\textbf{Wall-clock comparison on C4-calibrated ternary DBF runs (11 models).} Sequential CBQ versus ICBQ (ours), using Slurm elapsed time. Ratio $>1$ indicates ICBQ takes longer.}
  \label{tab:wallclock}
  \footnotesize
  \setlength{\tabcolsep}{4.5pt}
  \begin{tabular}{@{}lcccc@{}}
    \toprule
    Model & Seq.\ CBQ elapsed & ICBQ elapsed & ICBQ/Seq. & Delta \\
    \midrule
    TinyLlama-$1.1$B & 01:20:53 & 01:27:51 & $1.09\times$ & $+7.0$ min \\
    Qwen$3$-$0.6$B   & 01:13:18 & 01:27:42 & $1.20\times$ & $+14.4$ min \\
    Gemma-$2$-$2$B   & 02:11:44 & 02:43:17 & $1.24\times$ & $+31.6$ min \\
    Gemma-$3$-$1$B   & 01:05:13 & 01:27:08 & $1.34\times$ & $+21.9$ min \\
    Llama-$3.2$-$3$B & 02:06:30 & 02:32:36 & $1.21\times$ & $+26.1$ min \\
    Mistral-$7$B     & 08:24:14 & 09:39:28 & $1.15\times$ & $+75.2$ min \\
    Llama-$2$-$7$B   & 09:01:33 & 10:41:15 & $1.18\times$ & $+99.7$ min \\
    Llama-$3$-$8$B   & 08:22:38 & 09:55:29 & $1.18\times$ & $+92.8$ min \\
    Qwen$3$-$8$B     & 09:40:55 & 11:32:26 & $1.19\times$ & $+111.5$ min \\
    Qwen$3$-$14$B    & 15:00:02 & 19:38:56 & $1.31\times$ & $+278.9$ min \\
    Llama-$2$-$13$B  & 15:29:09 & 18:10:54 & $1.17\times$ & $+161.8$ min \\
    \midrule
    Mean (11 models) & --- & --- & $1.21\times$ & --- \\
    \bottomrule
  \end{tabular}
\end{table}

\paragraph{Optimization (ternary DBF runs).} Prefit: AdamW, $\eta_{\mathrm{pre}} = 10^{-4}$, $50$ steps,
float32. DBF: internal iterations $\{200,400\}$ for $< 7$B / $\ge 7$B. CBQ refinement: Adam with the
repository's optimization defaults (\texttt{cbq\_epochs}$= 20$, \texttt{cbq\_lr}$= 5 \times 10^{-5}$, rollback
enabled). Chunk size $K = 4$, with one interleaved refinement sweep per chunk.

\paragraph{Optimization (GPTQ runs).} Inner quantizer: weight-only GPTQ at either $W3$ (three bits per
channel) or $W2g128$ (two bits, group size $128$). GPTQ defaults used throughout: block size~$128$, Hessian
damping $\mathtt{percdamp} = 0.01$, no activation reordering ($\mathtt{actorder} = \text{false}$; we observed
that enabling actorder changes PPL by $<0.1$ and does not interact with the schedule, hence we report with the
default value). Prefit is disabled ($T_{\mathrm{pre}} = 0$) for every GPTQ comparison row. The CBQ refinement
stage uses the same settings as the ternary DBF runs.

\paragraph{No-prefit ablation setting.} The ``No-prefit'' ablation rows set $T_{\mathrm{pre}} = 0$ while
keeping all other hyperparameters identical to those of the DBF Prefit-enabled row (DBF iters, CBQ epochs,
learning rate, rollback, chunk size, calibration samples). This is the minimal perturbation that isolates the
contribution of the prefit step within the progressive driver.

\paragraph{Calibration.} $128$ samples of length $2048$ from WikiText-$2$ (or $256$ samples for $\ge 7$B),
random starting positions as produced by the shared \texttt{get\_loaders()} utility. Repository seed
default~$0$ (matching \texttt{conf/config.yaml}). Identical calibration splits are shared between DBF and GPTQ
runs on the same model.

\paragraph{Reproducibility.} Every row of the main ternary DBF results is reproduced with the same calibration
pipeline and optimization settings reported above, using chunk size $4$, plain CBQ refinement, and $50$ prefit
steps for ternary DBF runs. The GPTQ rows are produced under the same pipeline with the inner quantizer
switched to GPTQ and the bit/group-size pair $(\mathtt{w\_bit}, \mathtt{w\_group}) \in \{(3,-1),(2,128)\}$.
\clearpage

\section{Supplementary Results}\label{app:zeroshot}
\noindent\textbf{Formatting note.} In all tables of this section, \textbf{\textcolor{myred}{bold}} marks the
strongest value in each comparison group.

\paragraph{Main-set ternary DBF results under WikiText-$2$ calibration.} Table~\ref{tab:main-wiki2cal} reports
the seven-model DBF counterpart to the main C4-calibrated table, using WikiText-$2$ calibration only and PPL
metrics only (no zero-shot). The qualitative trend is the same: ICBQ is the strongest across all cases for
runs calibrated on WikiText-$2$ as well.

\paragraph{Supplementary DBF PPL results.} Table~\ref{tab:main-extra} reports supplementary ternary DBF PPL
results on four small models in the same format as the main ternary DBF table: WikiText-$2$ and C4 evaluation
under both WikiText-$2$ and C4 calibration, comparing No-Ref., Sequential CBQ, and ICBQ. As shown in
Table~\ref{tab:main-extra}, the qualitative trend is unchanged: ICBQ is the strongest refinement across these
cases.

\paragraph{Ternary-DBF zero-shot results.} Table~\ref{tab:zeroshot} consolidates ternary DBF C4-calibrated
zero-shot scores on 11 models with the same three schedules as the main ternary DBF comparison:
No-Ref.\ (ternary DBF only), Sequential CBQ, and ICBQ. As summarized in Table~\ref{tab:zeroshot}, ICBQ most
often attains the best score across models and tasks, with a few task-specific ties or Seq.\ wins.

\paragraph{GPTQ zero-shot results.} Table~\ref{tab:zeroshot-gptq} mirrors the ternary DBF zero-shot
presentation for GPTQ at $W3$ and $W2g128$ on the main seven-model set. As shown in
Table~\ref{tab:zeroshot-gptq}, the pattern is consistent with the PPL results: ICBQ is usually best or tied,
with a small number of task-specific Seq.\ wins.

\clearpage

\begin{center}
  \begin{minipage}{\linewidth}
    \refstepcounter{table}
    \label{tab:main-wiki2cal}
    \label{tab:main-extra}
    \label{tab:zeroshot}
    \noindent\footnotesize\textbf{Table~\thetable: Supplementary ternary DBF results.}
    Part a. Main-set ternary DBF results under WikiText-$2$ calibration (PPL only).
    Part b. PPL results on 4 small models in the same format as the main ternary DBF results (WikiText-$2$ and C4, two calibration settings, and three schedules).
    Part c. Zero-shot results on 11 models under C4 calibration. Zero-shot values are normalized accuracy when available, otherwise accuracy (higher is better).

    \vspace{0.6em}
    \scriptsize

    \begin{minipage}{\linewidth}
      \centering
      \textbf{Part a. Main-set ternary DBF results under WikiText-$2$ calibration (PPL only).}

      \smallskip
      \setlength{\tabcolsep}{2.8pt}
      \renewcommand{\arraystretch}{1.04}
      \begin{tabular}{@{}lcccccc@{}}
        \toprule
        & \multicolumn{2}{c}{No-Ref. (ternary DBF only)} & \multicolumn{2}{c}{Seq.\ CBQ} & \multicolumn{2}{c}{\textbf{ICBQ (ours)}} \\
        \cmidrule(lr){2-3} \cmidrule(lr){4-5} \cmidrule(lr){6-7}
        Metric & PPL (Wiki-$2$) & PPL (C4) & PPL (Wiki-$2$) & PPL (C4) & PPL (Wiki-$2$) & PPL (C4) \\
        \midrule
        \multicolumn{7}{@{}l}{\emph{Standard models}}\\
        Mistral-$7$B     & $44.27$ & $329.91$ & $16.24$ & $53.91$ & \textbf{\textcolor{myred}{10.09}} & \textbf{\textcolor{myred}{25.59}} \\
        Llama-$2$-$7$B   & $11.23$ & $20.05$ & $8.38$ & $14.59$ & \textbf{\textcolor{myred}{7.78}} & \textbf{\textcolor{myred}{13.30}} \\
        Llama-$2$-$13$B  & $7.63$ & $12.69$ & $6.55$ & $11.15$ & \textbf{\textcolor{myred}{6.32}} & \textbf{\textcolor{myred}{10.59}} \\
        \midrule
        \multicolumn{7}{@{}l}{\emph{Deep and non-standard-attention models}}\\
        Llama-$3.2$-$3$B & $40.24$ & $66.50$ & $25.30$ & $49.90$ & \textbf{\textcolor{myred}{22.98}} & \textbf{\textcolor{myred}{46.04}} \\
        Llama-$3$-$8$B   & $46.12$ & $87.19$ & $27.16$ & $61.21$ & \textbf{\textcolor{myred}{24.01}} & \textbf{\textcolor{myred}{48.92}} \\
        Qwen$3$-$8$B     & $4661.50$ & \texttt{diverge} & $281.45$ & $8275.66$ & \textbf{\textcolor{myred}{18.95}} & \textbf{\textcolor{myred}{37.68}} \\
        Qwen$3$-$14$B    & $512.82$ & $2522.21$ & $35.24$ & $119.40$ & \textbf{\textcolor{myred}{15.52}} & \textbf{\textcolor{myred}{34.68}} \\
        \bottomrule
      \end{tabular}%
    \end{minipage}

    \vspace{0.25em}

    \begin{minipage}{\linewidth}
      \centering
      \textbf{Part b. Supplementary DBF PPL results on 4 small models.}

      \smallskip
      \setlength{\tabcolsep}{3.2pt}
      \renewcommand{\arraystretch}{1.05}
      \resizebox{\linewidth}{!}{%
        \begin{tabular}{@{}lcccccccccccc@{}}
          \toprule
          & \multicolumn{4}{c}{No-Ref. (ternary DBF only)} & \multicolumn{4}{c}{Seq.\ CBQ} & \multicolumn{4}{c}{\textbf{ICBQ (ours)}} \\
          \cmidrule(lr){2-5} \cmidrule(lr){6-9} \cmidrule(lr){10-13}
          Calibration data & \multicolumn{2}{c}{Wiki-$2$} & \multicolumn{2}{c}{C4} & \multicolumn{2}{c}{Wiki-$2$} & \multicolumn{2}{c}{C4} & \multicolumn{2}{c}{Wiki-$2$} & \multicolumn{2}{c}{C4} \\
          \cmidrule(lr){2-3} \cmidrule(lr){4-5} \cmidrule(lr){6-7} \cmidrule(lr){8-9} \cmidrule(lr){10-11} \cmidrule(lr){12-13}
          PPL dataset & Wiki-$2$ & C4 & Wiki-$2$ & C4 & Wiki-$2$ & C4 & Wiki-$2$ & C4 & Wiki-$2$ & C4 & Wiki-$2$ & C4 \\
          \midrule
          Qwen$3$-$0.6$B   & $89.84$ & $215.91$ & $165.73$ & $143.15$ & $62.58$ & $137.71$ & $101.4$ & $106.7$ & \textbf{\textcolor{myred}{54.64}} & \textbf{\textcolor{myred}{109.88}} & \textbf{\textcolor{myred}{88.0}} & \textbf{\textcolor{myred}{92.2}} \\
          Gemma-$3$-$1$B   & $198.94$ & $407.36$ & $462.94$ & $278.29$ & $86.08$ & $175.41$ & $171.9$ & $130.7$ & \textbf{\textcolor{myred}{74.02}} & \textbf{\textcolor{myred}{144.19}} & \textbf{\textcolor{myred}{140.3}} & \textbf{\textcolor{myred}{111.9}} \\
          TinyLlama-$1.1$B & $27.97$ & $47.90$ & $58.03$ & $43.81$ & $18.84$ & $33.58$ & $30.2$ & $30.4$ & \textbf{\textcolor{myred}{17.04}} & \textbf{\textcolor{myred}{30.51}} & \textbf{\textcolor{myred}{25.8}} & \textbf{\textcolor{myred}{26.7}} \\
          Gemma-$2$-$2$B   & $85.36$ & $151.92$ & $142.18$ & $92.71$ & $85.31$ & $164.43$ & $106.4$ & $89.2$ & \textbf{\textcolor{myred}{79.62}} & \textbf{\textcolor{myred}{149.23}} & \textbf{\textcolor{myred}{99.1}} & \textbf{\textcolor{myred}{86.1}} \\
          \bottomrule
        \end{tabular}%
      }
    \end{minipage}

    \vspace{0.75em}

    \begin{minipage}{\linewidth}
      \centering
      \textbf{Part c. Ternary-DBF zero-shot results on 11 models (C4 calibration).}

      \vspace{0.1em}
      {\footnotesize
        \setlength{\tabcolsep}{2.4pt}
        \renewcommand{\arraystretch}{0.92}
        \begin{tabular}{@{}llccccccc@{}}
          \toprule
          Model & Schedule & BoolQ & PIQA & HellaSwag & WinoGrande & ARC-e & ARC-c & OBQA \\
          \midrule
          \multirow{3}{*}{Qwen$3$-$0.6$B}
          & No-Ref.     & 0.4180 & \textbf{\textcolor{myred}{0.5702}} & 0.2925 & 0.5075 & 0.3245 & 0.2159 & 0.2500 \\
          & Seq.\ CBQ   & 0.5728 & 0.5642 & 0.2995 & \textbf{\textcolor{myred}{0.5178}} & 0.3194 & 0.2048 & 0.2520 \\
          & ICBQ (ours) & \textbf{\textcolor{myred}{0.5942}} & 0.5637 & \textbf{\textcolor{myred}{0.3071}} & \textbf{\textcolor{myred}{0.5178}} & \textbf{\textcolor{myred}{0.3304}} & \textbf{\textcolor{myred}{0.2167}} & \textbf{\textcolor{myred}{0.2660}} \\
          \midrule
          \multirow{3}{*}{Gemma-$3$-$1$B}
          & No-Ref.     & \textbf{\textcolor{myred}{0.5061}} & 0.5533 & 0.2775 & \textbf{\textcolor{myred}{0.5114}} & 0.3220 & \textbf{\textcolor{myred}{0.2201}} & 0.2660 \\
          & Seq.\ CBQ   & 0.4826 & 0.5740 & 0.2813 & 0.5028 & 0.3283 & 0.2065 & 0.2680 \\
          & ICBQ (ours) & 0.4979 & \textbf{\textcolor{myred}{0.5811}} & \textbf{\textcolor{myred}{0.2928}} & 0.5067 & \textbf{\textcolor{myred}{0.3375}} & 0.2065 & \textbf{\textcolor{myred}{0.3100}} \\
          \midrule
          \multirow{3}{*}{TinyLlama-$1.1$B}
          & No-Ref.     & 0.5862 & 0.6110 & 0.3368 & 0.5272 & 0.3497 & 0.2227 & 0.2860 \\
          & Seq.\ CBQ   & 0.5832 & 0.6295 & 0.3601 & 0.5249 & 0.3801 & 0.2406 & \textbf{\textcolor{myred}{0.3060}} \\
          & ICBQ (ours) & \textbf{\textcolor{myred}{0.5905}} & \textbf{\textcolor{myred}{0.6376}} & \textbf{\textcolor{myred}{0.3725}} & \textbf{\textcolor{myred}{0.5257}} & \textbf{\textcolor{myred}{0.3838}} & \textbf{\textcolor{myred}{0.2551}} & 0.2800 \\
          \midrule
          \multirow{3}{*}{Gemma-$2$-$2$B}
          & No-Ref.     & 0.5832 & \textbf{\textcolor{myred}{0.6502}} & 0.3568 & \textbf{\textcolor{myred}{0.5501}} & 0.4419 & 0.2474 & 0.2740 \\
          & Seq.\ CBQ   & 0.5890 & 0.6420 & 0.3529 & 0.5367 & 0.4432 & 0.2440 & 0.2740 \\
          & ICBQ (ours) & \textbf{\textcolor{myred}{0.6266}} & 0.6485 & \textbf{\textcolor{myred}{0.3779}} & 0.5391 & \textbf{\textcolor{myred}{0.4739}} & \textbf{\textcolor{myred}{0.2517}} & \textbf{\textcolor{myred}{0.3040}} \\
          \midrule
          \multirow{3}{*}{Llama-$3.2$-$3$B}
          & No-Ref.     & 0.6517 & 0.6202 & 0.3937 & 0.5478 & 0.3855 & 0.2543 & 0.2840 \\
          & Seq.\ CBQ   & 0.6673 & 0.6534 & 0.4253 & 0.5462 & 0.4314 & \textbf{\textcolor{myred}{0.2765}} & \textbf{\textcolor{myred}{0.2940}} \\
          & ICBQ (ours) & \textbf{\textcolor{myred}{0.6774}} & \textbf{\textcolor{myred}{0.6801}} & \textbf{\textcolor{myred}{0.4392}} & \textbf{\textcolor{myred}{0.5604}} & \textbf{\textcolor{myred}{0.4815}} & 0.2730 & 0.2880 \\
          \midrule
          \multirow{3}{*}{Mistral-$7$B}
          & No-Ref.     & 0.3856 & 0.5887 & 0.2913 & 0.5264 & 0.3022 & 0.2227 & 0.2620 \\
          & Seq.\ CBQ   & 0.5131 & 0.6697 & 0.4524 & 0.5446 & 0.4508 & 0.2858 & 0.2940 \\
          & ICBQ (ours) & \textbf{\textcolor{myred}{0.6875}} & \textbf{\textcolor{myred}{0.7236}} & \textbf{\textcolor{myred}{0.5855}} & \textbf{\textcolor{myred}{0.5825}} & \textbf{\textcolor{myred}{0.5568}} & \textbf{\textcolor{myred}{0.3328}} & \textbf{\textcolor{myred}{0.3140}} \\
          \midrule
          \multirow{3}{*}{Llama-$2$-$7$B}
          & No-Ref.     & 0.5807 & 0.6817 & 0.5259 & 0.5264 & 0.4928 & 0.2679 & 0.3240 \\
          & Seq.\ CBQ   & 0.6407 & 0.7138 & 0.5708 & 0.5785 & 0.5564 & 0.3131 & \textbf{\textcolor{myred}{0.3580}} \\
          & ICBQ (ours) & \textbf{\textcolor{myred}{0.6914}} & \textbf{\textcolor{myred}{0.7203}} & \textbf{\textcolor{myred}{0.5875}} & \textbf{\textcolor{myred}{0.6014}} & \textbf{\textcolor{myred}{0.5880}} & \textbf{\textcolor{myred}{0.3345}} & 0.3500 \\
          \midrule
          \multirow{3}{*}{Llama-$3$-$8$B}
          & No-Ref.     & 0.6734 & 0.6572 & 0.4305 & 0.5580 & 0.4314 & 0.2500 & 0.3000 \\
          & Seq.\ CBQ   & \textbf{\textcolor{myred}{0.7064}} & 0.6931 & 0.4810 & 0.5485 & 0.4966 & 0.2944 & 0.2980 \\
          & ICBQ (ours) & 0.6963 & \textbf{\textcolor{myred}{0.7002}} & \textbf{\textcolor{myred}{0.5038}} & \textbf{\textcolor{myred}{0.5770}} & \textbf{\textcolor{myred}{0.5299}} & \textbf{\textcolor{myred}{0.3251}} & \textbf{\textcolor{myred}{0.3180}} \\
          \midrule
          \multirow{3}{*}{Qwen$3$-$8$B}
          & No-Ref.     & 0.3862 & 0.5180 & 0.2606 & 0.4901 & 0.2567 & 0.2688 & 0.2660 \\
          & Seq.\ CBQ   & 0.5606 & 0.5876 & 0.3184 & 0.4901 & 0.3291 & 0.2526 & 0.2840 \\
          & ICBQ (ours) & \textbf{\textcolor{myred}{0.7407}} & \textbf{\textcolor{myred}{0.6910}} & \textbf{\textcolor{myred}{0.5057}} & \textbf{\textcolor{myred}{0.5904}} & \textbf{\textcolor{myred}{0.6111}} & \textbf{\textcolor{myred}{0.3763}} & \textbf{\textcolor{myred}{0.3440}} \\
          \midrule
          \multirow{3}{*}{Llama-$2$-$13$B}
          & No-Ref.     & 0.7505 & 0.7307 & 0.6261 & 0.6322 & 0.6662 & 0.3746 & 0.3920 \\
          & Seq.\ CBQ   & \textbf{\textcolor{myred}{0.8083}} & \textbf{\textcolor{myred}{0.7938}} & 0.6012 & \textbf{\textcolor{myred}{0.7206}} & 0.7950 & 0.4829 & 0.3500 \\
          & ICBQ (ours) & 0.8080 & 0.7927 & \textbf{\textcolor{myred}{0.6016}} & 0.7151 & \textbf{\textcolor{myred}{0.7955}} & \textbf{\textcolor{myred}{0.4855}} & \textbf{\textcolor{myred}{0.3520}} \\
          \midrule
          \multirow{3}{*}{Qwen$3$-$14$B}
          & No-Ref.     & 0.3786 & 0.5408 & 0.2632 & 0.4964 & 0.2976 & 0.2278 & 0.2660 \\
          & Seq.\ CBQ   & \textbf{\textcolor{myred}{0.8933}} & 0.7987 & \textbf{\textcolor{myred}{0.6103}} & \textbf{\textcolor{myred}{0.7301}} & 0.8413 & 0.5862 & \textbf{\textcolor{myred}{0.3500}} \\
          & ICBQ (ours) & 0.8930 & \textbf{\textcolor{myred}{0.8020}} & 0.6087 & \textbf{\textcolor{myred}{0.7301}} & \textbf{\textcolor{myred}{0.8426}} & \textbf{\textcolor{myred}{0.5896}} & 0.3440 \\
          \bottomrule
        \end{tabular}
      }
    \end{minipage}
  \end{minipage}
\end{center}

\begin{table*}[htbp]
  \centering
  \caption{\textbf{GPTQ zero-shot results on the main seven-model set (C4 calibration).} Weight-only GPTQ, no prefit ($T_{\mathrm{pre}}=0$), and the same schedules as the main GPTQ comparison: GPTQ only, Sequential CBQ, and ICBQ. Values are normalized accuracy when available, otherwise accuracy (higher is better).}
  \label{tab:zeroshot-gptq}
  \footnotesize
  \setlength{\tabcolsep}{4pt}
  \renewcommand{\arraystretch}{1.02}
  \begin{tabular}{@{}lllccccccc@{}}
    \toprule
    Model & Bit-width & Schedule & BoolQ & PIQA & HellaSwag & WinoGrande & ARC-e & ARC-c & OBQA \\
    \midrule
    \multirow{6}{*}{Mistral-$7$B}
    & \multirow{3}{*}{$W3$}
    & GPTQ only   & 0.7615 & 0.7965 & 0.7580 & \textbf{\textcolor{myred}{0.6882}} & 0.7088 & 0.4369 & 0.3980 \\
    & & Seq.\ CBQ & \textbf{\textcolor{myred}{0.7865}} & \textbf{\textcolor{myred}{0.8074}} & \textbf{\textcolor{myred}{0.7738}} & 0.6803 & \textbf{\textcolor{myred}{0.7588}} & \textbf{\textcolor{myred}{0.4855}} & \textbf{\textcolor{myred}{0.4280}} \\
    & & ICBQ (ours) & 0.7817 & 0.8009 & 0.7703 & 0.6756 & 0.7319 & 0.4667 & 0.4120 \\
    & \multirow{3}{*}{$W2g128$}
    & GPTQ only   & 0.6349 & 0.7258 & 0.5905 & 0.5580 & 0.5501 & 0.3114 & 0.3140 \\
    & & Seq.\ CBQ & \textbf{\textcolor{myred}{0.7269}} & \textbf{\textcolor{myred}{0.7889}} & 0.7134 & \textbf{\textcolor{myred}{0.6385}} & \textbf{\textcolor{myred}{0.6776}} & 0.3857 & 0.3880 \\
    & & ICBQ (ours) & 0.7131 & 0.7851 & \textbf{\textcolor{myred}{0.7137}} & 0.6322 & 0.6582 & \textbf{\textcolor{myred}{0.3908}} & \textbf{\textcolor{myred}{0.3960}} \\
    \midrule
    \multirow{6}{*}{Llama-$2$-$7$B}
    & \multirow{3}{*}{$W3$}
    & GPTQ only   & 0.7318 & 0.7671 & 0.6937 & 0.6377 & 0.6414 & 0.3848 & 0.3800 \\
    & & Seq.\ CBQ & 0.7394 & 0.7769 & 0.7176 & \textbf{\textcolor{myred}{0.6535}} & 0.6974 & 0.4061 & \textbf{\textcolor{myred}{0.4420}} \\
    & & ICBQ (ours) & \textbf{\textcolor{myred}{0.7483}} & \textbf{\textcolor{myred}{0.7797}} & \textbf{\textcolor{myred}{0.7242}} & 0.6472 & \textbf{\textcolor{myred}{0.7125}} & \textbf{\textcolor{myred}{0.4241}} & 0.4240 \\
    & \multirow{3}{*}{$W2g128$}
    & GPTQ only   & 0.5367 & 0.6937 & 0.5631 & 0.5643 & 0.4912 & 0.2824 & 0.3420 \\
    & & Seq.\ CBQ & 0.6618 & 0.7508 & 0.6435 & 0.6117 & 0.6128 & 0.3396 & 0.3920 \\
    & & ICBQ (ours) & \textbf{\textcolor{myred}{0.6691}} & \textbf{\textcolor{myred}{0.7601}} & \textbf{\textcolor{myred}{0.6565}} & \textbf{\textcolor{myred}{0.6148}} & \textbf{\textcolor{myred}{0.6178}} & \textbf{\textcolor{myred}{0.3575}} & \textbf{\textcolor{myred}{0.3980}} \\
    \midrule
    \multirow{6}{*}{Llama-$2$-$13$B}
    & \multirow{3}{*}{$W3$}
    & GPTQ only   & 0.7572 & 0.7807 & 0.7482 & 0.6930 & 0.7071 & 0.4369 & 0.4300 \\
    & & Seq.\ CBQ & \textbf{\textcolor{myred}{0.7627}} & 0.7894 & \textbf{\textcolor{myred}{0.7551}} & \textbf{\textcolor{myred}{0.7024}} & 0.7205 & 0.4514 & 0.4420 \\
    & & ICBQ (ours) & 0.7517 & \textbf{\textcolor{myred}{0.7943}} & 0.7509 & 0.6977 & \textbf{\textcolor{myred}{0.7260}} & \textbf{\textcolor{myred}{0.4522}} & \textbf{\textcolor{myred}{0.4540}} \\
    & \multirow{3}{*}{$W2g128$}
    & GPTQ only   & 0.6645 & 0.7301 & 0.6378 & 0.6188 & 0.5981 & 0.3311 & 0.3520 \\
    & & Seq.\ CBQ & 0.7474 & 0.7693 & 0.6856 & 0.6535 & 0.6511 & \textbf{\textcolor{myred}{0.4002}} & \textbf{\textcolor{myred}{0.4100}} \\
    & & ICBQ (ours) & \textbf{\textcolor{myred}{0.7517}} & \textbf{\textcolor{myred}{0.7720}} & \textbf{\textcolor{myred}{0.6913}} & \textbf{\textcolor{myred}{0.6638}} & \textbf{\textcolor{myred}{0.6540}} & 0.3899 & 0.4020 \\
    \midrule
    \multirow{6}{*}{Llama-$3.2$-$3$B}
    & \multirow{3}{*}{$W3$}
    & GPTQ only   & 0.6587 & 0.6415 & 0.5842 & \textbf{\textcolor{myred}{0.6054}} & 0.4280 & 0.2884 & 0.3200 \\
    & & Seq.\ CBQ & \textbf{\textcolor{myred}{0.7505}} & 0.7225 & 0.6434 & 0.5975 & 0.6023 & 0.3703 & \textbf{\textcolor{myred}{0.3500}} \\
    & & ICBQ (ours) & 0.6570 & \textbf{\textcolor{myred}{0.7356}} & \textbf{\textcolor{myred}{0.6570}} & 0.6022 & \textbf{\textcolor{myred}{0.6439}} & \textbf{\textcolor{myred}{0.4053}} & 0.3480 \\
    & \multirow{3}{*}{$W2g128$}
    & GPTQ only   & 0.4251 & 0.5528 & 0.3584 & 0.5036 & 0.3190 & 0.2270 & 0.2520 \\
    & & Seq.\ CBQ & 0.6456 & \textbf{\textcolor{myred}{0.6730}} & 0.5141 & 0.5501 & 0.4802 & 0.2986 & 0.3180 \\
    & & ICBQ (ours) & \textbf{\textcolor{myred}{0.6627}} & 0.6725 & \textbf{\textcolor{myred}{0.5420}} & \textbf{\textcolor{myred}{0.5777}} & \textbf{\textcolor{myred}{0.5021}} & \textbf{\textcolor{myred}{0.3038}} & \textbf{\textcolor{myred}{0.3260}} \\
    \midrule
    \multirow{6}{*}{Llama-$3$-$8$B}
    & \multirow{3}{*}{$W3$}
    & GPTQ only   & 0.6752 & 0.6289 & 0.6303 & 0.6212 & 0.4091 & 0.2577 & 0.3380 \\
    & & Seq.\ CBQ & 0.7771 & 0.7579 & 0.6442 & 0.6748 & 0.7134 & 0.4497 & 0.3900 \\
    & & ICBQ (ours) & \textbf{\textcolor{myred}{0.7829}} & \textbf{\textcolor{myred}{0.7622}} & \textbf{\textcolor{myred}{0.6925}} & \textbf{\textcolor{myred}{0.6890}} & \textbf{\textcolor{myred}{0.7218}} & \textbf{\textcolor{myred}{0.4633}} & \textbf{\textcolor{myred}{0.3980}} \\
    & \multirow{3}{*}{$W2g128$}
    & GPTQ only   & 0.4700 & 0.5631 & 0.3780 & 0.5138 & 0.3375 & 0.2184 & 0.2400 \\
    & & Seq.\ CBQ & 0.6700 & 0.7165 & 0.5959 & 0.6006 & 0.5720 & 0.3225 & 0.3460 \\
    & & ICBQ (ours) & \textbf{\textcolor{myred}{0.6939}} & \textbf{\textcolor{myred}{0.7399}} & \textbf{\textcolor{myred}{0.6241}} & \textbf{\textcolor{myred}{0.6062}} & \textbf{\textcolor{myred}{0.6153}} & \textbf{\textcolor{myred}{0.3592}} & \textbf{\textcolor{myred}{0.3640}} \\
    \midrule
    \multirow{6}{*}{Qwen$3$-$8$B}
    & \multirow{3}{*}{$W3$}
    & GPTQ only   & 0.7303 & 0.7416 & 0.6509 & 0.5833 & 0.6237 & 0.3737 & 0.3780 \\
    & & Seq.\ CBQ & 0.8086 & 0.7628 & 0.6937 & \textbf{\textcolor{myred}{0.6504}} & 0.7521 & 0.5000 & \textbf{\textcolor{myred}{0.4080}} \\
    & & ICBQ (ours) & \textbf{\textcolor{myred}{0.8168}} & \textbf{\textcolor{myred}{0.7661}} & \textbf{\textcolor{myred}{0.7092}} & \textbf{\textcolor{myred}{0.6504}} & \textbf{\textcolor{myred}{0.7765}} & \textbf{\textcolor{myred}{0.5290}} & 0.4040 \\
    & \multirow{3}{*}{$W2g128$}
    & GPTQ only   & 0.5196 & 0.6431 & 0.4550 & 0.5114 & 0.4335 & 0.2688 & 0.2720 \\
    & & Seq.\ CBQ & 0.7639 & 0.7454 & 0.6394 & 0.6377 & 0.7214 & \textbf{\textcolor{myred}{0.4659}} & 0.3860 \\
    & & ICBQ (ours) & \textbf{\textcolor{myred}{0.7722}} & \textbf{\textcolor{myred}{0.7514}} & \textbf{\textcolor{myred}{0.6567}} & \textbf{\textcolor{myred}{0.6393}} & \textbf{\textcolor{myred}{0.7277}} & 0.4548 & \textbf{\textcolor{myred}{0.4100}} \\
    \midrule
    \multirow{6}{*}{Qwen$3$-$14$B}
    & \multirow{3}{*}{$W3$}
    & GPTQ only   & 0.8382 & 0.7682 & 0.7291 & 0.6764 & 0.7597 & 0.4872 & 0.4440 \\
    & & Seq.\ CBQ & 0.8624 & 0.7905 & 0.7601 & 0.7072 & 0.7980 & 0.5478 & \textbf{\textcolor{myred}{0.4680}} \\
    & & ICBQ (ours) & \textbf{\textcolor{myred}{0.8630}} & \textbf{\textcolor{myred}{0.7922}} & \textbf{\textcolor{myred}{0.7621}} & \textbf{\textcolor{myred}{0.7080}} & \textbf{\textcolor{myred}{0.7992}} & \textbf{\textcolor{myred}{0.5563}} & 0.4520 \\
    & \multirow{3}{*}{$W2g128$}
    & GPTQ only   & 0.7122 & 0.7209 & 0.6027 & 0.5620 & 0.5892 & 0.3575 & 0.3240 \\
    & & Seq.\ CBQ & \textbf{\textcolor{myred}{0.8245}} & 0.7731 & 0.7080 & \textbf{\textcolor{myred}{0.6969}} & \textbf{\textcolor{myred}{0.7858}} & \textbf{\textcolor{myred}{0.5367}} & 0.4260 \\
    & & ICBQ (ours) & 0.8162 & \textbf{\textcolor{myred}{0.7764}} & \textbf{\textcolor{myred}{0.7158}} & \textbf{\textcolor{myred}{0.6969}} & 0.7753 & 0.5119 & \textbf{\textcolor{myred}{0.4400}} \\
    \bottomrule
  \end{tabular}
\end{table*}

\end{document}